\documentclass[11pt]{article}
\usepackage[preprint]{acl}
\usepackage{times}
\usepackage{latexsym}
\usepackage[T1]{fontenc}
\usepackage[utf8]{inputenc}
\usepackage{array}
\usepackage{multirow}
\usepackage{bm}
\usepackage{adjustbox}
\usepackage{amsmath,amssymb,amsthm}
\usepackage{booktabs}
\usepackage{graphicx}
\usepackage{xcolor}
\usepackage{algorithm}
\usepackage{algpseudocode}
\usepackage[most]{tcolorbox}
\usepackage{microtype}

\newtheorem{theorem}{Theorem}
\newtheorem{proposition}{Proposition}
\newtheorem{corollary}{Corollary}
\theoremstyle{definition}
\newtheorem{assumption}{Assumption}

\newcommand{\Lbit}{L_{\mathrm{bit}}}
\newcommand{\Cmin}{C_{\min}}
\newcommand{\Sw}{S_{\mathrm{w}}}
\newcommand{\supp}[1]{Appendix~#1}

\newtcolorbox{answerbox}{colback=gray!6,colframe=gray!55,boxrule=0.5pt,arc=1pt,left=4pt,right=4pt,top=3pt,bottom=3pt,fontupper=\small}

\newcommand{\XSfiftyCIgeminiTwoFive}{[0.79, 4.04]}
\newcommand{\XSfiftyCIgemmaFour}{[0.27, 0.37]}

\newcommand{\XSfiftyCIqwenThreevl}{[0.39, 0.73]}

\newcommand{\XSfiftyCIsonnetFourSix}{[0.18, 0.47]}
\newcommand{\XSfiftygemThreeEight}{0.24}
\newcommand{\XSfiftygeminiTwoFive}{1.28}
\newcommand{\XSfiftygemmaFour}{0.31}
\newcommand{\XSfiftygptFiveFour}{0.05}
\newcommand{\XSfiftygptFiveSix}{0.21}
\newcommand{\XSfiftyqwenThreevl}{0.54}
\newcommand{\XSfiftysonnetFiveFive}{0.28}
\newcommand{\XSfiftysonnetFourSix}{0.35}

\newcommand{\XagCovSheetsgptFiveSix}{0.75}

\newcommand{\XagExactCountgemThreeEight}{68}

\newcommand{\XagFgemThreeEightAll}{0.64}

\newcommand{\XagMinusDCIgemThreeEight}{[$-$0.40, $-$0.20]}

\newcommand{\XagMinusDgemThreeEight}{$-$0.30}

\newcommand{\XagTokIngemThreeEight}{376.3k}
\newcommand{\XagTokIngptFiveSix}{29.3k}
\newcommand{\XagTokInsonnetFiveFive}{11.3k}

\newcommand{\XagTokOutgptFiveSix}{2.9k}
\newcommand{\XagTokOutsonnetFiveFive}{1.4k}
\newcommand{\XagZoomsgemThreeEight}{17.8}
\newcommand{\XagZoomsgptFiveSix}{11.4}
\newcommand{\XagZoomssonnetFiveFive}{2.6}

\newcommand{\XareaFirstgptFiveSix}{1.00}

\newcommand{\XareaLastgptFiveSix}{0.91}

\newcommand{\XasCons}{16}
\newcommand{\XasMGptFiveSix}{+0.39}

\newcommand{\XasMin}{$-$0.03}
\newcommand{\XasModels}{8}
\newcommand{\XasN}{40}
\newcommand{\XasNewN}{10}
\newcommand{\XasNewSup}{2}

\newcommand{\XasSup}{24}

\newcommand{\XasThmMax}{+0.37}
\newcommand{\XasThmMin}{+0.06}
\newcommand{\XasThmSup}{4}

\newcommand{\XasWholeRecGptFiveSix}{0.49}
\newcommand{\XasWholeRecGptSixOne}{0.91}
\newcommand{\XasWholeRecOpusFiveFive}{0.76}
\newcommand{\XcapBestBits}{422}
\newcommand{\XcapBestFull}{438}

\newcommand{\XcapBestRecall}{0.98}
\newcommand{\XcapNlastgemThreeEight}{0.62}

\newcommand{\XcapNlastgemmaFour}{0.34}
\newcommand{\XcapNlastgptFiveFour}{0.85}
\newcommand{\XcapNlastgptFiveSix}{0.99}
\newcommand{\XcapNlastqwenThreevl}{0.22}
\newcommand{\XcapNlastsonnetFiveFive}{1.00}
\newcommand{\XcapNlastsonnetFourSix}{0.57}

\newcommand{\XcapNtwoqwenThreevl}{1.00}

\newcommand{\XcapSlopegptFiveSix}{$-$0.010}

\newcommand{\XcapSlopesonnetFiveFive}{$-$0.003}

\newcommand{\XclutSlopeCIgptFiveSix}{[$-$0.062, +0.035]}

\newcommand{\XclutSlopegptFiveSix}{$-$0.015}

\newcommand{\XctrFsccountgd}{0.90}

\newcommand{\XctrFscgecoTwo}{0.86}

\newcommand{\XfBKDAGgptFiveSix}{$-$0.01}

\newcommand{\XfBKDAOnegemThreeEight}{$-$0.34}

\newcommand{\XfBKDAOnegptFiveSix}{$-$0.22}

\newcommand{\XfBKDAOnesonnetFiveFive}{$-$0.16}
\newcommand{\XfBKDCIAGgptFiveSix}{[$-$0.09, +0.08]}

\newcommand{\XfBKDCIAOnegemThreeEight}{[$-$0.45, $-$0.24]}

\newcommand{\XfBKDCIAOnegptFiveSix}{[$-$0.34, $-$0.09]}

\newcommand{\XfBKDCIAOnesonnetFiveFive}{[$-$0.25, $-$0.07]}

\newcommand{\XfBKDCIDgptFiveSix}{[$-$0.10, +0.08]}

\newcommand{\XfBKDDgptFiveSix}{$-$0.01}

\newcommand{\XfBKSmedGptFiveSix}{1.5}

\newcommand{\XfBKSoneCIGptFiveSix}{[$-$0.40, $-$0.06]}
\newcommand{\XfBKSoneCISonnetFiveFive}{[$-$0.22, $-$0.04]}

\newcommand{\XfBKSoneGptFiveSix}{$-$0.23}

\newcommand{\XfBKSoneNGptFiveSix}{39}
\newcommand{\XfBKSoneNSonnetFiveFive}{50}
\newcommand{\XfBKSoneSonnetFiveFive}{$-$0.13}

\newcommand{\XfBenchAGgptFiveSix}{0.74}

\newcommand{\XfBenchAGsonnetFiveFive}{0.75}
\newcommand{\XfBenchAclaude}{0.36}
\newcommand{\XfBenchAgemThreeEight}{0.66}

\newcommand{\XfBenchAgemini}{0.14}
\newcommand{\XfBenchAgemmaFour}{0.22}
\newcommand{\XfBenchAgpt}{0.46}
\newcommand{\XfBenchAgptFiveSix}{0.60}
\newcommand{\XfBenchAqwenThreevl}{0.29}
\newcommand{\XfBenchAsonnetFiveFive}{0.71}
\newcommand{\XfBenchAtm}{0.58}
\newcommand{\XfBenchDclaude}{0.25}
\newcommand{\XfBenchDgemThreeEight}{0.74}
\newcommand{\XfBenchDgemini}{0.24}
\newcommand{\XfBenchDgemmaFour}{0.43}
\newcommand{\XfBenchDgpt}{0.36}
\newcommand{\XfBenchDgptFiveSix}{0.53}
\newcommand{\XfBenchDqwenThreevl}{0.26}
\newcommand{\XfBenchDsonnetFiveFive}{0.64}
\newcommand{\XfDmACIclaude}{[$-$0.15, $-$0.08]}
\newcommand{\XfDmACIgemThreeEight}{[+0.05, +0.11]}
\newcommand{\XfDmACIgemini}{[+0.07, +0.13]}
\newcommand{\XfDmACIgemmaFour}{[+0.17, +0.24]}
\newcommand{\XfDmACIgpt}{[$-$0.14, $-$0.06]}
\newcommand{\XfDmACIgptFiveSix}{[$-$0.10, $-$0.03]}

\newcommand{\XfDmACIsonnetFiveFive}{[$-$0.11, $-$0.04]}
\newcommand{\XfDmAclaude}{$-$0.11}
\newcommand{\XfDmAgemThreeEight}{+0.08}
\newcommand{\XfDmAgemini}{+0.10}
\newcommand{\XfDmAgemmaFour}{+0.21}
\newcommand{\XfDmAgpt}{$-$0.10}
\newcommand{\XfDmAgptFiveSix}{$-$0.07}

\newcommand{\XfDmAsonnetFiveFive}{$-$0.08}

\newcommand{\XfHEightgemThreeEight}{+0.42}

\newcommand{\XfHEightgptFiveSix}{+0.21}

\newcommand{\XfHEightsonnetFiveFive}{+0.24}

\newcommand{\XfSrFracDowngptFiveSix}{46}

\newcommand{\XfTOnebenchlargeDL}{+0.07}
\newcommand{\XfTOnebenchlargeDLCI}{[$-$0.01, +0.15]}

\newcommand{\XfTOnebenchsmallDL}{+0.14}
\newcommand{\XfTOnebenchsmallDLCI}{[+0.10, +0.19]}

\newcommand{\XfTOnesheetslargeDL}{+0.05}
\newcommand{\XfTOnesheetslargeDLCI}{[+0.002, +0.09]}

\newcommand{\XfTOnesheetslargeN}{43}
\newcommand{\XfTOnesheetslargeUD}{+0.05}

\newcommand{\XfTOnesheetssmallDL}{+0.17}
\newcommand{\XfTOnesheetssmallDLCI}{[+0.12, +0.21]}

\newcommand{\XfTOnesheetssmallN}{137}

\newcommand{\XfoCropFalse}{3}
\newcommand{\XfoFOneMax}{+0.32}
\newcommand{\XfoFOneMin}{+0.04}
\newcommand{\XfoFOneN}{8}
\newcommand{\XfoFOneSup}{4}
\newcommand{\XfoFewer}{7}
\newcommand{\XfoGemmaRate}{5.7}
\newcommand{\XfoGemmaRecall}{3}
\newcommand{\XfoGemmaRho}{49}
\newcommand{\XfoModels}{8}

\newcommand{\XfscODCIgptFiveSixsmallTU}{[+0.06, +0.22]}

\newcommand{\XfscODCIsonnetFiveFivesmallTU}{[+0.12, +0.27]}

\newcommand{\XfscODgptFiveSixsmallTU}{+0.14}

\newcommand{\XfscODsonnetFiveFivesmallTU}{+0.20}

\newcommand{\XfscParsesonnetFiveFive}{16}

\newcommand{\XlbFcountgd}{0.10}
\newcommand{\XlbFgecoTwo}{0.23}
\newcommand{\XlbFowlvTwo}{0.07}

\newcommand{\XlbOraclegecoTwo}{0.44}

\newcommand{\XlbRdcountgd}{0.11}
\newcommand{\XlbRdgecoTwo}{0.15}

\newcommand{\XlbTwofpccountgdimgOne}{0.19}

\newcommand{\XlbTwofpcgecoTwoimgOne}{0.34}

\newcommand{\XmosFgptFiveSixkEight}{0.53}
\newcommand{\XmosFgptFiveSixkFour}{0.74}
\newcommand{\XmosFgptFiveSixkOne}{0.76}
\newcommand{\XmosFgptFiveSixkTwo}{0.74}
\newcommand{\XmosFsonnetFiveFivekEight}{0.90}
\newcommand{\XmosFsonnetFiveFivekFour}{0.84}
\newcommand{\XmosFsonnetFiveFivekOne}{0.91}
\newcommand{\XmosFsonnetFiveFivekTwo}{0.93}
\newcommand{\XmosPgptFiveSixkEight}{0.72}
\newcommand{\XmosPgptFiveSixkFour}{0.86}
\newcommand{\XmosPgptFiveSixkOne}{0.80}
\newcommand{\XmosPgptFiveSixkTwo}{0.79}
\newcommand{\XmosPsonnetFiveFivekEight}{0.96}
\newcommand{\XmosPsonnetFiveFivekFour}{0.91}
\newcommand{\XmosPsonnetFiveFivekOne}{0.87}
\newcommand{\XmosPsonnetFiveFivekTwo}{0.92}
\newcommand{\XmosRegFsonnetFiveFivekEight}{0.22}

\newcommand{\XmosRgptFiveSixkEight}{0.47}
\newcommand{\XmosRgptFiveSixkFour}{0.68}
\newcommand{\XmosRgptFiveSixkOne}{0.77}
\newcommand{\XmosRgptFiveSixkTwo}{0.74}
\newcommand{\XmosRsonnetFiveFivekEight}{0.88}
\newcommand{\XmosRsonnetFiveFivekFour}{0.85}
\newcommand{\XmosRsonnetFiveFivekOne}{0.98}
\newcommand{\XmosRsonnetFiveFivekTwo}{0.94}

\newcommand{\XmosdRCIgptFiveSixkEight}{[$-$0.44, $-$0.16]}

\newcommand{\XmosdRCIsonnetFiveFivekEight}{[$-$0.19, $-$0.03]}
\newcommand{\XmosdRCIsonnetFiveFivekFour}{[$-$0.27, $-$0.04]}
\newcommand{\XmosdRCIsonnetFiveFivekTwo}{[$-$0.08, $-$0.003]}
\newcommand{\XmosdRgptFiveSixkEight}{$-$0.31}
\newcommand{\XmosdRgptFiveSixkFour}{$-$0.09}
\newcommand{\XmosdRgptFiveSixkTwo}{$-$0.03}
\newcommand{\XmosdRsonnetFiveFivekEight}{$-$0.10}
\newcommand{\XmosdRsonnetFiveFivekFour}{$-$0.13}
\newcommand{\XmosdRsonnetFiveFivekTwo}{$-$0.04}

\newcommand{\XnAGDRallgptFiveSix}{+0.09}
\newcommand{\XnAGDRallgptFiveSixCI}{[+0.04, +0.16]}

\newcommand{\XnAGDRallsonnetFiveFive}{+0.05}
\newcommand{\XnAGDRallsonnetFiveFiveCI}{[+0.01, +0.11]}

\newcommand{\XnDRDgemThreeEight}{+0.03}
\newcommand{\XnDRDgemThreeEightCI}{[$-$0.01, +0.07]}

\newcommand{\XnDRDgptFiveSix}{+0.15}
\newcommand{\XnDRDgptFiveSixCI}{[+0.07, +0.26]}

\newcommand{\XnDRDsonnetFiveFive}{$-$0.01}
\newcommand{\XnDRDsonnetFiveFiveCI}{[$-$0.02, +0.003]}

\newcommand{\XnFAGgemThreeEightSheets}{0.53}

\newcommand{\XnFAGgptFiveSixSheets}{0.97}

\newcommand{\XnFAGsonnetFiveFiveSheets}{0.95}

\newcommand{\XnFAOneRgemThreeEightSheets}{0.30}

\newcommand{\XnFAOneRgptFiveSixSheets}{0.89}

\newcommand{\XnFAOneRsonnetFiveFiveSheets}{0.73}

\newcommand{\XnFDRgemThreeEightSheets}{0.91}

\newcommand{\XnFDRgptFiveSixSheets}{0.91}

\newcommand{\XnFDRsonnetFiveFiveSheets}{0.90}

\newcommand{\XnFNATRgptFiveSixSheets}{0.33}

\newcommand{\XnHOnegptFiveSix}{+0.06}

\newcommand{\XnHThreegptFiveSixAGOVZ}{+0.01}
\newcommand{\XnHThreegptFiveSixAGOVZCI}{[$-$0.003, +0.03]}

\newcommand{\XnHThreegptFiveSixOVCAOneR}{$-$0.04}
\newcommand{\XnHThreegptFiveSixOVCAOneRCI}{[$-$0.15, +0.05]}

\newcommand{\XnHThreegptFiveSixOVZOVC}{+0.11}
\newcommand{\XnHThreegptFiveSixOVZOVCCI}{[+0.02, +0.26]}

\newcommand{\XnHThreesonnetFiveFiveAGOVZ}{+0.09}
\newcommand{\XnHThreesonnetFiveFiveAGOVZCI}{[+0.02, +0.19]}

\newcommand{\XnHThreesonnetFiveFiveOVCAOneR}{+0.07}
\newcommand{\XnHThreesonnetFiveFiveOVCAOneRCI}{[$-$0.11, +0.25]}

\newcommand{\XnHThreesonnetFiveFiveOVZOVC}{+0.06}
\newcommand{\XnHThreesonnetFiveFiveOVZOVCCI}{[$-$0.04, +0.19]}

\newcommand{\XnXNineHOnegptFiveSixHolm}{0.013}

\newcommand{\XovSmedSheets}{0.93}
\newcommand{\XplainFgemThreeEighttAll}{0.65}

\newcommand{\XplainFgptFiveSixrAll}{0.79}

\newcommand{\XqAgF}{0.21}

\newcommand{\XqAgMinusD}{$-$0.41}
\newcommand{\XqAgMinusDCI}{[$-$0.52, $-$0.30]}
\newcommand{\XqCountErrAg}{0.21}
\newcommand{\XqCountErrD}{0.25}
\newcommand{\XqD}{0.62}

\newcommand{\XrdDClDgemThreeEightAOnegemThreeEight}{[$-$0.004, +0.33]}

\newcommand{\XrdDClDgptFiveSixAOnegptFiveSix}{[$-$0.06, +0.09]}

\newcommand{\XrdDClDsonnetFiveFiveAOnesonnetFiveFive}{[$-$0.08, +0.02]}

\newcommand{\XrdDDgemThreeEightAOnegemThreeEight}{+0.23}

\newcommand{\XrdDDgptFiveSixAOnegptFiveSix}{0.00}

\newcommand{\XrdDDsonnetFiveFiveAOnesonnetFiveFive}{$-$0.05}

\newcommand{\XrdFAGgptFiveSix}{0.94}
\newcommand{\XrdFAGsonnetFiveFive}{0.89}
\newcommand{\XrdFAOnegemThreeEight}{0.68}
\newcommand{\XrdFAOnegptFiveSix}{0.80}
\newcommand{\XrdFAOnesonnetFiveFive}{0.84}
\newcommand{\XrdFDgemThreeEight}{0.91}
\newcommand{\XrdFDgptFiveSix}{0.80}
\newcommand{\XrdFDsonnetFiveFive}{0.79}
\newcommand{\XrdFHgemThreeEight}{0.74}
\newcommand{\XrdFHgptFiveSix}{0.79}
\newcommand{\XrdFHsonnetFiveFive}{0.80}
\newcommand{\XrdFTM}{0.38}
\newcommand{\XrdFTMoracle}{0.54}
\newcommand{\XrdSgemThreeEight}{0.92}
\newcommand{\XrdSgptFiveSix}{1.25}
\newcommand{\XrdSsonnetFiveFive}{1.43}
\newcommand{\XrobMax}{0.15}
\newcommand{\XrobMaxModel}{Qwen3-VL}
\newcommand{\XrobZero}{3}

\newcommand{\XrsnFgemThreeEighttAll}{0.45}

\newcommand{\XrsnFgptFiveSixrAll}{0.92}

\newcommand{\XrsnMinusDCIgemThreeEighttSheets}{[$-$0.76, $-$0.37]}

\newcommand{\XrsnMinusDCIgptFiveSixrSheets}{[$-$0.11, +0.27]}

\newcommand{\XrsnMinusDCIsonnetFiveFivetSheets}{[$-$0.35, $-$0.02]}

\newcommand{\XrsnMinusDgemThreeEighttSheets}{$-$0.58}

\newcommand{\XrsnMinusDgptFiveSixrSheets}{+0.10}

\newcommand{\XrsnMinusDsonnetFiveFivetSheets}{$-$0.18}

\newcommand{\XtOneDAIC}{5.4}
\newcommand{\XtOneHThree}{+0.58}
\newcommand{\XtOneHThreeCI}{[+0.46, +0.69]}

\newcommand{\XtTwoRCgptFiveFour}{0.62}

\newcommand{\XtTwoRCsonnetFourSix}{0.39}

\newcommand{\XtTwoRSgptFiveFour}{0.62}

\newcommand{\XtTwoRSsonnetFourSix}{0.52}

\newcommand{\XtcCIIngptFiveSix}{[+0.02, +0.13]}
\newcommand{\XtcCIInsonnetFiveFive}{[+0.06, +0.20]}
\newcommand{\XtcCIOutgptFiveSix}{[$-$0.20, $-$0.08]}
\newcommand{\XtcCIOutsonnetFiveFive}{[$-$0.16, $-$0.07]}
\newcommand{\XtcCons}{25}

\newcommand{\XtcIngptFiveSix}{+0.08}
\newcommand{\XtcInsonnetFiveFive}{+0.13}
\newcommand{\XtcInteriorPct}{84}
\newcommand{\XtcN}{89}
\newcommand{\XtcOutN}{21}
\newcommand{\XtcOutNeg}{12}
\newcommand{\XtcOutNegPoint}{19}
\newcommand{\XtcOutgptFiveSix}{$-$0.14}
\newcommand{\XtcOutsonnetFiveFive}{$-$0.11}
\newcommand{\XtcSup}{60}

\newcommand{\XtcThmMin}{$-$0.01}
\newcommand{\XtcThmN}{40}
\newcommand{\XtcThmSup}{33}
\newcommand{\XtcVendCIGemThreeEight}{[$-$0.09, +0.12]}
\newcommand{\XtcVendCIGptFiveFour}{[$-$0.28, $-$0.07]}
\newcommand{\XtcVendCIGptFiveSix}{[$-$0.34, $-$0.02]}
\newcommand{\XtcVendCIGptFiveSixR}{[$-$0.57, $-$0.23]}
\newcommand{\XtcVendGemThreeEight}{+0.01}
\newcommand{\XtcVendGptFiveFour}{$-$0.16}
\newcommand{\XtcVendGptFiveSix}{$-$0.18}
\newcommand{\XtcVendGptFiveSixR}{$-$0.39}

\newcommand{\XtcVendNGptFiveFour}{20}

\newcommand{\XtcViol}{4}
\newcommand{\XtcViolCI}{[$-$0.76, $-$0.44]}
\newcommand{\XtcViolD}{$-$0.60}
\newcommand{\XtcViolDpos}{0.60 (95\% interval 0.44 to 0.76)}

\newcommand{\XtileTokIngemThreeEight}{96.9k}
\newcommand{\XtileTokIngptFiveSix}{16.8k}
\newcommand{\XtileTokInsonnetFiveFive}{20.6k}

\newcommand{\XtileTokOutgptFiveSix}{0.5k}
\newcommand{\XtileTokOutsonnetFiveFive}{2.3k}
\newcommand{\XupDCIgemThreeEight}{[$-$0.01, +0.08]}

\newcommand{\XupDCIgemmaFour}{[$-$0.10, +0.03]}

\newcommand{\XupDgemThreeEight}{+0.03}

\newcommand{\XupDgemmaFour}{$-$0.03}

\newcommand{\XupEightgptFiveSix}{0.65 to 0.83}

\newcommand{\XupEightsonnetFiveFive}{0.65 to 0.62}

\newcommand{\FDsonnetFiveFiveAll}{0.92}

\newcommand{\FDgemThreeEightAll}{0.94}

\newcommand{\FATwogeminiTwoFiveAll}{0.07}

\newcommand{\FDgptFiveFourAll}{0.67}

\newcommand{\FHgptFiveFourAll}{0.83}

\newcommand{\FDsonnetFourSixAll}{0.61}

\newcommand{\FHsonnetFourSixAll}{0.86}

\newcommand{\FDgeminiTwoFiveAll}{0.41}

\newcommand{\FHgeminiTwoFiveAll}{0.85}

\newcommand{\FDgemmaFourAll}{0.75}

\newcommand{\FHgemmaFourAll}{0.86}

\newcommand{\FDqwenThreevlAll}{0.62}

\newcommand{\FHqwenThreevlAll}{0.91}

\newcommand{\FATwogptFiveFourAll}{0.62}

\newcommand{\FATwosonnetFourSixAll}{0.36}

\newcommand{\FCgeminiTwoFiveAll}{0.18}

\newcommand{\FATwogemmaFourAll}{0.31}

\newcommand{\FCgemmaFourAll}{0.68}

\newcommand{\FATwoqwenThreevlAll}{0.36}

\newcommand{\FBgptFiveFourAll}{0.51}

\newcommand{\FBgeminiTwoFiveAll}{0.18}

\newcommand{\FTMdevAll}{0.87}

\newcommand{\FTMlooAll}{0.89}

\newcommand{\CDMinusATwogptFiveFour}{+0.05}
\newcommand{\CDMinusATwogptFiveFourCI}{[$-$0.07, +0.17]}

\newcommand{\CHMinusTMdevgptFiveFour}{$-$0.04}

\newcommand{\CDMinusATwosonnetFourSix}{+0.25}
\newcommand{\CDMinusATwosonnetFourSixCI}{[+0.11, +0.38]}

\newcommand{\CDMinusATwogeminiTwoFive}{+0.34}
\newcommand{\CDMinusATwogeminiTwoFiveCI}{[+0.27, +0.42]}

\newcommand{\CDMinusATwogemmaFour}{+0.44}
\newcommand{\CDMinusATwogemmaFourCI}{[+0.34, +0.54]}

\newcommand{\CDMinusATwoqwenThreevl}{+0.26}
\newcommand{\CDMinusATwoqwenThreevlCI}{[+0.11, +0.39]}

\newcommand{\CHMinusTMdevqwenThreevl}{+0.04}

\newcommand{\CDMinusATwosheetsgptFiveSix}{+0.13}
\newcommand{\CDMinusATwosheetsgptFiveSixCI}{[$-$0.09, +0.37]}

\newcommand{\CDMinusATwosheetssonnetFiveFive}{+0.24}
\newcommand{\CDMinusATwosheetssonnetFiveFiveCI}{[+0.06, +0.43]}

\newcommand{\CDMinusATwosheetsgemThreeEight}{+0.36}
\newcommand{\CDMinusATwosheetsgemThreeEightCI}{[+0.18, +0.55]}

\newcommand{\CgenclaudeATwosheets}{+0.53}
\newcommand{\CgenclaudeATwosheetsCI}{[+0.37, +0.68]}
\newcommand{\CgengptFiveSixAOne}{+0.17}
\newcommand{\CgengptFiveSixAOneCI}{[+0.08, +0.27]}
\newcommand{\CgengptFiveFiveAOne}{$-$0.07}
\newcommand{\CgengptFiveFiveAOneCI}{[$-$0.16, +0.03]}

\newcommand{\CgengeminiATwosheets}{+0.52}
\newcommand{\CgengeminiATwosheetsCI}{[+0.33, +0.70]}

\newcommand{\CcapnativeMinusautosheets}{$-$0.28}
\newcommand{\CcapnativeMinusautosheetsCI}{[$-$0.49, $-$0.05]}

\newcommand{\CBMinusDgptFiveFour}{$-$0.17}
\newcommand{\CBMinusDgptFiveFourCI}{[$-$0.30, $-$0.03]}
\newcommand{\CBMinusDgeminiTwoFive}{$-$0.24}
\newcommand{\CBMinusDgeminiTwoFiveCI}{[$-$0.32, $-$0.14]}
\newcommand{\CDMinusCgeminiTwoFive}{+0.23}
\newcommand{\CDMinusCgeminiTwoFiveCI}{[+0.17, +0.30]}
\newcommand{\CDMinusCgemmaFour}{+0.08}
\newcommand{\CDMinusCgemmaFourCI}{[+0.02, +0.14]}

\newcommand{\RhogptFiveFour}{0.77}
\newcommand{\RhogptFiveFourCI}{[0.58, 0.88]}
\newcommand{\RhosonnetFourSix}{0.70}
\newcommand{\RhosonnetFourSixCI}{[0.43, 0.88]}
\newcommand{\RhogeminiTwoFive}{0.58}
\newcommand{\RhogeminiTwoFiveCI}{[0.32, 0.78]}
\newcommand{\RhogemmaFour}{0.76}
\newcommand{\RhogemmaFourCI}{[0.53, 0.90]}
\newcommand{\RhoqwenThreevl}{0.56}
\newcommand{\RhoqwenThreevlCI}{[0.29, 0.77]}
\newcommand{\RhosonnetFiveFive}{0.62}

\newcommand{\RhoopusFiveFive}{0.53}

\newcommand{\RhogptFiveFive}{0.56}

\newcommand{\RhogptFiveSix}{0.50}

\newcommand{\RhogemThreeEight}{0.33}
\newcommand{\RhogemThreeEightCI}{[$-$0.05, 0.66]}

\newcommand{\GainLowgptFiveFour}{+0.41}
\newcommand{\GainHighgptFiveFour}{$-$0.10}
\newcommand{\GainHighgptFiveFourCI}{[$-$0.24, +0.02]}

\newcommand{\HybBound}{0.98}

\newcommand{\CbudEightDef}{0.54}
\newcommand{\CbudEightLo}{0.44}
\newcommand{\CbudEightUh}{0.71}
\newcommand{\CcapCalls}{336}
\newcommand{\CcapMaxOut}{3{,}428}
\newcommand{\CcapNormal}{335}
\newcommand{\CfigCmin}{37{,}000}
\newcommand{\CfigH}{7{,}698}
\newcommand{\CfigM}{74}
\newcommand{\CfigMsent}{13}
\newcommand{\CfigPctToken}{40}
\newcommand{\CfigPropB}{35{,}000}
\newcommand{\CfigSwhole}{0.40}
\newcommand{\CfigTargets}{23}
\newcommand{\CfigTokens}{49{,}000}
\newcommand{\CfigViews}{20}
\newcommand{\CfigW}{11{,}236}
\newcommand{\CfscNoTruncGptFiveSix}{0.12}
\newcommand{\CfscNoTruncNGptFiveSix}{49}
\newcommand{\CfscNoTruncNSonnetFiveFive}{26}
\newcommand{\CfscNoTruncSonnetFiveFive}{0.17}
\newcommand{\CgemFourAns}{94}
\newcommand{\CgemFourFpMix}{0.35}
\newcommand{\CgemFourFpNorm}{0.35}
\newcommand{\CgemFourFpPx}{0.02}
\newcommand{\CgemFourLsMix}{0.53}
\newcommand{\CgemFourLsNorm}{0.38}
\newcommand{\CgemFourLsPx}{0.27}
\newcommand{\CgemFourOver}{14}
\newcommand{\CgenGptSixOneMax}{324}
\newcommand{\CgenSninetyGemFour}{0.29}
\newcommand{\CgenSninetyGemThreeEight}{0.76}
\newcommand{\CmapBandMax}{+0.14}
\newcommand{\CmapBandMin}{+0.08}
\newcommand{\CmapBelowMax}{0.46}
\newcommand{\CmapBelowMin}{0.10}

\newcommand{\CmapLowSMax}{+0.46}
\newcommand{\CmapLowSMin}{+0.26}
\newcommand{\CmapMidSMax}{+0.17}
\newcommand{\CmapMidSMin}{+0.07}

\newcommand{\CmapZoomCells}{six}
\newcommand{\CmapZoomMax}{0.22}
\newcommand{\CmapZoomMin}{0.01}
\newcommand{\CmapZoomSig}{four}
\newcommand{\CpackMax}{35}
\newcommand{\CpackMean}{8.6}
\newcommand{\CpackMedian}{4}
\newcommand{\CscaleCalls}{187{,}000}
\newcommand{\CscaleCtrl}{2{,}769}
\newcommand{\CscaleImages}{3{,}398}
\newcommand{\CscaleInst}{199{,}000}
\newcommand{\CscaleModels}{12}
\newcommand{\CsmallH}{1{,}301}
\newcommand{\CsmallM}{142}
\newcommand{\CsmallRecTile}{0.75}
\newcommand{\CsmallRecWhole}{1.00}
\newcommand{\CsmallSt}{1.40}
\newcommand{\CsmallSw}{2.86}
\newcommand{\CsmallW}{2{,}489}
\newcommand{\CtCells}{10}
\newcommand{\CtCminMax}{3.5}
\newcommand{\CtCminMin}{1.6}

\newcommand{\CtFpGemFourRecW}{0.50}

\newcommand{\CtFpGemFourdRecD}{$-$0.01}
\newcommand{\CtFpGemThreeEightRecR}{0.43}
\newcommand{\CtFpGemThreeEightRecW}{0.25}
\newcommand{\CtFpGemThreeEightRel}{73}

\newcommand{\CtFpGemThreeEightdRecD}{$-$0.13}

\newcommand{\CtFpQwenThreevlRecR}{0.12}
\newcommand{\CtFpQwenThreevlRecW}{0.05}
\newcommand{\CtFpQwenThreevlRel}{126}

\newcommand{\CtFpQwenThreevldRecD}{$-$0.11}
\newcommand{\CtLsGemFourRecR}{0.98}
\newcommand{\CtLsGemFourRecW}{0.85}
\newcommand{\CtLsGemFourRel}{15}

\newcommand{\CtLsGemThreeEightRecR}{0.90}
\newcommand{\CtLsGemThreeEightRecW}{0.62}
\newcommand{\CtLsGemThreeEightRel}{45}

\newcommand{\CtLsQwenThreevlRecR}{0.56}
\newcommand{\CtLsQwenThreevlRecW}{0.37}
\newcommand{\CtLsQwenThreevlRel}{51}

\newcommand{\CtOneViewGpt}{49}
\newcommand{\CtSigFOne}{six}
\newcommand{\CtSigPrec}{six}
\newcommand{\CtSigRec}{five}
\newcommand{\CtTokDMax}{1.06}
\newcommand{\CtTokDMin}{0.11}
\newcommand{\CtTokMax}{8.6}
\newcommand{\CtTokMin}{3.1}
\newcommand{\CthumbSideClaude}{46}
\newcommand{\CthumbSideGemini}{22}
\newcommand{\CthumbSideGpt}{60}
\newcommand{\CthumbSideQwen}{33}
\newcommand{\UcsMax}{0.28}
\newcommand{\UfTOnesheetslargeDL}{0.05}
\newcommand{\UfTOnesheetssmallDL}{0.17}
\newcommand{\UfscODgptFiveSixsmallTU}{0.14}
\newcommand{\UfscODsonnetFiveFivesmallTU}{0.20}
\newcommand{\UnHThreegptFiveSixOVZOVC}{0.11}
\newcommand{\UtOneHThree}{0.58}

\newcommand{\CstepRepMin}{1.04}
\newcommand{\CstepRepMax}{1.09}
\newcommand{\CstepAppMin}{2.04}
\newcommand{\CstepAppMax}{2.09}

\newcommand{\CstepGridApp}{5.00}

\newcommand{\CagLocGem}{64}
\newcommand{\CagLocGemTwo}{82}
\newcommand{\CagLocQwen}{21}
\newcommand{\CagLocQwenTwo}{46}
\newcommand{\CtileLocGem}{95}
\newcommand{\CtileLocQwen}{70}
\newcommand{\CagCntGem}{85}
\newcommand{\CtileCntGem}{76}
\newcommand{\CagCntQwen}{32}
\newcommand{\CtileCntQwen}{32}

\newcommand{\CagDistGem}{0.55}
\newcommand{\CagDistQwen}{2.08}

\providecommand{\lsv}[3]{\csname ls-#1-#2-#3\endcsname}
\expandafter\def\csname ls-gpt61-8-R\endcsname{1.00}
\expandafter\def\csname ls-gpt61-8-P\endcsname{0.98}
\expandafter\def\csname ls-gpt61-8-joint\endcsname{11}
\expandafter\def\csname ls-gpt61-8-empty\endcsname{0}
\expandafter\def\csname ls-gpt61-8-k\endcsname{12}
\expandafter\def\csname ls-gpt61-8-planned\endcsname{12}
\expandafter\def\csname ls-gpt61-8-under\endcsname{0}
\expandafter\def\csname ls-gpt61-8-over\endcsname{1}
\expandafter\def\csname ls-gpt61-16-R\endcsname{1.00}
\expandafter\def\csname ls-gpt61-16-P\endcsname{0.97}
\expandafter\def\csname ls-gpt61-16-joint\endcsname{11}
\expandafter\def\csname ls-gpt61-16-empty\endcsname{0}
\expandafter\def\csname ls-gpt61-16-k\endcsname{12}
\expandafter\def\csname ls-gpt61-16-planned\endcsname{12}
\expandafter\def\csname ls-gpt61-16-under\endcsname{0}
\expandafter\def\csname ls-gpt61-16-over\endcsname{1}
\expandafter\def\csname ls-gpt61-32-R\endcsname{1.00}
\expandafter\def\csname ls-gpt61-32-P\endcsname{0.97}
\expandafter\def\csname ls-gpt61-32-joint\endcsname{11}
\expandafter\def\csname ls-gpt61-32-empty\endcsname{0}
\expandafter\def\csname ls-gpt61-32-k\endcsname{12}
\expandafter\def\csname ls-gpt61-32-planned\endcsname{12}
\expandafter\def\csname ls-gpt61-32-under\endcsname{0}
\expandafter\def\csname ls-gpt61-32-over\endcsname{1}
\expandafter\def\csname ls-gpt61-64-R\endcsname{1.00}
\expandafter\def\csname ls-gpt61-64-P\endcsname{0.96}
\expandafter\def\csname ls-gpt61-64-joint\endcsname{11}
\expandafter\def\csname ls-gpt61-64-empty\endcsname{0}
\expandafter\def\csname ls-gpt61-64-k\endcsname{12}
\expandafter\def\csname ls-gpt61-64-planned\endcsname{12}
\expandafter\def\csname ls-gpt61-64-under\endcsname{0}
\expandafter\def\csname ls-gpt61-64-over\endcsname{5}
\expandafter\def\csname ls-gpt61-128-R\endcsname{1.00}
\expandafter\def\csname ls-gpt61-128-P\endcsname{0.94}
\expandafter\def\csname ls-gpt61-128-joint\endcsname{9}
\expandafter\def\csname ls-gpt61-128-empty\endcsname{0}
\expandafter\def\csname ls-gpt61-128-k\endcsname{12}
\expandafter\def\csname ls-gpt61-128-planned\endcsname{12}
\expandafter\def\csname ls-gpt61-128-under\endcsname{0}
\expandafter\def\csname ls-gpt61-128-over\endcsname{4}
\expandafter\def\csname ls-gpt61-256-R\endcsname{1.00}
\expandafter\def\csname ls-gpt61-256-P\endcsname{0.97}
\expandafter\def\csname ls-gpt61-256-joint\endcsname{11}
\expandafter\def\csname ls-gpt61-256-empty\endcsname{0}
\expandafter\def\csname ls-gpt61-256-k\endcsname{12}
\expandafter\def\csname ls-gpt61-256-planned\endcsname{12}
\expandafter\def\csname ls-gpt61-256-under\endcsname{1}
\expandafter\def\csname ls-gpt61-256-over\endcsname{2}
\expandafter\def\csname ls-gpt61-324-R\endcsname{1.00}
\expandafter\def\csname ls-gpt61-324-P\endcsname{0.97}
\expandafter\def\csname ls-gpt61-324-joint\endcsname{11}
\expandafter\def\csname ls-gpt61-324-empty\endcsname{0}
\expandafter\def\csname ls-gpt61-324-k\endcsname{12}
\expandafter\def\csname ls-gpt61-324-planned\endcsname{12}
\expandafter\def\csname ls-gpt61-324-under\endcsname{0}
\expandafter\def\csname ls-gpt61-324-over\endcsname{1}
\expandafter\def\csname ls-opus55-8-R\endcsname{0.97}
\expandafter\def\csname ls-opus55-8-P\endcsname{0.90}
\expandafter\def\csname ls-opus55-8-joint\endcsname{7}
\expandafter\def\csname ls-opus55-8-empty\endcsname{0}
\expandafter\def\csname ls-opus55-8-k\endcsname{12}
\expandafter\def\csname ls-opus55-8-planned\endcsname{12}
\expandafter\def\csname ls-opus55-8-under\endcsname{0}
\expandafter\def\csname ls-opus55-8-over\endcsname{4}
\expandafter\def\csname ls-opus55-16-R\endcsname{1.00}
\expandafter\def\csname ls-opus55-16-P\endcsname{0.90}
\expandafter\def\csname ls-opus55-16-joint\endcsname{6}
\expandafter\def\csname ls-opus55-16-empty\endcsname{0}
\expandafter\def\csname ls-opus55-16-k\endcsname{12}
\expandafter\def\csname ls-opus55-16-planned\endcsname{12}
\expandafter\def\csname ls-opus55-16-under\endcsname{0}
\expandafter\def\csname ls-opus55-16-over\endcsname{11}
\expandafter\def\csname ls-opus55-32-R\endcsname{1.00}
\expandafter\def\csname ls-opus55-32-P\endcsname{0.95}
\expandafter\def\csname ls-opus55-32-joint\endcsname{10}
\expandafter\def\csname ls-opus55-32-empty\endcsname{0}
\expandafter\def\csname ls-opus55-32-k\endcsname{12}
\expandafter\def\csname ls-opus55-32-planned\endcsname{12}
\expandafter\def\csname ls-opus55-32-under\endcsname{0}
\expandafter\def\csname ls-opus55-32-over\endcsname{5}
\expandafter\def\csname ls-opus55-64-R\endcsname{1.00}
\expandafter\def\csname ls-opus55-64-P\endcsname{0.96}
\expandafter\def\csname ls-opus55-64-joint\endcsname{10}
\expandafter\def\csname ls-opus55-64-empty\endcsname{0}
\expandafter\def\csname ls-opus55-64-k\endcsname{12}
\expandafter\def\csname ls-opus55-64-planned\endcsname{12}
\expandafter\def\csname ls-opus55-64-under\endcsname{0}
\expandafter\def\csname ls-opus55-64-over\endcsname{3}
\expandafter\def\csname ls-opus55-128-R\endcsname{1.00}
\expandafter\def\csname ls-opus55-128-P\endcsname{0.96}
\expandafter\def\csname ls-opus55-128-joint\endcsname{11}
\expandafter\def\csname ls-opus55-128-empty\endcsname{0}
\expandafter\def\csname ls-opus55-128-k\endcsname{12}
\expandafter\def\csname ls-opus55-128-planned\endcsname{12}
\expandafter\def\csname ls-opus55-128-under\endcsname{0}
\expandafter\def\csname ls-opus55-128-over\endcsname{4}
\expandafter\def\csname ls-opus55-196-R\endcsname{0.83}
\expandafter\def\csname ls-opus55-196-P\endcsname{0.97}
\expandafter\def\csname ls-opus55-196-joint\endcsname{9}
\expandafter\def\csname ls-opus55-196-empty\endcsname{2}
\expandafter\def\csname ls-opus55-196-k\endcsname{12}
\expandafter\def\csname ls-opus55-196-planned\endcsname{12}
\expandafter\def\csname ls-opus55-196-under\endcsname{2}
\expandafter\def\csname ls-opus55-196-over\endcsname{2}
\expandafter\def\csname ls-gem4-8-R\endcsname{1.00}
\expandafter\def\csname ls-gem4-8-P\endcsname{0.98}
\expandafter\def\csname ls-gem4-8-joint\endcsname{11}
\expandafter\def\csname ls-gem4-8-empty\endcsname{0}
\expandafter\def\csname ls-gem4-8-k\endcsname{12}
\expandafter\def\csname ls-gem4-8-planned\endcsname{12}
\expandafter\def\csname ls-gem4-8-under\endcsname{0}
\expandafter\def\csname ls-gem4-8-over\endcsname{1}
\expandafter\def\csname ls-gem4-16-R\endcsname{1.00}
\expandafter\def\csname ls-gem4-16-P\endcsname{0.97}
\expandafter\def\csname ls-gem4-16-joint\endcsname{11}
\expandafter\def\csname ls-gem4-16-empty\endcsname{0}
\expandafter\def\csname ls-gem4-16-k\endcsname{12}
\expandafter\def\csname ls-gem4-16-planned\endcsname{12}
\expandafter\def\csname ls-gem4-16-under\endcsname{0}
\expandafter\def\csname ls-gem4-16-over\endcsname{1}
\expandafter\def\csname ls-gem4-32-R\endcsname{1.00}
\expandafter\def\csname ls-gem4-32-P\endcsname{0.97}
\expandafter\def\csname ls-gem4-32-joint\endcsname{11}
\expandafter\def\csname ls-gem4-32-empty\endcsname{0}
\expandafter\def\csname ls-gem4-32-k\endcsname{12}
\expandafter\def\csname ls-gem4-32-planned\endcsname{12}
\expandafter\def\csname ls-gem4-32-under\endcsname{0}
\expandafter\def\csname ls-gem4-32-over\endcsname{1}
\expandafter\def\csname ls-gem4-42-R\endcsname{0.99}
\expandafter\def\csname ls-gem4-42-P\endcsname{0.97}
\expandafter\def\csname ls-gem4-42-joint\endcsname{11}
\expandafter\def\csname ls-gem4-42-empty\endcsname{0}
\expandafter\def\csname ls-gem4-42-k\endcsname{12}
\expandafter\def\csname ls-gem4-42-planned\endcsname{12}
\expandafter\def\csname ls-gem4-42-under\endcsname{5}
\expandafter\def\csname ls-gem4-42-over\endcsname{1}
\expandafter\def\csname ls-gpt56-8-R\endcsname{0.96}
\expandafter\def\csname ls-gpt56-8-P\endcsname{0.95}
\expandafter\def\csname ls-gpt56-8-joint\endcsname{6}
\expandafter\def\csname ls-gpt56-8-empty\endcsname{0}
\expandafter\def\csname ls-gpt56-8-k\endcsname{12}
\expandafter\def\csname ls-gpt56-8-planned\endcsname{12}
\expandafter\def\csname ls-gpt56-8-under\endcsname{2}
\expandafter\def\csname ls-gpt56-8-over\endcsname{4}
\expandafter\def\csname ls-gpt56-16-R\endcsname{0.99}
\expandafter\def\csname ls-gpt56-16-P\endcsname{0.96}
\expandafter\def\csname ls-gpt56-16-joint\endcsname{11}
\expandafter\def\csname ls-gpt56-16-empty\endcsname{0}
\expandafter\def\csname ls-gpt56-16-k\endcsname{12}
\expandafter\def\csname ls-gpt56-16-planned\endcsname{12}
\expandafter\def\csname ls-gpt56-16-under\endcsname{2}
\expandafter\def\csname ls-gpt56-16-over\endcsname{4}
\expandafter\def\csname ls-gpt56-32-R\endcsname{1.00}
\expandafter\def\csname ls-gpt56-32-P\endcsname{0.95}
\expandafter\def\csname ls-gpt56-32-joint\endcsname{9}
\expandafter\def\csname ls-gpt56-32-empty\endcsname{0}
\expandafter\def\csname ls-gpt56-32-k\endcsname{12}
\expandafter\def\csname ls-gpt56-32-planned\endcsname{12}
\expandafter\def\csname ls-gpt56-32-under\endcsname{1}
\expandafter\def\csname ls-gpt56-32-over\endcsname{3}
\expandafter\def\csname ls-gpt56-64-R\endcsname{0.97}
\expandafter\def\csname ls-gpt56-64-P\endcsname{0.97}
\expandafter\def\csname ls-gpt56-64-joint\endcsname{10}
\expandafter\def\csname ls-gpt56-64-empty\endcsname{0}
\expandafter\def\csname ls-gpt56-64-k\endcsname{12}
\expandafter\def\csname ls-gpt56-64-planned\endcsname{12}
\expandafter\def\csname ls-gpt56-64-under\endcsname{1}
\expandafter\def\csname ls-gpt56-64-over\endcsname{3}
\expandafter\def\csname ls-gpt56-128-R\endcsname{0.91}
\expandafter\def\csname ls-gpt56-128-P\endcsname{0.95}
\expandafter\def\csname ls-gpt56-128-joint\endcsname{8}
\expandafter\def\csname ls-gpt56-128-empty\endcsname{0}
\expandafter\def\csname ls-gpt56-128-k\endcsname{12}
\expandafter\def\csname ls-gpt56-128-planned\endcsname{12}
\expandafter\def\csname ls-gpt56-128-under\endcsname{9}
\expandafter\def\csname ls-gpt56-128-over\endcsname{3}
\expandafter\def\csname ls-gpt56-256-R\endcsname{0.34}
\expandafter\def\csname ls-gpt56-256-P\endcsname{0.76}
\expandafter\def\csname ls-gpt56-256-joint\endcsname{0}
\expandafter\def\csname ls-gpt56-256-empty\endcsname{5}
\expandafter\def\csname ls-gpt56-256-k\endcsname{12}
\expandafter\def\csname ls-gpt56-256-planned\endcsname{12}
\expandafter\def\csname ls-gpt56-256-under\endcsname{10}
\expandafter\def\csname ls-gpt56-256-over\endcsname{1}
\expandafter\def\csname ls-gpt56-324-R\endcsname{0.29}
\expandafter\def\csname ls-gpt56-324-P\endcsname{0.65}
\expandafter\def\csname ls-gpt56-324-joint\endcsname{0}
\expandafter\def\csname ls-gpt56-324-empty\endcsname{6}
\expandafter\def\csname ls-gpt56-324-k\endcsname{12}
\expandafter\def\csname ls-gpt56-324-planned\endcsname{12}
\expandafter\def\csname ls-gpt56-324-under\endcsname{9}
\expandafter\def\csname ls-gpt56-324-over\endcsname{3}
\expandafter\def\csname ls-gem38-8-R\endcsname{0.96}
\expandafter\def\csname ls-gem38-8-P\endcsname{0.98}
\expandafter\def\csname ls-gem38-8-joint\endcsname{7}
\expandafter\def\csname ls-gem38-8-empty\endcsname{0}
\expandafter\def\csname ls-gem38-8-k\endcsname{12}
\expandafter\def\csname ls-gem38-8-planned\endcsname{12}
\expandafter\def\csname ls-gem38-8-under\endcsname{4}
\expandafter\def\csname ls-gem38-8-over\endcsname{1}
\expandafter\def\csname ls-gem38-16-R\endcsname{0.93}
\expandafter\def\csname ls-gem38-16-P\endcsname{0.97}
\expandafter\def\csname ls-gem38-16-joint\endcsname{8}
\expandafter\def\csname ls-gem38-16-empty\endcsname{0}
\expandafter\def\csname ls-gem38-16-k\endcsname{12}
\expandafter\def\csname ls-gem38-16-planned\endcsname{12}
\expandafter\def\csname ls-gem38-16-under\endcsname{3}
\expandafter\def\csname ls-gem38-16-over\endcsname{0}
\expandafter\def\csname ls-gem38-32-R\endcsname{0.77}
\expandafter\def\csname ls-gem38-32-P\endcsname{0.99}
\expandafter\def\csname ls-gem38-32-joint\endcsname{7}
\expandafter\def\csname ls-gem38-32-empty\endcsname{1}
\expandafter\def\csname ls-gem38-32-k\endcsname{12}
\expandafter\def\csname ls-gem38-32-planned\endcsname{12}
\expandafter\def\csname ls-gem38-32-under\endcsname{5}
\expandafter\def\csname ls-gem38-32-over\endcsname{0}
\expandafter\def\csname ls-gem38-42-R\endcsname{0.67}
\expandafter\def\csname ls-gem38-42-P\endcsname{0.97}
\expandafter\def\csname ls-gem38-42-joint\endcsname{1}
\expandafter\def\csname ls-gem38-42-empty\endcsname{0}
\expandafter\def\csname ls-gem38-42-k\endcsname{12}
\expandafter\def\csname ls-gem38-42-planned\endcsname{12}
\expandafter\def\csname ls-gem38-42-under\endcsname{11}
\expandafter\def\csname ls-gem38-42-over\endcsname{0}
\expandafter\def\csname ls-qwen3vl-8-R\endcsname{0.89}
\expandafter\def\csname ls-qwen3vl-8-P\endcsname{0.87}
\expandafter\def\csname ls-qwen3vl-8-joint\endcsname{4}
\expandafter\def\csname ls-qwen3vl-8-empty\endcsname{0}
\expandafter\def\csname ls-qwen3vl-8-k\endcsname{12}
\expandafter\def\csname ls-qwen3vl-8-planned\endcsname{12}
\expandafter\def\csname ls-qwen3vl-8-under\endcsname{3}
\expandafter\def\csname ls-qwen3vl-8-over\endcsname{3}
\expandafter\def\csname ls-qwen3vl-16-R\endcsname{0.88}
\expandafter\def\csname ls-qwen3vl-16-P\endcsname{0.97}
\expandafter\def\csname ls-qwen3vl-16-joint\endcsname{4}
\expandafter\def\csname ls-qwen3vl-16-empty\endcsname{0}
\expandafter\def\csname ls-qwen3vl-16-k\endcsname{12}
\expandafter\def\csname ls-qwen3vl-16-planned\endcsname{12}
\expandafter\def\csname ls-qwen3vl-16-under\endcsname{10}
\expandafter\def\csname ls-qwen3vl-16-over\endcsname{0}
\expandafter\def\csname ls-qwen3vl-32-R\endcsname{0.77}
\expandafter\def\csname ls-qwen3vl-32-P\endcsname{0.90}
\expandafter\def\csname ls-qwen3vl-32-joint\endcsname{3}
\expandafter\def\csname ls-qwen3vl-32-empty\endcsname{0}
\expandafter\def\csname ls-qwen3vl-32-k\endcsname{12}
\expandafter\def\csname ls-qwen3vl-32-planned\endcsname{12}
\expandafter\def\csname ls-qwen3vl-32-under\endcsname{9}
\expandafter\def\csname ls-qwen3vl-32-over\endcsname{1}
\expandafter\def\csname ls-qwen3vl-64-R\endcsname{0.39}
\expandafter\def\csname ls-qwen3vl-64-P\endcsname{0.82}
\expandafter\def\csname ls-qwen3vl-64-joint\endcsname{0}
\expandafter\def\csname ls-qwen3vl-64-empty\endcsname{3}
\expandafter\def\csname ls-qwen3vl-64-k\endcsname{12}
\expandafter\def\csname ls-qwen3vl-64-planned\endcsname{12}
\expandafter\def\csname ls-qwen3vl-64-under\endcsname{11}
\expandafter\def\csname ls-qwen3vl-64-over\endcsname{1}

\newcommand{\CgainN}{10}
\newcommand{\CgainRhoSw}{$-$0.64}
\newcommand{\CgainPSw}{0.04}

\newcommand{\CcostPairs}{470}

\newcommand{\CreachN}{34}
\newcommand{\CreachGemThreeEight}{19}
\newcommand{\CreachGemFour}{32}

\usepackage[capitalize,nameinlink]{cleveref}
\theoremstyle{plain}
\newtheorem{lemma}{Lemma}
\newcommand{\restatename}{}
\newtheorem*{restatedthm}{\restatename}
\newenvironment{restate}[3]{\renewcommand{\restatename}{#1~\ref*{#2}}\begin{restatedthm}[#3]}{\end{restatedthm}}
\theoremstyle{definition}
\newtheorem*{restateddef}{\restatename}
\newenvironment{restatedef}[3]{\renewcommand{\restatename}{#1~\ref*{#2}}\begin{restateddef}[#3]}{\end{restateddef}}
\theoremstyle{plain}
\newcommand{\main}[1]{\ref*{#1}}
\newcommand{\cw}{c_{\mathrm{w}}}
\newcommand{\Jw}{J_{c_{\mathrm{w}}}}


\crefname{assumption}{Assumption}{Assumptions}
\crefname{lemma}{Lemma}{Lemmas}
\hypersetup{pdftitle={Why VLMs Miss Small Objects, and When Zooming In Is Safe},pdfauthor={Junzhe Shi, Yuan Gan, Shida Jiang}}

\title{Why VLMs Miss Small Objects,\\and When Zooming In Is Safe}

\author{Junzhe Shi\thanks{Corresponding author.} \\
  Systems Engineering \\
  University of California, Berkeley \\
  Berkeley, CA, USA \\
  \texttt{junzhe@berkeley.edu} \\\And
  Yuan Gan \\
  Department of AI \\
  Quotr AI \\
  Emeryville, CA, USA \\
  \texttt{yuan@quotr.ai} \\\And
  Shida Jiang \\
  Systems Engineering \\
  University of California, Berkeley \\
  Berkeley, CA, USA \\
  \texttt{shida\_jiang@berkeley.edu}}

\begin{document}
\maketitle
\begin{abstract}
Screenshots, documents, engineering drawings, shelves and maps hold many small objects of kinds that change from task to task, and vision-language models (VLMs) promise to find them from an instruction and an example. Yet they see each image through a fixed token budget and miss small objects, so every user, and every agent that calls a model, must choose whether to send an image whole, raise its resolution, split it, or zoom. We show that VLMs miss small objects for two reasons a caller can estimate before any call: an object gets too few visual tokens ($S$), and a call must search and report too much ($L$). Raising $S$ by a factor $a$ multiplies the tokens needed to see an image by $a^2$, while splitting at the same scale lowers $L$ at a cost at most linear in the number of views. Under two monotonicity assumptions, splitting cannot lower expected recall if no view zooms out and views overlap by one object, and neither part can be dropped; every such split pays a minimum token cost, and a simple rule pays it up to an explicit factor. Better models need less $S$, but an image can be arbitrarily large while a call's budget is fixed, so splitting does not become obsolete, for zoom agents either. In about \CscaleCalls{} requests to \CscaleModels{} VLMs, up to GPT-6.1, Claude Opus 5.5 and Gemini 4, on controlled images \XasSup{} of \XasN{} predicted orderings hold significantly and none is violated; splits that never zoom out raised recall on average over models wherever $S$ was below 1.5, and splits that zoom out lowered it in \XtcOutNegPoint{} of \XtcOutN{} model and data settings. The rule raised recall significantly in \CtSigRec{} of ten settings, by up to \UcsMax{} (from \CtLsGemThreeEightRecW{} to \CtLsGemThreeEightRecR), and precision and F1 in \CtSigPrec, at a token cost known before any call.
\end{abstract}

\begin{figure*}[t]
\centering
\includegraphics[width=\textwidth]{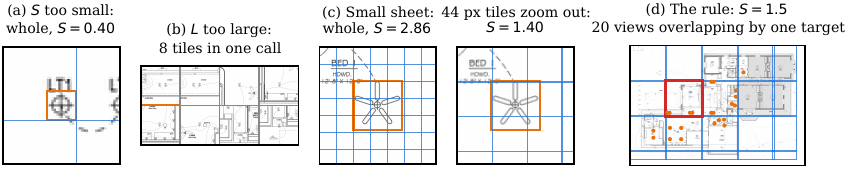}
\caption{The two quantities and the fix. $S$ is how many visual tokens an object spans per side; $L$ is how much one call must search and report. (a) An $\CfigW{} \times \CfigH$ px sheet sent whole within 2{,}500 tokens of 32 px: one of its \CfigTargets{} targets spans $S = \CfigSwhole$ tokens (blue: token grid; orange: target box). (b) Packing eight same-scale tiles into one call keeps $S$ and raises $L$; for GPT-5.6, recall on the 14 full sheets falls from \XmosRgptFiveSixkOne{} to \XmosRgptFiveSixkEight, although a packed call must report a median of only \CpackMedian{} targets. (c) A pixel-defined recipe (44 px symbols, 1{,}024 px tiles) zooms in on large sheets but out on a small one, from $S = \CsmallSw$ to \CsmallSt; GPT-5.4's recall falls from \CsmallRecWhole{} to \CsmallRecTile. (d) The rule of \cref{sec:safe}: \CfigViews{} views at $S = 1.5$ that overlap by one target, \CfigTokens{} tokens in total, against the minimum of \cref{thm:exact}(c) for any split that protects the sheet at this $S$ (blue: view borders; red: one view; orange dots: targets). (a), (c), (d): 32 px tokens, 2{,}500 per image; (a) and (c) show the pixels the model receives. Recalls in (b) and (c) are measured.}
\label{fig:intro}
\end{figure*}

\section{Introduction}
\label{sec:intro}

Many images people work with are large and dense: a high-resolution screenshot holds dozens of icons and controls, a document page hundreds of fields and marks, an engineering drawing hundreds of symbols, and there are shelf photographs, maps, tissue slides and satellite scenes. The objects are small and many, and their kinds change from task to task, so training a detector for each rarely pays. Vision-language models (VLMs) turn this into one instruction and one example, and they are becoming the perception layer of computer-use, document and design agents~\cite{su2025thinkingwithimages,zheng2026deepeyes}. But a VLM sees an image through a fixed token budget and misses small objects. Every user, and every agent that calls a VLM, must decide whether to send the image whole, raise its resolution, split it, or zoom, and the choices differ several-fold in accuracy and cost.

\paragraph{Our answer, in brief.}
VLMs miss small objects for two reasons that a caller can estimate before any call: an object gets too few visual tokens, and a call must search and report too much. Splitting the image cannot lower expected recall when no view zooms out and views overlap by one object, and neither part of this condition can be dropped. The guarantee assumes the model is monotone, which held on every standard input we tested (\cref{sec:safe}) and failed only when OpenAI models were sent larger images than our standard input (images we resize ourselves so that the vendor need not, \cref{sec:protocol}). In short: \emph{never zoom out, and overlap views by one object}. Protecting every target at a given $S$ has a minimum token price that no split avoids, and a simple rule pays it up to an explicit factor, so a caller can decide before any call how to send an image and what it will cost, for future models too. On the newest models, the rule raises recall most where the whole image gives targets the fewest tokens (\cref{sec:practice}).

\paragraph{Two quantities and their price.}
Each call gives the model an image and an example of the target and asks for the centre of every instance; a point counts if it lies within half a target of an instance not already matched. $S$ says whether the model can see the target (the object to find): the target's side in the image the model receives, in units of one visual token ($S = 1.5$: one and a half tokens wide). It follows from public interface rules and the image size, with no label and no call. An $\CfigW{} \times \CfigH$ px drawing sent whole within 2{,}500 tokens of 32 px shrinks its \CfigM{} px symbols to about \CfigMsent{} px, \CfigPctToken\% of a token ($S = \CfigSwhole$, \cref{fig:intro}a). $L$ says how much one call must do: \emph{search} a view, with its targets, look-alikes and lines, and \emph{report} one accurate position per target. If $S$ is the size of the print, $L$ is how many pages one must proofread at once; clear print does not help when too many pages arrive together.
The two fixes are priced very differently. Seeing a whole image at $S$ takes at least its area over the target's area times $S^2$ tokens, and more with the margins that protect every target, so raising $S$ by a factor $a$ raises this minimum $a^2$-fold: the drawing above needs at least about \CfigCmin{} tokens at $S = 1.5$, against 2{,}500 sent whole. Splitting into $k$ views at the same scale cuts each view's area to about $1/k$ at a cost at most linear in $k$: nearly free for models billed by resolution (GPT, Claude, Qwen), and $k$ images for fixed-grid models (Gemini), which also raises their $S$. A high-resolution mode raises $S$ but keeps $L$ at the whole image; a zoom agent sets both region by region.

\paragraph{Three surprises that the two quantities explain.}
\emph{More pixels can hurt}: sent through GPT-5.6's high-resolution mode with reasoning, the 14 full drawing sheets give each target more tokens, yet F1 falls from \XnFAOneRgptFiveSixSheets{} at 1{,}600 px to \XnFNATRgptFiveSixSheets; on this input path (the route by which an image reaches the model) the model stops satisfying ``more tokens do not hurt''. \emph{Splitting can zoom out}: recipes that rescale an image to a pixel size before tiling ignore how the vendor rescales the whole image. A 44 px recipe (44 px symbols, 1{,}024 px tiles) raises $S$ on the large drawing from \CfigSwhole{} to about 1.4 but lowers it on a $\CsmallW{} \times \CsmallH$ px drawing from \CsmallSw{} to \CsmallSt, and on this sheet GPT-5.4's recall falls from \CsmallRecWhole{} to \CsmallRecTile{} (\cref{fig:intro}c); where symbols were already large, it lowered recall in \XtcOutNegPoint{} of \XtcOutN{} model and data settings, significantly in \XtcOutNeg. \emph{Seeing has a price}: the bound above holds for every split that sees the whole image at $S$, whoever splits. Such effects are found by trial, and answers tuned for one model expire with the next; theorems can give what trial cannot: conditions checked before a call, guarantees for every model with two testable properties, a cost no protected split can beat, and conditions whose parts are each necessary, which explains both failures. We test them in about \CscaleCalls{} requests to \CscaleModels{} models, including the newest generation, on \CscaleImages{} images with about \CscaleInst{} instances.

\begin{answerbox}
\textbf{Three answers.}
\textbf{(1) Why VLMs miss small objects:} too few tokens per object ($S$) and too much to search and report in one call ($L$). The interface caps $S$ and prices seeing, the same for every model (\cref{sec:why}).
\textbf{(2) What better models change:} they need less $S$ and handle more per call, so fewer images need splitting. But an image can be arbitrarily large while a call's budget is fixed, so seeing a whole image keeps a minimum cost, for zoom agents too, and splitting does not become obsolete (\cref{sec:why}).
\textbf{(3) When splitting is safe and what it costs:} no zoom-out and one object of overlap, with monotonicity, suffice, and neither part can be dropped; protection has a minimum cost and a rule pays it up to an explicit factor; the guarantee degrades gracefully. What changes is who splits: the caller, a zoom tool or the model (\cref{sec:safe,sec:practice}).
\end{answerbox}

\begin{table}[t]
\centering
\small
\setlength{\tabcolsep}{2pt}
\begin{adjustbox}{max width=\linewidth}\begin{tabular}{lccc}
\toprule
Model & Budget $N$ & Targets & Targets \\
 & per view & per side & per call \\
\midrule
GPT-6.1 (GPT-5.6) & 8{,}192 & about \CthumbSideGpt{} & 300+ (about 130) \\
Claude Opus 5.5 & 4{,}784 & about \CthumbSideClaude{} & about 130 \\
Gemini 4 (Gemini 3.8) & about 1{,}090 & about \CthumbSideGemini{} & 40+ (about 16) \\
Qwen3-VL & about 2{,}500 & about \CthumbSideQwen{} & about 16 \\
\bottomrule
\end{tabular}\end{adjustbox}
\caption{A rule of thumb at $S = 1.5$: values at pooled recall about 0.9 on 12 controlled images per load, for one image content (one target type among three look-alike types), with the reasoning settings of \cref{sec:protocol}; in parentheses, the previous model of the vendor; ``+'': a full view showed no drop. Budget $N$ in tokens; targets per side: about $\sqrt{N}/1.5$. Details in \supp{L}.}
\label{tab:thumb}
\end{table}

\paragraph{What a user can take away.}
The rule protects at $S^\ast = 1.5$, above every model's half-recall point (\cref{sec:why}); the $L$ a call can carry grows with each generation (\cref{tab:thumb}). A rule of thumb needs only the target's size $m$ in pixels, the image size and the per-view budget $N$:
\textbf{(i) If the target is small, enlarge} it to about 48 px in the sent image (GPT, Qwen; 32 px tokens) or 42 px (Claude; 28 px tokens); fixed-grid models skip this step.
\textbf{(ii) If the image is large, split} it into views without zooming out. Use $S = \max(1.5, \Sw)$, where $\Sw$ is the $S$ of the whole image; each view is then at most about $m\sqrt{N}/S$ pixels, or $\sqrt{N}/S$ targets, on a side, and overlaps its neighbours by one target.
\textbf{(iii) If targets are many or the content busy, split further} at the same scale, to about the counts in \cref{tab:thumb}.
In all our data, splits that never zoom out raised recall by \CmapBelowMin{} to \CmapBelowMax{} on average where $S < 1.5$, and splits that zoom out lowered it by \CmapZoomMin{} to \CmapZoomMax{} (\cref{fig:map}).

\paragraph{Contributions.}
\textbf{(1) When splitting is safe and how little it can cost, answered exactly and constructively.} ``No zoom-out, one object of overlap'' guarantees that splitting does not lower expected recall for every monotone model, and each part is necessary. Every protected split pays a minimum token cost, and a directly executable rule pays at most this minimum times an explicit factor that tends to one for ideal views; with approximate assumptions, recall degrades by at most the measured defects. The condition covers every local task with one position and one answer per object, including detection and segmentation.
\textbf{(2) Why VLMs miss small objects, in one interface-level account that also predicts future models.} The interface caps $S$ and prices seeing, quadratically for $S$ and at most linearly for $L$; models set the $S$ they need and what one call can handle; a frontier of the same $S^2$ form binds every search strategy, zoom agents included, whenever recognition is graded, so better models do not make splitting obsolete; \cref{sec:future} says where the next models should improve.
\textbf{(3) Tests at scale, including the newest models.} Across \CscaleModels{} models, controlled tests give the assumptions measurable content; across all data, splits that never zoom out gain where $S$ is low and splits that zoom out lose; in \CtCells{} model and data settings the rule raised recall significantly in \CtSigRec{} and precision and F1 in \CtSigPrec, most where the whole image gives targets the fewest tokens, at a cost computable in advance; the effects of $S$ and $L$ carry over to natural images and zoom agents.
We release the LegendSpot benchmark (40 architectural drawings, 1{,}028 instances of legend symbols), a planning tool for safe splitting, the code and all model outputs at \url{https://github.com/shijunzhe/vlm-small-objects}.

\section{Related work}
\label{sec:related}

\paragraph{High-resolution VLMs.} The accuracy of VLMs drops with object size and with distractors~\cite{zhang2026perceptual,zhang2025mllmsknow}, and they lose small targets in large or cluttered inputs~\cite{wu2025visualhaystacks,sharma2024losing,zhang2025hrscene,zhang2025mmerealworld}. Dynamic tiling and any-resolution encoders~\cite{chen2024internvl15,xu2024llavauhd,bai2025qwen25vl} change the model to use more resolution. We leave the model fixed and derive what the interface allows in one call, a ceiling on $S$, and what seeing a whole image must cost.

\paragraph{Splitting and zooming at inference time.} Slicing Aided Hyper Inference (SAHI) cuts fixed-size slices at native resolution with a fixed overlap ratio for a detector~\cite{akyon2022sahi}; its slices never zoom out, but their $S$ is set by the object's pixel size rather than by what the model needs. DC$^2$ splits recursively into quadrants and merges text descriptions~\cite{wang2025dc2}, V$^\ast$ searches quadrants guided by a heatmap~\cite{wu2024vstar}, and ZoomEye runs a tree search over sub-patches ranked by the model's confidence~\cite{shen2024zoomeye}; ViCrop picks one crop from the model's attention or gradients~\cite{zhang2025mllmsknow}. These methods let heuristics or the model decide where to look. We add a protection condition that a caller can check before any call, the minimum cost of protection, and a construction within an explicit factor of it; for example, a $2 \times 2$ split without overlap leaves 9 to 19\% of our targets cut by a boundary (\supp{G}). Agents that zoom while reasoning~\cite{openai2025thinking,zheng2026deepeyes,lai2026minio3,zhang2025chainoffocus,su2025thinkingwithimages} choose their own views; \cref{prop:frontier} bounds what such a strategy must spend when recognition is graded, and we test four such agents (\cref{sec:practice}). \citet{liu2026hide} argue that zooming helps mainly by removing background rather than by enlarging the target; $S$ and $L$ separate the two effects, and which one dominates depends on the model (\cref{sec:agents}).

\paragraph{Many similar objects.} Errors of VLMs grow with the number of objects and with how many features they share, which \citet{campbell2024binding} attribute to the binding problem. Autoregressive detectors tend to stop before they have listed every object, and delaying the stop yields noisy and duplicated predictions~\cite{chen2022pix2seq}. We measure the combined effect of such limits at the level of a call, and give the remedy that a caller controls. Our companion study~\cite{companion2026readout} shows that how a call is asked to report (its request, coordinate format and output structure) changes the returned set; we hold the request and format fixed and ask how the scale and content of a call decide what it can find, and when splitting it is safe.

\paragraph{Counting and visual prompts.} Class-agnostic counters localize instances from exemplars~\cite{amininaieni2024countgd,pelhan2026geco2}, and VLM counting has been improved by dividing the image~\cite{qharabagh2026lvlmcount}. Benchmarks include FSC-147~\cite{ranjan2021fsc147}, CountBench~\cite{paiss2023countbench}, PixMo-Count~\cite{deitke2025molmo} and CountQA~\cite{tamarapalli2025countqa}, where accuracy falls as counts rise. Overlays such as marks, grids, labelled dots and scan lines are reported to help grounding and counting~\cite{yang2023setofmark,lei2025scaffold,izadi2025viser}; by \cref{prop:dpi}, an overlay on the sent image adds no Shannon information about the targets.

\paragraph{Symbol spotting in drawings.} Symbol spotting and floor-plan parsing have dedicated datasets and methods~\cite{liu2017raster,kalervo2019cubicasa,fan2021floorplancad,luo2025archcad}, surveyed by \citet{morenogarcia2019trends}; FloorPlanCAD and ArchCAD-400K train dedicated models for panoptic symbol spotting on large annotated sets. Recent benchmarks show that VLMs struggle to recognize and count symbols on drawings~\cite{shteriyanov2025blueprintsymvl,kondratenko2026aecv,jung2026drawingvqa}, and normalized cross-correlation template matching~\cite{lewis1995fast,brunelli2009template} is a strong, training-free baseline when a clean exemplar is available. We study VLM calls with one example and no training.

\paragraph{Diagnosing VLM vision.} VLMs fail simple geometric tasks~\cite{rahmanzadehgervi2024blind}; attention analyses show how image information reaches text tokens~\cite{kaduri2025whats} and how localization is computed~\cite{schaumloeffel2026mechanisms}. These works study a model's inside; we study the scale, content and cost of a call. Usable information~\cite{xu2020usable} explains why enlarging can help without adding Shannon information.

\section{Experimental protocol}
\label{sec:protocol}

\paragraph{Task and answer.} Each call gives the model an image (or a view) and one reference image of the target and asks for a JSON (JavaScript Object Notation) object with a count and the list of centres of every instance, in the model's published convention (pixels, or 0 to 1000). When an image is split, points outside a view's core are discarded and the rest mapped back to the image. Prompts are in \supp{C}.

\paragraph{Data.} We use \CscaleImages{} images of four kinds: 50 architectural drawings, 40 of which form LegendSpot (1{,}028 instances; 6 sheets used only to write the prompt, and 34 test drawings: 14 full sheets and 20 small drawings) and 10 whose reference symbols come from a symbol library; 429 floor plans from FloorPlanCAD~\cite{fan2021floorplancad} (369 single-block items and 60 mosaics); 150 natural images from FSC-147~\cite{ranjan2021fsc147}; and \CscaleCtrl{} controlled images, 12 of them for the controlled test and the rest for recognition, load and newest-generation measurements. Details in \supp{C}; earlier versions of this paper use internal names for the data sets, listed in \supp{M}.

\paragraph{Models and settings.} We use \CscaleModels{} models from five families (GPT, Claude, Gemini, Gemma, Qwen), each read in its published coordinate convention. Claude Sonnet 5.5 and Opus 5.5 run with the API's default adaptive thinking. The Qwen3-VL agent uses the instruct model without reasoning. Our \emph{standard input} is input that we resize ourselves so that the vendor need not resize it again: whole-image calls with a long side of at most 1{,}600 px (in the controlled test, at most 2{,}048 px and within each vendor's budget), and views within the budgets of \cref{tab:thumb}; the OpenAI models keep all these inputs at their sent size. Reasoning is off where the interface allows it and otherwise at its lowest setting (GPT-6.1 at its lowest effort, Gemini 3.8 at the minimal thinking budget, Gemini 4 at the lowest thinking level). Two kinds of input lie outside the standard input: native-resolution drawings in a high-resolution mode, and 2{,}048 px images that GPT-5.4 resizes itself. Other settings are in \supp{C}.

\paragraph{Metrics and statistics.} \emph{Hit recall} is the fraction of targets with a returned point within $\tau$; \emph{matched recall} is $\mathrm{TP}/n$, with $\mathrm{TP}$ (true positives) the size of a maximum one-to-one matching within $\tau$; precision is $\mathrm{TP}$ over the returned points, and F1 is their harmonic mean. Hit recall is at least matched recall, with equality when targets are more than $2\tau$ apart. Theory tests use hit recall, configuration comparisons matched recall, precision and F1. With the image weight $a(X) = 1/n(X)$ of \cref{sec:safe}, the theorems apply to mean recall per image. We use paired bootstrap with 4{,}000 resamples over images (source drawings for mosaics), and Holm correction for the agent comparisons. Comparisons with the 44 px recipe are observational: it differs from the rule in scale, view size and overlap at once. Recall, precision and F1 are fractions; differences are absolute unless marked as relative, and relative gains are given with both ends.

\section{Why VLMs miss small objects}
\label{sec:why}

The interface caps $S$ and prices seeing, for every model; models decide the $S$ they need and how much one call can handle. Better models therefore need fewer splits, but they cannot remove the need to split.

\subsection{Two quantities, and what the interface fixes}

An image $X = [0,W) \times [0,H)$ holds $n$ instances of a target with a $w \times h$ px box; $m = \sqrt{wh}$ is the target scale and $\tau = m/2$ the matching tolerance. A \emph{call} sends the model one image derived from $X$ at effective scale $s$, the pixels received per native pixel after all resampling by the caller and the vendor. The model cuts the received image into $t \times t$ px tokens and accepts at most $N$; resolution-scaled interfaces (GPT, Claude, Qwen) resize to fit, and fixed-grid interfaces (Gemini, Gemma) map every image to $g \times g$ tokens. A target therefore spans
\begin{equation}
S = m s / t
\label{eq:S}
\end{equation}
tokens per side. Since $s$ depends on the image's size, the same pixel operation gives different $S$ on images of different sizes; this is why splitting can zoom out, and why our conditions are stated in $S$ rather than in pixels. The content $L$ of a call is its view with everything in it, that is, all that the call must search and report: $S$ governs whether the model can see, $L$ how much it must do at once; we use the targets in a view as a rough measure of $L$, and \cref{thm:capacity} gives one in bits. Proofs of all results are in \supp{A}.

\begin{proposition}[Interface ceiling and price of seeing]
\label{prop:geometry}
(a) A resolution-scaled call that shows the whole image gives $S \le m\sqrt{N/(WH)}$, a bound from the image's area, so doubling the budget raises the reachable $S$ by at most 41\%. (b) Covering the image with the kept cores of views shown at $S \ge S^\ast$ requires at least $(WH/m^2)\,{S^\ast}^2$ tokens, more when the views overlap; raising $S^\ast$ by a factor $a$ multiplies this cost by $a^2$. (c) For a resolution-scaled model, splitting into $k$ views at a fixed scale keeps the total at the tokens of the whole area plus overlaps, rounding and per-image prompts; a fixed-grid model pays $kN$ for $k$ views, and a view spanning $\ell$ native pixels on its longer side gets $S = mg/\ell$, which rises as views shrink. The area of each view thus falls to about $1/k$ at a token cost at most linear in $k$.
\end{proposition}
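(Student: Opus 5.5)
The plan is to work entirely from the definition $S = ms/t$ in \cref{eq:S} and from the token accounting of each interface, treating the three parts in order. Each reduces to an area count once we translate "a target spans $S$ tokens per side" into "a native pixel area $A$ costs $A s^2/t^2 = A S^2/m^2$ tokens."

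\emph{Part (a).} A resolution-scaled call showing the whole image at effective scale $s$ sends a $sW \times sH$ px image. This is cut into $t\times t$ tokens, so it uses at least $s^2WH/t^2$ tokens; rounding up only increases the count. The budget requires $s^2WH/t^2 \le N$, hence $s \le t\sqrt{N/(WH)}$, and substituting into \cref{eq:S} gives $S \le m\sqrt{N/(WH)}$. Replacing $N$ by $2N$ multiplies the bound by $\sqrt2 \approx 1.414$, which is the 41\% claim. Because this is an upper bound on the reachable $S$, any vendor rounding or margins only lower the realised value further.

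\emph{Part (b).} Let the views have kept cores $C_1,\dots,C_k$ whose union covers $X$, and let view $j$ be shown at scale $s_j$ with $S_j = ms_j/t \ge S^\ast$. View $j$ costs at least $s_j^2|V_j|/t^2 \ge s_j^2|C_j|/t^2 = S_j^2|C_j|/m^2 \ge {S^\ast}^2|C_j|/m^2$ tokens, since the view contains its core. Summing over $j$ and using $\sum_j |C_j| \ge |\bigcup_j C_j| = WH$ gives the bound $(WH/m^2){S^\ast}^2$. The inequality is strict whenever the views extend beyond their cores or the cores overlap, which is the "more when the views overlap" clause. The bound is homogeneous of degree two in $S^\ast$, which gives the $a^2$ scaling. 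This holds for both interface types: a fixed-grid view uses $g^2 \ge$ its area in tokens at its own scale.

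\emph{Part (c).} For a resolution-scaled model at a fixed $s$, each view costs $\lceil s\ell_1/t\rceil\lceil s\ell_2/t\rceil$ tokens plus a per-image overhead. Summing gives $s^2\sum_j|V_j|/t^2$, which equals the whole area plus the overlap area, plus rounding of at most $O(\text{perimeter}/t+1)$ per view, plus $k$ prompt overheads. All the extra terms are linear in $k$. A fixed-grid model maps each view to $g\times g$ tokens, so it pays $kN$ with $N = g^2$. Its scale for a view of longer side $\ell$ is $s = gt/\ell$, so $S = mg/\ell$, which decreases in $\ell$ and therefore rises as views shrink. In both cases each view's area is about $WH/k$ when cores tile $X$ evenly.

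I expect the main obstacle to be making the "at most linear in $k$" statement precise for the rounding terms. The per-view ceiling error depends on view perimeter, and with overlaps of one target the overlap area grows like $k$ times the perimeter times $m$. My approach would be to bound each view's excess by a constant times its perimeter plus one, then note that for a regular grid of $k$ views the total perimeter is $O(\sqrt{k}\,(W+H))$, and the added overhead is $k$ prompts. Together these are $O(k)$. I would state this in terms of explicit constants rather than claim exact linearity.
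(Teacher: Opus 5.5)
Your proposal is correct and follows essentially the same route as the paper. Part (a) rests on $\lceil Ws/t\rceil\lceil Hs/t\rceil \le N$; part (b) charges each view at least ${S^\ast}^2/m^2$ tokens per unit of core area, with the padded fixed-grid square giving $g^2 \ge |V|S_V^2/m^2$, and sums over cores covering area $WH$; part (c) sums $\lceil a_is/t\rceil\lceil b_is/t\rceil \le (s/t)^2a_ib_i + (s/t)(a_i+b_i) + 1$ using $\sum_i a_ib_i = |R| + O$ (union area plus multiply counted area), with $S = mg/\ell$ for fixed grids.

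The obstacle you anticipate is settled in the paper by exact grid sums, $O \le (k_x-1)e_xA_y + (k_y-1)e_yA_x + (k-1)e_xe_y$ and $\sum_i(a_i+b_i) \le k_yA_x + k_xA_y + k(e_x+e_y)$. These are linear in $k$ for every grid shape. Your $O(\sqrt{k}(W+H))$ perimeter estimate, by contrast, needs a balanced grid: for $k \times 1$ strips the total perimeter grows linearly in $k$, which still suffices for the claim.
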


\begin{proposition}[Information]
\label{prop:dpi}
Overlays on the sent image, and its interpolation, add no Shannon information about the targets.
\end{proposition}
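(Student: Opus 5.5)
The plan is to model the targets as a random variable $T$ (the set of instance positions, or any function of the scene), the native image as $X$, and the sent image as $Y = f(X)$, where $f$ is the caller's and the vendor's resampling. An overlay or an interpolation then acts on the sent image as a further map $Z = g(Y)$, or more generally $Z = g(Y, U)$ with $U$ independent randomness, for example a jittered grid. We will formalize \cref{prop:dpi} as the statement $I(T;Z) \le I(T;Y)$, and more broadly that nothing derived from the sent image can exceed $I(T;X)$. We then argue through the data-processing inequality.

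The key steps, in order, are these.

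\textbf{First}, observe that $T \to X \to Y \to Z$ is a Markov chain. The overlay is drawn by the caller from $Y$ together with fixed parameters, such as grid spacing, marks or labelled dots placed at positions chosen without access to $T$. Interpolation (upsampling of $Y$ by bilinear or bicubic kernels) is a deterministic function of $Y$. In both cases $Z$ is conditionally independent of $T$ given $Y$.

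\textbf{Second}, apply the data-processing inequality, $I(T;Z) \le I(T;Y) \le I(T;X)$. For a deterministic $g$ this is immediate from the chain rule: $I(T;Z) \le I(T;Y,Z) = I(T;Y)$.

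\textbf{Third}, handle interpolation separately. Interpolation does not reduce information either when it is invertible on its range, and in that case equality holds. But the proposition only claims that no information is added, so the inequality suffices.

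The main obstacle is definitional rather than technical: the statement must exclude overlays that depend on $T$ (for instance, marks placed from ground truth or from a detector that saw the native image $X$). I would state explicitly that the overlay is a function of the sent image and of randomness independent of $T$. If it is instead computed from $X$, the chain becomes $T \to X \to (Y,Z)$ and the claim holds only relative to $X$, not to $Y$. I would also remark, pointing to usable information~\cite{xu2020usable}, that the statement concerns Shannon information only. It does not say an overlay cannot help a computationally bounded model, which is consistent with the empirical reports cited in \cref{sec:related}.
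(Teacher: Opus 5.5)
Your proof is the paper's: part (b) of the full form is proved exactly by the Markov chain from the targets through the sent image to its overlaid or interpolated version and the data-processing inequality. Your bound by $I(T;X)$ and your caveat about overlays computed from native pixels match part (a) and the remark after the proof. The paper also adds two things: the overlaid image is a garbling of the sent one, so its Bayes risk is no smaller for any loss, and a fixed additive overlay under a linear resize leaves the information exactly unchanged (part (c)).

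One condition needs tightening. The overlay's randomness must be independent of $(T,X)$, as the paper requires, not merely of $T$. Randomness that is correlated with native pixels but marginally independent of $T$ can break the chain $T \to Y \to Z$.
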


By \cref{prop:dpi}, grids, marks or interpolation applied to the sent image cannot restore what the resize removed, while a crop cut from the native image can (\cref{sec:proofs-interface}). For \cref{fig:intro}'s drawing, \cref{prop:geometry}(b) gives about \CfigPropB{} tokens at $S = 1.5$, and about \CfigCmin{} with the margins (\cref{thm:exact}c).

\subsection{\texorpdfstring{Cause one: $S$}{Cause one: S}}

Gemini 3.8 lets the caller set its token budget while the pixels stay fixed. On controlled images, recall of 8 px targets at its three budgets is \CbudEightLo, \CbudEightDef{} and \CbudEightUh; on floor-plan mosaics, moving from the low to the default budget adds \UfTOnesheetssmallDL{} F1 on items whose $S$ is below 1.5 and \UfTOnesheetslargeDL{} on the others. On controlled images, recall rises gradually with $S$, and the $S$ a model needs differs by model: the $S$ at which models recall half the targets ($S_{50}$) ranges from below 0.3 to 1.28 (\cref{fig:curves}a; per-model values in \supp{D}).

\begin{figure}[t]
\centering
\includegraphics[width=\columnwidth]{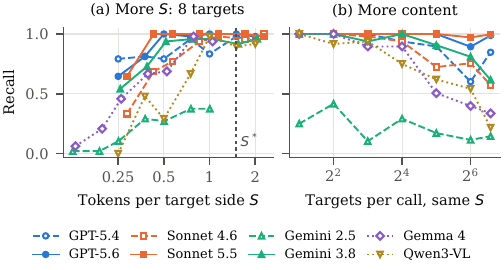}
\caption{Controlled images, eight models. (a) Recall rises gradually with $S$ (8 targets, 8 distractors); dashed line: $S^\ast = 1.5$. (b) At fixed $S$ (1.25 to 1.43 for resolution-scaled models), recall falls as a call must report more targets; lowest reasoning setting, Sonnet 5.5 with adaptive thinking. Dashed and hollow: older generation. $1024^2$ canvas, six images per point; mean matched recall per image; no intervals shown. The fixed-grid models in (b) have $S$ of 0.63 to 1.29.}
\label{fig:curves}
\end{figure}

\subsection{\texorpdfstring{Cause two: $L$}{Cause two: L}}

Adding content to a call while $S$ stays fixed lowers recall. On controlled images with the canvas, $S$ and tokens fixed, Qwen3-VL recalls \XcapNlastqwenThreevl{} of 96 targets, against \XcapNtwoqwenThreevl{} of two (\cref{fig:curves}b). On drawing sheets, packing eight tiles into one GPT-5.6 call at the same tokens per target lowers recall from \XmosRgptFiveSixkOne{} to \XmosRgptFiveSixkEight{} (difference \XmosdRgptFiveSixkEight, interval \XmosdRCIgptFiveSixkEight), although a packed call must report a median of only \CpackMedian{} targets (mean \CpackMean, at most \CpackMax). The failures occur in answers that end normally: in the controlled sweep \CcapNormal{} of \CcapCalls{} calls ended normally (the other is a Gemini 3.8 call whose thinking used the output cap), and the longest visible answer used \CcapMaxOut{} of the 8{,}192 tokens allowed. Where recall starts to fall depends on the model, content and reasoning setting: on the controlled images Sonnet 4.6 and GPT-5.4 drop from about 32 targets and GPT-5.6 and Sonnet 5.5 hold to 96, yet GPT-5.6 drops on packed drawing tiles. Lowering $L$ helps only where $L$ binds; splitting without enlarging does not help when $S$ is too small, and in the zoom-agent ablation (\cref{sec:agents}) lowering $L$ alone gave no significant gain, while enlarging gave GPT-5.6 \UnHThreegptFiveSixOVZOVC{} F1.

Part of the content limit holds for every model. Reporting $n$ positions in an area $A$ to within $\tau$ needs information:

\begin{theorem}[What a hit costs]
\label{thm:capacity}
For $n$ target centres independent and uniform on a region of area $A$ and any returned set $\hat Y$, $\mathbb{E}[\mathrm{Rec}] \cdot \Lbit \le I(Y; \hat Y) + o$, with $\mathrm{Rec}$ the matched recall and $\Lbit = n \log_2\!\big(A/(n\pi\tau^2)\big)$, where $o$ charges unmatched returned points and is $\log_2(n+1)$ without them.
\end{theorem}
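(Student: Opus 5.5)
We prove the bound by a counting argument on the returned set, in the style of Fano: given $\hat Y$, a hit pins each matched target to a small disc, so the conditional entropy of $Y$ falls by about $\log_2(A/(\pi\tau^2))$ per matched target, less a combinatorial term for which returned point matches which target. Write $Y=(y_1,\dots,y_n)$ with $H(Y)=n\log_2 A$ (differential entropy, or a fine quantization with the same cell size applied to $Y$ and conditionals so differences are well defined). Then $I(Y;\hat Y)=h(Y)-h(Y\mid\hat Y)$, and it suffices to upper bound $h(Y\mid \hat Y)$ by $n\log_2 A-\mathbb{E}[\mathrm{Rec}]\,\Lbit+o$.

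First we introduce the auxiliary variable $M$, the maximum one-to-one matching between $Y$ and $\hat Y$ within $\tau$, recorded as the set of matched target indices together with the injection into returned points. Then $h(Y\mid\hat Y)\le H(M\mid\hat Y)+h(Y\mid \hat Y,M)$. Given $\hat Y$ and $M$, each matched $y_i$ lies in a disc of area $\pi\tau^2$ around its partner, and each unmatched $y_i$ lies in $[0,W)\times[0,H)$ of area $A$. So $h(Y\mid\hat Y,M=\mu)\le |\mu|\log_2(\pi\tau^2)+(n-|\mu|)\log_2 A$, and averaging gives $n\log_2 A-\mathbb{E}[n\,\mathrm{Rec}]\log_2(A/(\pi\tau^2))$.

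The main obstacle is $H(M\mid\hat Y)$, which must be charged so that only $n\log_2 n$ per full recall, and then $\log_2(n+1)$, remains. Since the targets are exchangeable, the labelling of which target sits where is itself worth $\log_2 n!$. We therefore either treat $Y$ as an unordered set, replacing $h(Y)$ by $n\log_2 A-\log_2 n!$, or note that matched targets can be relabelled. With the unordered convention, $M$ reduces, when every returned point is matched, to the number $r=|\mu|$ of hits together with which returned points are matched. If $\hat Y$ has no unmatched points, the matched subset is all of $\hat Y$, so $M$ carries only $r\in\{0,\dots,n\}$, costing $\log_2(n+1)$. The lost $\log_2 n!$ is bounded below using $\log_2 n!\ge n\log_2 n-n\log_2 e$, and at most $r\log_2 n$ of the set entropy is attributable to matched points. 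This turns $\log_2(A/(\pi\tau^2))$ per hit into $\log_2(A/(n\pi\tau^2))$, giving $\Lbit=n\log_2(A/(n\pi\tau^2))$ times $\mathbb{E}[\mathrm{Rec}]$, where convexity is used to pass from $\mathbb{E}[r\log\cdot]$ to $\mathbb{E}[\mathrm{Rec}]\cdot\Lbit$.

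The $\log_2 e$ slack and Stirling terms need care. I would try first to define $o$ generously as $\log_2(n+1)$ plus a term in the number of unmatched returned points, $\log_2\binom{|\hat Y|}{r}$, which is the charge for choosing which returned points hit and is what the statement calls $o$. If Stirling leaves an $O(n)$ residue, I would absorb it by noting that the disc-volume bound for $r$ labelled matched points in an unordered set already divides by $r!$. Finally, substituting all bounds into $I=h(Y)-h(Y\mid\hat Y)$ and rearranging yields $\mathbb{E}[\mathrm{Rec}]\,\Lbit\le I(Y;\hat Y)+o$.
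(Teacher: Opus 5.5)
Your first two paragraphs are correct and follow the paper's decomposition. You take $M$ as the labelled maximum matching, split off $H(M\mid\hat Y)$, and bound what remains by the volume of the set where $Y$ can lie given $(\hat Y,M)$. The paper writes the last step as $D(P_{Y\mid\hat Y,M}\,\|\,P_Y)\ge -\log_2 P_Y(\Sigma)$, which avoids differential entropy and gives a general form with $p_k$; for uniform centres your version is equivalent. The gap is in the third paragraph, at the step you call the main obstacle.

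Switching to an unordered $Y$ and using Stirling does not yield the stated inequality. Estimating $\log_2 n!$ and the factorial in the conditional separately leaves a residue of about $n\log_2 e$ bits. The statement has no $\Theta(n)$ slack: $o=\log_2(n+1)$ when every returned point is matched. That residue is comparable to $\mathbb{E}[\mathrm{Rec}]\,\Lbit$ itself when $A/(n\pi\tau^2)$ is a small constant.

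Your proposed absorption uses the wrong factor. In the unordered convention, configurations with one centre in each of the $k$ matched discs have symmetric volume at most $(n)_k(\pi\tau^2)^kA^{n-k}/n!=(\pi\tau^2)^kA^{n-k}/(n-k)!$, not a volume divided by $k!$. What must be controlled is exactly $\log_2 n!-\log_2(n-k)!=\log_2(n)_k$. The convexity step is also unnecessary, since $\mathbb{E}[\mathrm{TP}]\log_2\!\big(A/(n\pi\tau^2)\big)$ is linear in $\mathrm{TP}$.

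The missing idea is a direct count in your own labelled setting, with no change of convention. Given $\hat Y$ with $R$ points and $\mathrm{TP}=k$, the matching $M$ is a $k$-subset of the $n$ labels together with an injection into the $R$ returned points. That gives at most $\binom{n}{k}R(R-1)\cdots(R-k+1)=(n)_k\binom{R}{k}$ values. With at most $n+1$ values of $\mathrm{TP}$,
\[
H(M\mid\hat Y)\le \log_2(n+1)+\mathbb{E}\Big[\log_2 (n)_{\mathrm{TP}}+\log_2\binom{R}{\mathrm{TP}}\Big].
\]
Now $(n)_k\le n^k$ turns your $\log_2\!\big(A/(\pi\tau^2)\big)$ per hit into exactly $\log_2\!\big(A/(n\pi\tau^2)\big)$. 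This gives $\mathbb{E}[\mathrm{Rec}]\,\Lbit\le I(Y;\hat Y)+o$ with $o=\log_2(n+1)+\mathbb{E}\log_2\binom{R}{\mathrm{TP}}$, which reduces to $\log_2(n+1)$ when no returned point is unmatched.

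This is the paper's proof: the $\log_2 n$ per hit is charged to the entropy of the labels inside $M$, with no Stirling estimate and no unordered sets. If you prefer the unordered convention, the exact identity $\log_2 n!-\log_2(n-k)!=\log_2(n)_k\le k\log_2 n$ closes it equally well.
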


Each hit therefore costs about $\log_2$ of the ratio between the area per target and the tolerance disk, in bits, so an answer that carries $C$ bits about positions can find at most about $(C + o)/\Lbit$ of the targets.

\subsection{What better models change, and why splitting stays}

\emph{The $S$ a model needs falls with each generation.} On the same token grid and the same controlled images, Gemini 3.8 recalls \UtOneHThree{} more than Gemini 2.5; and Gemini 4 receives the same 1{,}090 tokens per image as Gemini 3.8, yet the $S$ it needs for recall and precision of 0.9 falls from \CgenSninetyGemThreeEight{} to \CgenSninetyGemFour{} (both at the vendor's default reasoning; \supp{D}). On this grid the whole image already reaches that $S$ on \CreachGemThreeEight{} of the \CreachN{} LegendSpot drawings for Gemini 3.8 and on \CreachGemFour{} for Gemini 4, so fewer images need splitting.
\emph{What one call can handle grows.} With 96 targets on the same controlled images, recall rises from \XcapNlastgptFiveFour{} (GPT-5.4) to \XcapNlastgptFiveSix{} (GPT-5.6) and from \XcapNlastsonnetFourSix{} (Sonnet 4.6) to \XcapNlastsonnetFiveFive{} (Sonnet 5.5, with adaptive thinking); in our load sweep at $S = 1.5$, GPT-6.1 holds at \CgenGptSixOneMax{} targets (\cref{tab:thumb}, \supp{L}).
\emph{But the need to split stays.} An image can be arbitrarily large while a call's budget is fixed, and at the $S$ a model needs one call sees only a bounded area (\cref{prop:geometry}): each generation raises how much one call can handle, but no model removes the need to split.

The minimum cost also binds strategies that choose their own views. Let a call carry several images, as when an agent keeps earlier views, and count all their tokens.

\begin{proposition}[Recall-cost frontier]
\label{prop:frontier}
Suppose a policy sees the image only through the model's responses and makes a bounded number of calls, and that, given any history, adding one target changes the law of the model's response by at most $\sum_I f(S_I)$ in total variation, summed over the images $I$ of the call that show it at $S_I$, with $f$ non-decreasing (graded recognition). To find a target placed anywhere in a region $\Omega$ with average probability $\bar R$, the policy's expected tokens on images without that target are at least
\[
\frac{\bar R\,|\Omega| - \pi\tau^2\,\overline{\mathrm{FP}}_0}{\beta\,\kappa_f\, m^2}, \qquad \kappa_f = \sup_{S>0} f(S)/S^2,
\]
where $\overline{\mathrm{FP}}_0$ counts its false points there and $\beta \ge 1$ bounds $(1 + w/a_x)(1 + h/a_y)$ over its $a_x \times a_y$ views.
\end{proposition}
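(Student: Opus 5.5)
The plan is a hybrid (coupling) argument that compares the policy's behaviour on an image with a target at position $y$ against the same image without it, summed over $y \in \Omega$. Fix the image without the target, $X_0$, and for each $y\in\Omega$ let $X_y$ add one target centred at $y$. Let $E_y$ be the event that the policy returns a point within $\tau$ of $y$. Then $\bar R = |\Omega|^{-1}\int_\Omega \Pr_{X_y}(E_y)\,dy$, and we write $\Pr_{X_y}(E_y) \le \Pr_{X_0}(E_y) + \mathrm{TV}(P_{X_y}, P_{X_0})$, where $P_X$ is the law of the whole transcript (all calls and responses). This holds because the returned set is a function of the transcript plus independent randomness.

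\emph{First term.} On $X_0$, any point within $\tau$ of $y$ is a false point, since there is no target at $y$. Integrating over $y$ gives $\int_\Omega \Pr_{X_0}(E_y)\,dy \le \mathbb{E}_{X_0}\big[\sum_{p\in \hat Y}|B(p,\tau)|\big] = \pi\tau^2\,\overline{\mathrm{FP}}_0$. This produces the subtracted term.

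\emph{Second term.} Bound $\mathrm{TV}(P_{X_y},P_{X_0})$ by the standard adaptive chain rule for total variation. Couple the two transcripts step by step. The policy's choice of the next call depends only on the history, so the transcripts diverge only when a response differs. At each call, given a common history, the assumption bounds the conditional TV by $\sum_{I} f(S_I)\mathbf 1[y \text{ shown in } I]$. Summing over the bounded number of calls gives
\[
\mathrm{TV}(P_{X_y},P_{X_0}) \le \mathbb{E}_{X_0}\Big[\sum_{\text{calls}}\sum_{I} f(S_I)\,\mathbf 1[y\in I]\Big].
\]
The expectation is taken under $X_0$, which is legitimate because, on the coupling event, histories agree up to the first divergence. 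Integrating over $y$, each image $I$ contributes at most $f(S_I)$ times the area of $y$ for which the target is shown in $I$. A target is shown when its box meets the view, so that area is at most $(a_x+w)(a_y+h)\le\beta\,a_xa_y$ in native pixels.

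\emph{Converting area to tokens.} A view of native area $a_xa_y$ at $S_I = m s_I/t$ occupies $a_xa_y s_I^2/t^2 = a_xa_y S_I^2/m^2$ token areas, and it is charged at least that many tokens. Hence $f(S_I)\,\beta a_xa_y \le \beta\kappa_f S_I^2 a_xa_y \le \beta\kappa_f m^2\cdot\mathrm{tokens}(I)$. Combining the pieces,
\[
\bar R|\Omega| \le \pi\tau^2\overline{\mathrm{FP}}_0 + \beta\kappa_f m^2\,\mathbb{E}_{X_0}[\mathrm{tokens}],
\]
which rearranges to the claim.

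The main obstacle is the adaptive TV chain rule. Because the views depend on past responses, we cannot simply sum TVs over fixed images, and the sum in the bound must be taken under the target-free law. I would handle this with a maximal coupling built call by call, and then bound the probability of first divergence at call $j$ by the expected per-call TV under the common history, which equals that under $X_0$. Boundedness of the number of calls keeps the sum finite. A secondary point is the convention of counting tokens as $\lceil\cdot\rceil$ of the token area; this only helps the inequality.
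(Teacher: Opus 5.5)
Your proposal is the paper's proof: the same bound $\Pr_{X_y}(E_y)\le\Pr_{X_0}(E_y)+\mathrm{TV}(P_{X_y},P_{X_0})$, the same adaptive chain for total variation with each per-call bound taken in expectation under the target-free law (the paper uses hybrid transcripts $H_j$ and the data-processing inequality where you couple maximally call by call, which is equivalent), and the same integration over $\Omega$ using $(a_x+w)(a_y+h)\le\beta\,a_xa_y$ and $f(S_I)\,a_xa_y\le\kappa_f m^2\,\mathrm{tokens}(I)$. One step needs a hypothesis you did not state: ``there is no target at $y$'' does not make a point within $\tau$ of $y$ a false point, because that point may lie within $\tau$ of a nearby target of $X_0$ (a zero-call policy that reports a target of $X_0$ would then find targets added next to it at no cost), so you must assume, as the paper's full statement does, that every $y\in\Omega$ is more than $2\tau$ from every target of $X_0$, and then bound the first term by summing $|B(p,\tau)\cap\Omega|$ over the false points only.
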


The bound covers policies that adapt, mix scales, randomize, or trade false points for hits, such as the zoom agents we test (\cref{sec:practice}). A high-resolution mode changes $S$ but not $L$, so it addresses at most the first cause.

\section{When splitting is safe, and what it costs}
\label{sec:safe}

Splitting cannot lower expected recall when no view zooms out and views overlap by one object, and neither part can be dropped. Every such split pays a minimum cost, and our rule pays it up to an explicit factor. All data show both predictions (\cref{fig:map}), and the guarantee fails exactly at the two places the theory names.

\paragraph{What the caller controls and what is tested.} A call $c$ shows a rectangular view $V_c$ of a source image at scale $s_c$, keeps the returned points inside a core $K_c \subseteq V_c$, and uses a fixed prompt. A target is \emph{interior} to $c$ if its box lies in $V_c$ and its centre at least $\tau$ inside $K_c$; $J_c$ is the set of interior centres. A \emph{model} gives a response law $P_X(\cdot \mid c)$ for every image $X$ and call $c$. For a distribution $\mathcal{D}$ of images (an application, a benchmark, or one image), a weight $a(X) \ge 0$ and a region $U$, $\nu_c(U) = \mathbb{E}[a(X) \sum_{x \in Y(X) \cap U \cap J_c} H_c(x)]$, where $Y(X)$ holds the target centres and $H_c(x) = 1$ if $c$ returns a kept point within $\tau$ of $x$; $a = 1$ counts detections, $a = 1/n(X)$ gives mean recall per image, and every result below holds for each $a$. A call $c'$ is a \emph{crop} of $c$ if it shows a sub-region of the same pixels at $s_{c'} \ge s_c$, and it \emph{refines} $c$ if it shows the same region at $s_{c'} \ge s_c$. The caller controls the geometry: no view zooms out, and views overlap by one object. The behaviour of the model is assumed, and tested:

\par\noindent\begin{minipage}{\linewidth}
\begin{assumption}[Monotonicity]
\label{ass:mono}
For all calls $c, c'$ and every region $U \subseteq J_c \cap J_{c'}$: (R) if $c'$ refines $c$, then $\nu_{c'}(U) \ge \nu_c(U)$: more tokens per target do not hurt; (M) if $c'$ is a crop of $c$ at the same scale, then $\nu_{c'}(U) \ge \nu_c(U)$: less content does not hurt.
\end{assumption}
\end{minipage}

$\mathcal{M}_{R,M}$ denotes the models satisfying Assumption~\ref{ass:mono}, whose two properties are averages that an experiment can estimate (zoom tools need one more property, \supp{A}).

\subsection{The exact condition for protection}

\begin{theorem}[Safe splitting]
\label{thm:safe}
Let the whole-image call show the image at $\Sw$, and let views $c_1, \dots, c_k$ be crops of it, so that $S_i \ge \Sw$. Under (R) and (M), if the cores shrunk by $\tau$ cover the image shrunk by $\tau$ and each view extends its core by half a target box on each side, the views together detect, in expectation, at least as many of the targets interior to the whole-image call as it does. This sufficient construction fixes the cores first and then adds the margin; in its terms, adjacent cores overlap by one target.
\end{theorem}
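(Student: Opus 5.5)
The plan is to reduce the global comparison to a sum of per-view comparisons, one for each piece of a partition of the interior targets. Each per-view comparison is then handled by a two-step chain through an intermediate call that uses (M) and then (R). Write $c_0$ for the whole-image call at scale $s_0$, showing $X$ with core $K_0 = X$. Its interior set $J_{c_0}$ is then the set of centres at least $\tau$ inside $X$ whose boxes lie in $X$.

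\textbf{Step 1 (geometry: every interior target is interior to some view).} Take $x \in J_{c_0}$. Then $x$ lies in $X$ shrunk by $\tau$. By the covering hypothesis, $x$ lies in some core $K_i$ shrunk by $\tau$, so its centre is at least $\tau$ inside $K_i$. The target's box extends at most $w/2$ and $h/2$ from $x$. Since $V_i$ extends $K_i$ by half a box on each side, the box lies in $V_i$, and so $x \in J_{c_i}$. Interiority is defined in native coordinates, so it does not depend on the scale.

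Choose a measurable assignment, for instance the first index $i$ that works, to get disjoint regions $U_1, \dots, U_k$ with $U_i \subseteq J_{c_0} \cap J_{c_i}$ and $\bigcup_i U_i = J_{c_0}$. The closing remark of the theorem follows from the same step: two shrunken cores that meet, or abut, overlap by $2\tau = m$ before shrinking, which is one target.

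\textbf{Step 2 (per-view comparison).} For each $i$, introduce the intermediate call $c_i'$. It shows the view $V_i$ at the whole-image scale $s_0$ and keeps the same core $K_i$.
\begin{itemize}
\item Since $c_i'$ is a crop of $c_0$ at the same scale, (M) gives $\nu_{c_i'}(U_i) \ge \nu_{c_0}(U_i)$. This uses $U_i \subseteq J_{c_i'}$, which holds because $J_{c_i'} = J_{c_i}$ by Step 1.
\item Since $c_i$ shows the same region at $s_i \ge s_0$, it refines $c_i'$, and (R) gives $\nu_{c_i}(U_i) \ge \nu_{c_i'}(U_i)$.
\end{itemize}
Chaining the two gives $\nu_{c_i}(U_i) \ge \nu_{c_0}(U_i)$.

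\textbf{Step 3 (summation).} Count a target as detected by the split if some view returns a kept point within $\tau$ of it. The split's detection indicator on $U_i$ is then at least $H_{c_i}$, so the split's expected weighted count is at least $\sum_i \nu_{c_i}(U_i)$. This holds whether the views are run jointly or independently, because only marginal laws enter, by linearity of expectation. Combining with Step 2 and the additivity of $\nu_{c_0}$ over the disjoint $U_i$ gives
\[
\sum_i \nu_{c_i}(U_i) \ \ge\ \sum_i \nu_{c_0}(U_i) \ =\ \nu_{c_0}(J_{c_0}).
\]
This holds for every weight $a$.

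\textbf{Main obstacle.} I expect the delicate point to be Step 2's use of the intermediate call $c_i'$. It must be an admissible call to which Assumption~\ref{ass:mono} applies, even though the caller never sends it, and for the region it shows at scale $s_0$ it must have exactly the same interior set as $c_i$. I would handle this by stating explicitly that the assumption quantifies over all calls, including unsent ones, and that interiority and $\nu$ depend only on the view, the core, the scale and the native geometry. A secondary check is that the $U_i$ can be chosen measurably, so that $\nu$ is additive over them; the first-index rule with rectangular cores ensures this.
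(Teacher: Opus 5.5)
Your proposal is correct and is essentially the paper's proof: the paper also partitions the interior set by first index, $U_i = (U \cap J_{c_i}) \setminus \bigcup_{j<i} J_{c_j}$, and bounds each piece with a crop-dominance lemma that chains (M) and then (R) through the same intermediate call at the whole-image scale. It then sums by additivity and checks the covering geometry as in your Step 1. On your stated main obstacle, the paper secures that intermediate call through its interface assumption: it gives the call blank padding so the vendor does not resize the crop again (for a fixed grid, a canvas the size of the whole image's padded image), and gives it the crop protocol so that $c_i$ refines it.
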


\Cref{thm:safe} establishes safety; \cref{thm:exact} gives the exact criterion and the cost lower bound; \cref{cor:safe} gives a construction within an explicit factor of it.

\begin{theorem}[Exact condition and minimum cost]
\label{thm:exact}
Let $\mathrm{d}$ be any finite set of calls. (a) If every centre interior to the whole-image call is interior to some call of $\mathrm{d}$ that is a crop or a refinement of it, then under (R) and (M), $\mathrm{d}$ detects in expectation at least as many of the targets interior to the whole-image call as it does, for every $\mathcal{D}$. (b) If target centres are more than $2\tau$ apart and some location interior to the whole-image call is interior to no such call, there is a $P \in \mathcal{M}_{R,M}$ under which the whole-image call detects every target interior to it and $\mathrm{d}$ never detects a target there. (c) If every interior centre is interior to such a call with $S_c \ge S \ge \Sw$, these calls use at least
\[
\Cmin = |J_{\mathrm{w}}|\,\lambda/(1 - \sigma)^2
\]
tokens, whatever the shapes, sizes, cores and overlaps of their views, where $|J_{\mathrm{w}}| = (W - e_x)(H - e_y)$ with margins $e_x = \max(w, 2\tau)$, $e_y = \max(h, 2\tau)$, $\lambda = S^2/m^2$ tokens per native pixel, and $\sigma = \sqrt{\lambda e_x e_y / N} < 1$.
\end{theorem}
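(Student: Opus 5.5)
Parts (a), (b) and (c) of \cref{thm:exact} will be proved separately, in that order; (a) is a chaining argument under \cref{ass:mono}, (b) is a constructed monotone model, and (c) is a packing bound in the spirit of \cref{prop:geometry}(b).

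\textbf{Part (a).} Write $J_{\mathrm{w}}$ for the centres interior to the whole-image call $\cw$. By hypothesis every such centre lies in $J_c$ for some $c \in \mathrm{d}$ that crops or refines $\cw$, so I would partition $J_{\mathrm{w}}$ measurably into pieces $U_c \subseteq J_c \cap J_{\mathrm{w}}$, assigning each point to the first call (in a fixed order) that covers it.

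For a crop $c$, I would factor it as a crop at the same scale $s_{\mathrm{w}}$ followed by a refinement of that crop to scale $s_c$. Applying (M) and then (R) on $U_c$ gives $\nu_c(U_c) \ge \nu_{\cw}(U_c)$. Interiority has to be preserved through the intermediate call; I would define its core as the core of $c$, so the $\tau$-interior condition carries over. For a refinement, (R) applies directly.

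Because $\mathrm{d}$ detects a target if any of its calls does, its detections on $J_{\mathrm{w}}$ are at least $\sum_c \nu_c(U_c) \ge \sum_c \nu_{\cw}(U_c) = \nu_{\cw}(J_{\mathrm{w}})$. All steps are linear in $a(X)$ and in $\mathcal{D}$, so the conclusion holds for every $\mathcal{D}$ and every weight.

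\textbf{Part (b).} Fix a location $x_0$ interior to $\cw$ but interior to no crop or refinement in $\mathrm{d}$. Place a target at $x_0$; since centres are more than $2\tau$ apart, any returned point matches at most one target. Define the deterministic model $P$ that, for any call $c$, returns exactly the target centres in $J_c$ for every $c$.

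I would then check that $P$ lies in $\mathcal{M}_{R,M}$. On any $U \subseteq J_c \cap J_{c'}$ both calls detect every target, so (R) and (M) hold with equality. This model makes $\cw$ detect every interior target. A call in $\mathrm{d}$ that is a crop or refinement does not have $x_0$ interior, so it returns no point there. The remaining step is the calls in $\mathrm{d}$ that are neither crops nor refinements, on which the assumptions place no constraint. For these I would modify $P$ so they return nothing near $x_0$, and verify that neither (R) nor (M) relates them to $\cw$ in a way that forces a detection. This consistency check, rather than the construction itself, is what needs care in (b).

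\textbf{Part (c).} This is the main obstacle. Take a call $c$ with view $a_x \times a_y$ native pixels at scale giving $S_c \ge S$. It costs at least $\lambda a_x a_y$ tokens. Its set of interior centres has area at most $(a_x - e_x)(a_y - e_y)$, since the box must fit in $V_c$ and the centre must sit $\tau$ inside the core. Also $\lambda a_x a_y \le N$, so $a_x a_y \le N/\lambda$.

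Covering $J_{\mathrm{w}}$ then gives $\sum_c (a_x - e_x)(a_y - e_y) \ge |J_{\mathrm{w}}|$. I would bound the per-call efficiency
\[
\frac{(a_x - e_x)(a_y - e_y)}{a_x a_y} \le \Bigl(1 - \sqrt{e_x e_y/(a_x a_y)}\Bigr)^2 \le \Bigl(1 - \sqrt{\lambda e_x e_y / N}\Bigr)^2 = (1 - \sigma)^2 .
\]
The first inequality is AM--GM: $(1 - u)(1 - v) \le (1 - \sqrt{uv})^2$ with $u = e_x/a_x$ and $v = e_y/a_y$. The second uses $a_x a_y \le N/\lambda$. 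Summing over calls, the token cost satisfies $\sum_c \lambda a_x a_y \ge \lambda |J_{\mathrm{w}}| / (1 - \sigma)^2 = \Cmin$.

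Rounding to whole tokens only increases the cost. Fixed-grid calls pay $N \ge \lambda a_x a_y$ once $S_c \ge S$, so the same bound applies to them.
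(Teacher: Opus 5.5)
Your parts (a) and (c) are sound and follow the paper's argument. In (c), you bound $a_xa_y \le N/\lambda_c \le N/\lambda$ and the cost by $\lambda_c a_xa_y \ge \lambda a_xa_y$ right away. That is a slightly shorter route than the paper's, which keeps $\lambda_c$ and then uses that $z\mapsto z/(1-\bar e\sqrt{z/N})^2$ is increasing. In (a), the intermediate call at scale $s_{\mathrm{w}}$ needs the interface assumption: for a fixed grid, a small crop sent alone is rescaled by the vendor unless you pad it to the canvas of $c_{\mathrm{w}}$.

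The gap is in (b), at exactly the step you defer. Your model returns the centres in $J_c$ for every call. So it must be changed on any call $c\in\mathrm{d}$ that has $x_0$ interior but is neither a crop nor a refinement of $c_{\mathrm{w}}$. You propose to change only those calls, on the grounds that the assumptions do not constrain them. They do: (R) and (M) compare $c$ with every call it refines or crops at the same scale, not only with $c_{\mathrm{w}}$.

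For instance, suppose $c$ shows a native sub-region around $x_0$ with protocol $\pi_{\mathrm{crop}}$ at a scale $s_c<s_{\mathrm{w}}$ (a zoom-out). Then $c$ is a same-scale crop of the call $c''$ that shows the whole image at $s_c$. That call need not belong to $\mathrm{d}$, has $x_0$ interior, and still returns $x_0$ under your model. With a target at $x_0$, (M) demands $\nu_c(U)\ge\nu_{c''}(U)>0$ on a neighbourhood $U$ of $x_0$, which fails once $c$ is silenced there. Likewise (R) fails against the same view at a lower scale. So your modified model is not in $\mathcal{M}_{R,M}$. Silencing has to propagate to everything $c$ dominates.

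The clean way is to define the model globally. Return the centres in $J_c$ exactly for calls in the class $\mathcal{C}$ of crops and refinements of $c_{\mathrm{w}}$, and nothing for every other call. The fact that makes this work, which your proposal never states, is that $\mathcal{C}$ is closed under refinement and under same-scale cropping: a refinement or same-scale crop of a crop or refinement of $c_{\mathrm{w}}$ is again one. Hence in any pair ordered by (R) or (M), one of two things holds:
\begin{itemize}
\item the dominated call lies outside $\mathcal{C}$ and has $\nu=0$, or
\item both calls lie in $\mathcal{C}$ and detect every target in $J_c\cap J_{c'}$.
\end{itemize}
So the model satisfies (R) and (M) target by target. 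Finally, every call of $\mathrm{d}$ that returns anything is then a crop or refinement of $c_{\mathrm{w}}$ without $x_0$ in its interior. It therefore returns only centres more than $2\tau$ from $x_0$, so $\mathrm{d}$ never detects the target there.
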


\begin{proposition}[Each part is necessary]
\label{prop:tight}
For targets more than $2\tau$ apart: (a) if a target lies in the core of no view with $S_i \ge \Sw$, some $P \in \mathcal{M}_{R,M}$ makes the whole-image call find every target interior to it and the split never find that one; (b) neither (R) nor (M) can be dropped: for each, a model satisfying the other loses every target under some split while the whole-image call finds every target interior to it.
\end{proposition}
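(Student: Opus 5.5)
Both parts are constructive: we build explicit response laws $P$ and check (R) and (M) directly against the definition of $\nu_c$. Throughout, we take targets more than $2\tau$ apart. A returned point then lies within $\tau$ of at most one centre, so ``detect target $x$'' can be decided independently for each target. We will use deterministic laws, which return a set of points that depends on $X$ and $c$. For such laws, $\nu_c(U)$ is a weighted count of detected interior targets in $U$. To establish (R) and (M), it is therefore enough to show the pointwise statement: for every image, every $U\subseteq J_c\cap J_{c'}$ and every target in $U$, if $c$ detects it then so does $c'$.

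For (a), let $x_0$ be a target that lies in the core of no view with $S_i\ge\Sw$. We define $P$ by a rule that depends only on the geometry of the call: a call returns the exact centre of each target $x$ interior to it \emph{iff} $c$ is the whole-image call or a refinement of it, i.e.\ its view contains the whole image at scale $\ge s_{\mathrm w}$. Otherwise it returns exactly the interior centres other than $x_0$. We then verify the two properties. For (R), a refinement of a whole-image refinement is again one, and a refinement of a non-whole call shows the same region, so it still omits only $x_0$ and returns every other interior target. For (M), a same-scale crop is never a whole-image refinement unless it equals $c$. The only loss in the pointwise inequality is at $x_0$, and this needs $x_0$ detected by $c$ but not by $c'$, which happens only when $c$ is whole-image type. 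Here we use the hypothesis: any split view keeping $x_0$ in its core with $S_i\ge\Sw$ would have to be whole-image type, and there is no such view, while zoomed-out views are excluded from the comparison. So the whole-image call finds every target interior to it and the split never finds $x_0$.

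For (b), we give one counterexample per dropped assumption. To drop (M) while keeping (R), we let the response depend only on the content: a call detects all interior targets iff its view equals the full image, at any scale $\ge s_{\mathrm w}$, and returns nothing otherwise. (R) holds because a refinement keeps the region. (M) fails, and every proper split loses every target. To drop (R) while keeping (M), we let the response depend only on the scale: a call detects all interior targets iff $s_c\le s_{\mathrm w}$, and returns nothing otherwise. (M) holds because a same-scale crop keeps $s$. (R) fails, and a split whose crops all zoom in loses everything. For the split we take any cover by proper crops at $s_i>s_{\mathrm w}$, for instance halves at slightly larger scale.

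We expect the main obstacle to be the (M) check in (a). The point is to make sure that the loss at $x_0$ occurs only along comparisons that (M) does not constrain. This is why the detection rule is keyed to the whole-image type of the call rather than to its core. A crop at the same scale whose view still contains $x_0$ in its interior must also satisfy (M). If the rule were defined by ``the core contains $x_0$'', this would break, because the core is post-processing and does not enter $P$. We therefore define the rule by the view only, and treat the core solely through the definition of detection, i.e.\ through the set $J_c$.
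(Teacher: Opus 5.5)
Your part (b) is correct and is essentially the paper's construction: one model answers only when $S_c \le \Sw$, and the other answers only when its view is the whole image. Your extra requirement $s_c \ge s_{\mathrm{w}}$ in the second model is harmless.

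Part (a) has a genuine gap: the model you build is not in $\mathcal{M}_{R,M}$. Assumption (M) quantifies over \emph{all} pairs of calls, not only over the views of the given split. So the hypothesis that $x_0$ lies in no core of a view with $S_i \ge \Sw$ cannot be used to verify it. Take $c = \cw$, and let $c'$ be any crop of $\cw$ at scale $s_{\mathrm{w}}$ (protocol $\pi_{\mathrm{crop}}$, padded so the vendor does not resize it) with $x_0 \in J_{c'}$; such a call exists whatever the split is. Then $c$ is of whole-image type and returns $x_0$, while $c'$ is not and omits it. For $\mathcal{D}$ the single image and $U = \{x_0\}$ you get $\nu_{c'}(U) = 0 < 1 = \nu_c(U)$, which violates (M). Your own remark that a same-scale crop is never a whole-image refinement is exactly this failing case. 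No variant that suppresses $x_0$ in calls at $S_c \ge \Sw$ can work: by \cref{lem:crop}, any model in $\mathcal{M}_{R,M}$ under which $\cw$ finds $x_0$ must find it, with at least the same probability, in every crop of $\cw$ that has $x_0$ interior.

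The missing idea is that the loss must come from the geometry of the split, not from the model treating $x_0$ specially. Key the rule on $S_c$ rather than on the view. Return the centres of the targets interior to the call (or, as the paper does, of all targets whose boxes lie in the view) if $S_c \ge \Sw$, and nothing otherwise. (R) holds because refining can only raise $S_c$. (M) holds with equality because a same-scale crop has the same $S_c$. $\cw$ has $S_c = \Sw$ and finds every target interior to it. In the split, views with $S_i < \Sw$ return nothing. Views with $S_i \ge \Sw$ never keep a point within $\tau$ of $x_0$, because $x_0$ lies outside their cores and every other returned centre is more than $2\tau$ from it. The minimal repair of your construction, omitting $x_0$ only when $S_c < \Sw$, also works. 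Your worry about the core is also misplaced: the core is part of the call, so a model may depend on it. The construction does not need this, though, since the core filter itself discards $x_0$.
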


Protection is thus decided location by location: call a crop or refinement of the whole-image call \emph{qualified}; a split is protected for every model in $\mathcal{M}_{R,M}$ exactly when the interior set of the whole-image call satisfies \mbox{$J_{\mathrm{w}} \subseteq \bigcup\nolimits_{i \in I_{\mathrm{qual}}} J_i$}, the union over its qualified views (for targets more than $2\tau$ apart). A view that zooms out is not qualified, and never zooming out, with one object of overlap, meets the criterion. The same condition protects every local task with one position and one answer per object (\supp{A}).

\subsection{The minimum cost, and a rule within a factor of it}

\begin{algorithm}[t]
\caption{Safe splitting near the minimum cost}
\label{alg:rule}
\begin{algorithmic}[1]
\Require image $W \times H$, target box $w \times h$, interface ($t$, $N$ or $g$), protection point $S^\ast$
\State $S \gets \max(S^\ast, \Sw)$ \Comment{never zoom out}
\State cut the image at scale $S$ into the largest views within the budget, overlapping by $e_x$ and $e_y$, each its own core ($K_i = V_i$) and sent without further resizing
\State \emph{optional:} if a view holds many targets or busy content, also send sub-views of it at the same scale, overlapping by one target, and keep the view's points too (a subdivision fixed in advance for every view may instead replace the views)
\State send each view; keep its points, map them back and return them all, without merging duplicates
\end{algorithmic}
\end{algorithm}

\begin{corollary}[A safe split near the minimum]
\label{cor:safe}
Steps 1 and 2 of \cref{alg:rule}, with views $a_x \times a_y$ at pitches $p_x = a_x - e_x$ and $p_y = a_y - e_y$, detect in expectation at least as many of the targets interior to the whole-image call as it does under (R) and (M), and use at most $\Gamma\,\Cmin$ tokens, $\Gamma = \chi\,(1 + p_x/(W - e_x))(1 + p_y/(H - e_y))$, excluding per-image overhead. The rounding factor $\chi \ge 1$ equals 1 for ideal views of whole patches, for which the cost approaches $\Cmin$ as the image grows.
\end{corollary}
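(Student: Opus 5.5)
The proof splits into a safety part and a cost part, and both reduce to results already stated.

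\emph{Safety.} I would reduce it to \cref{thm:exact}(a). Step 1 of \cref{alg:rule} sets $S = \max(S^\ast, \Sw) \ge \Sw$, and Step 2 cuts native pixels at the corresponding scale without further resizing. So each view shows a sub-region of the whole image at scale $s_i \ge s_{\mathrm{w}}$, which makes it a crop of the whole-image call, or a refinement if a single view covers everything. It remains to show that every centre interior to the whole-image call is interior to some view. Take such a centre $x$ with its box inside the image. Adjacent views overlap by $e_x \ge w$ horizontally (and by $e_y \ge h$ vertically), so along each axis some view contains the whole target interval. Indeed, the intervals $[x - w/2, x + w/2]$ of length $w \le e_x$ must fit in one of the windows $[jp_x, jp_x + a_x]$, because consecutive windows overlap by $e_x$. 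Since $K_i = V_i$ and $e_x \ge 2\tau$, the same view also contains $x$ at least $\tau$ inside its core. At the image border the whole-image interior condition supplies the missing margin. Applying \cref{thm:exact}(a) then gives the detection inequality for every $\mathcal{D}$ and weight $a$.

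\emph{Cost.} I would count the views and tokens directly. Along $x$, covering $[0, W)$ with windows of width $a_x$ at pitch $p_x$ takes $k_x = \lceil (W - e_x)/p_x \rceil \le (W - e_x)/p_x + 1$ views, and similarly along $y$. Each view costs $\lambda a_x a_y$ tokens up to the rounding to whole patches; $\chi$ absorbs that rounding. The total is therefore at most
\[
\chi\,\lambda a_x a_y\,\frac{W - e_x}{p_x}\,\frac{H - e_y}{p_y}\,\Bigl(1 + \frac{p_x}{W - e_x}\Bigr)\Bigl(1 + \frac{p_y}{H - e_y}\Bigr).
\]
Comparing this with $\Cmin$ requires showing
\[
\lambda a_x a_y (W - e_x)(H - e_y)/(p_x p_y) \le \Cmin,
\]
that is, $a_x a_y/(p_x p_y) \le (1-\sigma)^{-2}$ for the largest views within the budget, $\lambda a_x a_y \approx N$.

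\emph{Main obstacle.} The hard step is this per-view efficiency inequality: $(1 - e_x/a_x)(1 - e_y/a_y) \ge (1-\sigma)^2$ subject to $\lambda a_x a_y \le N$. The minimum in \cref{thm:exact}(c) presumably comes from the AM--GM optimum with $e_x/a_x = e_y/a_y = \sigma$, i.e. views proportional to the margins. I would first try to show that the algorithm's "largest views" choice attains exactly that shape when views are ideal. For non-ideal shapes, I would fold the deviation into $\chi$, defining $\chi$ as the ratio of actual per-view efficiency to the optimum. This step has to be checked against the appendix definition of $\chi$; it may be that $\chi$ is meant to capture only patch rounding, in which case the aspect choice in Step 2 must be read as the optimal one. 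Finally, as $W, H \to \infty$ with the view size fixed, the factors $p_x/(W - e_x)$ and $p_y/(H - e_y)$ tend to $0$. Together with $\chi = 1$ for ideal views, this gives cost $\to \Cmin$.
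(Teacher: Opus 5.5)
Your safety half is the paper's argument. Each view is cut from native pixels at $S \ge \Sw$ without resizing, so it is a crop of the whole-image call. The interior sets of the views are $p_x \times p_y$ rectangles laid from the corner of $\Jw$ at pitches $p_x, p_y$, so they cover $\Jw$, and \cref{thm:exact}(a) finishes. One slip needs repair. A window containing the box interval $[x - w/2, x + w/2]$ need not put $x$ at least $\tau$ inside it when $w < 2\tau$. Fit instead the interval $[x - e_x/2, x + e_x/2]$ of length $e_x$; this works precisely because consecutive windows overlap by $e_x$.

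The cost half has a genuine gap, in the step you call the main obstacle. The inequality $(1 - e_x/a_x)(1 - e_y/a_y) \ge (1-\sigma)^2$ under $\lambda a_x a_y \le N$ is false except with equality. The proof of \cref{thm:exact}(c) gives the reverse. With $u = e_x/a_x$, $v = e_y/a_y$ and $\bar e = \sqrt{e_xe_y}$, one has $(1-u)(1-v) \le (1-\sqrt{uv})^2$ and $\sqrt{uv} = \bar e/\sqrt{a_xa_y} \ge \bar e\sqrt{\lambda/N} = \sigma$. So no view protects more area per token than $\Cmin$ allows. If $\chi$ is read as patch rounding only, your target $\lambda a_xa_y(W-e_x)(H-e_y)/(p_xp_y) \le \Cmin$ holds only for the ideal shape, and then with equality.

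The paper instead defines $\chi = T(1-\sigma)^2/(\lambda\,p_xp_y)$, with $T$ the actual tokens of one view, so $\chi$ absorbs shape as well as rounding. The cost bound is then immediate from $\lceil z\rceil \le z+1$ applied to the $\lceil (W-e_x)/p_x\rceil\lceil (H-e_y)/p_y\rceil$ views. The real content lies in two facts:
\begin{itemize}
\item $\chi \ge 1$ is exactly the per-call bound $T \ge \lambda\,p_xp_y/(1-\sigma)^2$ from the proof of \cref{thm:exact}(c), applied to one view. This is your hard step, reversed.
\item $\chi = 1$ at $a_x = e_x/\sigma$, $a_y = e_y/\sigma$ with sides of whole patches, because then $T = \lambda e_xe_y/\sigma^2 = N$ and $p_xp_y = e_xe_y(1/\sigma - 1)^2 = N(1-\sigma)^2/\lambda$.
\end{itemize}
Your fallback of folding the deviation into $\chi$ amounts to this definition. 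As written, though, you never establish $\chi \ge 1$, and the direction you expect for the per-view inequality would make it at most $1$.
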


Hence $\Cmin \le C_{\mathrm{rule}} \le \Gamma\,\Cmin$ in image tokens, with $\Gamma$ set by the image, the target box, $S$ and the interface. Step 3 has two forms (\supp{I}). Sub-views \emph{appended} to a view cannot lower hits or matched true positives on any image, with no assumption, so the choice of which views to subdivide may depend on their content or answers; this form costs \CstepAppMin{} to \CstepAppMax{} times a view's tokens for resolution-scaled models (\CstepGridApp{} for fixed grids). Sub-views that \emph{replace} a view satisfy \cref{thm:exact}(a) when the subdivision is fixed in advance by the image's size, at \CstepRepMin{} to \CstepRepMax{} times its tokens. \Cref{tab:main} evaluates the rule without step 3.

\paragraph{Choosing the protection point.} $S^\ast = 1.5$ lies above the half-recall point of every model we tested, and \cref{cor:safe} holds for any $S^\ast$. The assumptions fix the sign of the gain, not its size:

\begin{proposition}[Attainable recall gains]
\label{prop:nonid}
Let target centres be more than $2\tau$ apart, with a continuous intensity. For any $0 \le r_{\mathrm{w}} \le r_{\mathrm{d}} \le 1$ some $P \in \mathcal{M}_{R,M}$ gives the whole-image call recall $r_{\mathrm{w}}$ and a covering split of \cref{thm:safe}, none of whose views is the whole image at the whole-image call's scale, recall $r_{\mathrm{d}}$ (weighted ratios of expected counts over the targets interior to the whole-image call).
\end{proposition}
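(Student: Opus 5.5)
The proof is a construction. For the given $0 \le r_{\mathrm{w}} \le r_{\mathrm{d}} \le 1$ I will build one response law $P$ that depends on a call only through a scalar ``quality'' of the call. I will check (R) and (M) directly from monotonicity of that quality, and then read off the two recalls.

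First I fix the geometry. Let $c_{\mathrm{w}}$ be the whole-image call at scale $s_{\mathrm{w}}$, and let $c_1,\dots,c_k$ be the covering split of \cref{thm:safe}. None of the $c_i$ equals $c_{\mathrm{w}}$, so each $c_i$ either has $s_i > s_{\mathrm{w}}$ or shows a proper sub-region at scale $s_{\mathrm{w}}$. I define $q(c) = 1$ if $c$ has $s_c > s_{\mathrm{w}}$ or its view is a proper subset of the image, and $q(c) = 0$ otherwise. Then $q$ is non-decreasing under refinement and under same-scale cropping.

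Next I define the model. For a call $c$ and a target centre $x \in J_c$, let $c$ return a point exactly at $x$ independently with probability $\rho(c) = r_{\mathrm{w}} + (r_{\mathrm{d}} - r_{\mathrm{w}})\,q(c)$. The call returns nothing else.

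\begin{itemize}
\item \emph{Properties.} Since $\rho$ is non-decreasing under the two partial orders, $\nu_{c'}(U) \ge \nu_c(U)$ for $U \subseteq J_c \cap J_{c'}$. This holds because $\nu_c(U) = \rho(c)\,\mathbb{E}[a(X)\,|Y(X)\cap U|]$, so $P \in \mathcal{M}_{R,M}$.
\item \emph{Whole-image recall.} The whole-image call detects each interior target with probability exactly $r_{\mathrm{w}}$.
\item \emph{Split recall.} Each target interior to $c_{\mathrm{w}}$ is interior to some $c_i$, by the covering hypothesis through the argument of \cref{thm:safe}, and there it is hit with probability $r_{\mathrm{d}}$.
\end{itemize}

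The main obstacle is that a target interior to several overlapping views is detected with probability $1-(1-r_{\mathrm{d}})^{j}$, not $r_{\mathrm{d}}$. To fix this, I will couple the responses so that each target's detection events are identical across calls. Concretely, I draw one latent mark $u_x \sim \mathrm{Unif}[0,1]$ per target as part of the image $X$ (or, equivalently, a deterministic function of $X$ using the continuous intensity). Call $c$ then returns $x$ iff $u_x < \rho(c)$.

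Each call's response law is still a legitimate law given $X$. With the marks folded into $X$ it is in fact deterministic, so the $\nu$ comparisons are unchanged. The union of detections over the views is then exactly $\{x : u_x < r_{\mathrm{d}}\}$, so the split's weighted recall is $r_{\mathrm{d}}$.

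Duplicate points from overlapping views all land at the same $x$. They therefore do not change hit counts, and separation greater than $2\tau$ ensures that a returned point hits only its own target. The continuous intensity is used so that the marks $u_x$ can be generated from $X$ without atoms, e.g.\ from the fractional positions. Finally, the weighted ratios of expected counts equal $r_{\mathrm{w}}$ and $r_{\mathrm{d}}$ for every weight $a$.
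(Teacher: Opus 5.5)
Your threshold model is the paper's. $\rho(c)$ equals $r_{\mathrm{w}}$ exactly when the call shows the whole image at scale at most $s_{\mathrm{w}}$, which is the paper's $\theta(c)$, and your checks of (R), (M) and $q(c_i) = 1$ go through. The gap is in the step you flag yourself, the overlaps. A model here is a family of per-call response laws $P_X(\cdot \mid c)$ for the given $\mathcal{D}$ and $a$. It supplies no randomness shared across calls. You also cannot enlarge the image with latent marks, because $\mathcal{D}$ is fixed. If randomness is drawn per call you are back to $1-(1-r_{\mathrm{d}})^j$, so detection of each target must be a deterministic function of $X$, the same in every call that contains it; this is what the paper does. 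With shared i.i.d.\ marks the continuity hypothesis would be superfluous, which signals that the route lies outside the framework: on a single image with $n$ targets, a deterministic model can only give recalls that are multiples of $1/n$.

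Your deterministic fallback, marks ``from the fractional positions'', does not give the ratios. Being atomless is not enough. The normalized $a$-weighted intensity of the targets interior to $c_{\mathrm{w}}$ must be pushed forward to exactly the uniform law on $[0,1]$, and fractional parts of coordinates are not uniform in general.

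The missing ingredient is the probability integral transform. Set $u(x) = F(x_1)$ with $F(z) = \mathbb{E}[a\,n_{\mathrm{w}}(z)]/\mathbb{E}[a\,n_{\mathrm{w}}]$, where $n_{\mathrm{w}}(z)$ counts the targets interior to $c_{\mathrm{w}}$ whose first coordinate is at most $z$. The statement's ``continuous intensity'' is continuity of this $F$. It gives $\mathbb{E}[a\,\#\{x \in J_{c_{\mathrm{w}}} : u(x) < r\}] = r\,\mathbb{E}[a\,n_{\mathrm{w}}]$ for every $r \in [0,1]$. Take $p_c(x) = \mathbb{1}[u(x) < \rho(c)]$ on $J_c$. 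Then (R) and (M) hold target by target, the whole-image call detects $\{u < r_{\mathrm{w}}\}$, and the covering split detects exactly $\{u < r_{\mathrm{d}}\}$ however its views overlap.

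Because $u$ now depends on $\mathcal{D}$ and $a$, your closing claim that the ratios are exact for every weight at once is lost. The statement only needs it for the given weight.
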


Whether raising $S^\ast$ from 1.5 to 3 pays therefore depends on the model; at $S = \max(S^\ast, \Sw)$ it multiplies the area term of the cost by up to four (\cref{thm:exact}c); see \cref{fig:map}.

\subsection{Evidence across all data}

\begin{figure*}[t]
\centering
\includegraphics[width=\textwidth]{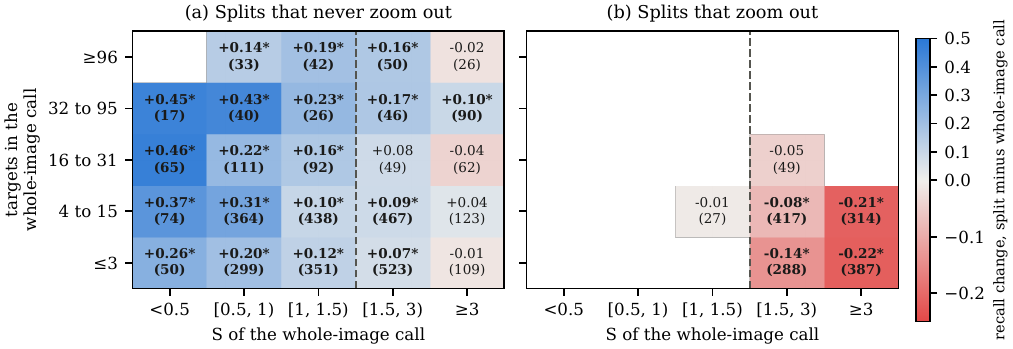}
\caption{Where splitting helps and where it hurts. Mean recall change, split minus whole, over every pair of a whole-image call and a split in our data, binned by the whole-image call's $S$ and number of targets; pairs are grouped by geometry only (does any view zoom out?). Drawings and floor plans use the tiles of the 44 px recipe and its fixed-grid counterpart, natural images same-scale $2 \times 2$ tiles. Recall: matched recall over all targets of the image, pooled over the eight models (Gemini 3.8 at three budgets) and all 40 LegendSpot drawings, the six prompt-writing sheets included. The recipe's tiles do not overlap, so panel (a) checks the no-zoom-out part of the condition. Dashed line: $S = 1.5$. Pairs per cell in parentheses; blank: fewer than 10 pairs; ${}^\ast$: 95\% bootstrap interval excludes 0 (images resampled; mosaics by source drawing). Details in \supp{H}.}
\label{fig:map}
\end{figure*}

Both predictions show in all our data (\cref{fig:map}). \emph{Splits that never zoom out} gain: every cell with $S < 1.5$ gains significantly, and no cell is significantly below zero. The gain grows as $S$ falls, \CmapLowSMin{} to \CmapLowSMax{} for $S < 0.5$, and, once $S$ is ample, as targets multiply: for $S$ between 1.5 and 3 it rises from \CmapMidSMin{} to \CmapMidSMax. This band comes mainly from Gemini 3.8, Gemma, Qwen and Claude (\CmapBandMin{} to \CmapBandMax); the GPT models are near 0. \emph{Splits that zoom out} lose in every cell, by \CmapZoomMin{} to \CmapZoomMax, significantly in \CmapZoomSig{} of \CmapZoomCells.

\paragraph{Controlled test.} On 12 controlled $2{,}048^2$ images, 8 models and 5 relations, every pair of calls meets the definitions exactly. \XasSup{} of the \XasN{} orderings (pairs of calls whose detections the assumptions order) hold significantly and none is violated. At fixed $S$, safe tiles, with less content per call, raise recall by \XasThmMin{} to \XasThmMax, significantly for \XasThmSup{} of \XasModels{} models (\supp{F}). Of the \XtcN{} pairs of calls that the assumptions order in our other experiments, \XtcSup{} hold significantly and \XtcViol{} are violated (below).

\paragraph{Robustness and F1.} When the assumptions hold only approximately, the split's recall is at least the whole-image call's minus the measured (R) and (M) defects, the uncovered share and any loss from merging duplicates; no term grows with the number of views, and some model attains the bound. A split keeps F1 at least that of the whole-image call if and only if its added false points are at most $\rho_{\mathrm{w}} = (n + \mathrm{FP}_{\mathrm{w}})/\mathrm{TP}_{\mathrm{w}} \ge 1/\mathrm{Rec}_{\mathrm{w}}$ times its added hits, where $\mathrm{TP}_{\mathrm{w}}$, $\mathrm{FP}_{\mathrm{w}}$ and $\mathrm{Rec}_{\mathrm{w}}$ are the whole-image call's true positives, unmatched points and recall (\supp{A}).

\subsection{Where protection ends}

\paragraph{Boundary one: the assumptions fail.} The guarantee covers only input paths on which (R) and (M) hold: given more pixels of the same content, a model may process the input differently. All \XtcViol{} violations we found are of this kind, OpenAI models sent larger images than our standard input (one example in \cref{sec:intro}). Before trusting a new path, such as a high-resolution mode or uploading a large image at native size, a caller should test (R) with a few pairs of calls on the same region at different token counts, or split the image by the rule so that every view stays within the budget.

\paragraph{Boundary two: the condition fails.} A location that no qualified view covers is unprotected, and a view that zooms out is not qualified (\cref{thm:exact}b): the guarantee is gone. Tiling that first rescales the image to a pixel target zooms out whenever the whole image already gives $S$ above the recipe's, which depends on the image's size and is visible only by computing $S$ (\cref{fig:intro}c, \cref{fig:map}b). Choosing $S = \max(S^\ast, \Sw)$ excludes this by construction.

\section{The rule in practice, and who splits}
\label{sec:practice}

\begin{table*}[t]
\centering
\small
\begin{adjustbox}{max width=\linewidth}\begin{tabular}{llcccccc}
\toprule
Data & Model & Recall & Precision & F1 & Tokens & Recall & Cost \\
 & & whole $\to$ rule & whole $\to$ rule & whole $\to$ rule & rule/whole & rule $-$ 44 px & $/C_{\min}$ \\
\midrule
LegendSpot & GPT-6.1 & 0.97 $\to$ 1.00 & 0.93 $\to$ 0.97 & 0.95 $\to$ 0.98 & 8.6 & +0.00 & 3.5 \\
LegendSpot & Claude Opus 5.5 & 0.83 $\to$ 0.92 & 0.82 $\to$ 0.92$^{*}$ & 0.81 $\to$ 0.90 & 5.2 & $-$0.05 & 2.6 \\
LegendSpot & Gemini 3.8 & 0.62 $\to$ 0.90$^{*}$ & 0.72 $\to$ 0.88$^{*}$ & 0.65 $\to$ 0.87$^{*}$ & 7.6 & $-$0.04 & 1.6 \\
LegendSpot & Gemini 4 & 0.85 $\to$ 0.98$^{*}$ & 0.89 $\to$ 0.93 & 0.85 $\to$ 0.94$^{*}$ & 7.5 & +0.00 & 1.6 \\
LegendSpot & Qwen3-VL & 0.37 $\to$ 0.56$^{*}$ & 0.75 $\to$ 0.93$^{*}$ & 0.37 $\to$ 0.56$^{*}$ & 6.1 & $-$0.07 & 2.0 \\
\midrule
Floor plans & GPT-6.1 & 0.63 $\to$ 0.61 & 0.80 $\to$ 0.84 & 0.63 $\to$ 0.64 & 4.4 & n/a & 3.3 \\
Floor plans & Claude Opus 5.5 & 0.51 $\to$ 0.56 & 0.81 $\to$ 0.79 & 0.57 $\to$ 0.59 & 3.1 & n/a & 2.0 \\
Floor plans & Gemini 3.8 & 0.25 $\to$ 0.43$^{*}$ & 0.58 $\to$ 0.78$^{*}$ & 0.32 $\to$ 0.50$^{*}$ & 6.8 & $-$0.13$^{*}$ & 1.6 \\
Floor plans & Gemini 4 & 0.50 $\to$ 0.57 & 0.70 $\to$ 0.83$^{*}$ & 0.56 $\to$ 0.63$^{*}$ & 6.7 & $-$0.01 & 1.6 \\
Floor plans & Qwen3-VL & 0.05 $\to$ 0.12$^{*}$ & 0.28 $\to$ 0.46$^{*}$ & 0.06 $\to$ 0.15$^{*}$ & 3.5 & $-$0.11$^{*}$ & 2.0 \\
\bottomrule
\end{tabular}
\end{adjustbox}
\caption{The rule ($S^\ast = 1.5$, without step 3 of \cref{alg:rule}) against the whole-image call on 34 LegendSpot drawings and 60 floor-plan mosaics. Recall, precision and F1: means per image under one-to-one matching; duplicates in overlaps are not merged and count against precision, which omits images where either call returned no point (Qwen3-VL: 17 and 47; others at most 5). Tokens: billed input tokens (prompts and reference image included), rule over whole-image call. Recall $-$ 44 px: difference to the 44 px recipe (n/a: not run). Cost $/\Cmin$: the rule's image tokens over $\Cmin$, median. ${}^\ast$: 95\% paired bootstrap interval excludes 0. Gemini 4's whole-image call asks for 0 to 1000 coordinates (\supp{B}).}
\label{tab:main}
\end{table*}

The rule raises recall most where the whole image gives targets the fewest tokens, at a cost known in advance, and zoom agents that succeed do so by splitting the image themselves. We compare it with the whole-image call and the 44 px recipe for GPT-6.1, Claude Opus 5.5, Gemini 3.8, Gemini 4 and Qwen3-VL, \CtCells{} model and data settings in all.

\paragraph{Recall and cost (\cref{tab:main}).} Recall rose significantly in \CtSigRec{} settings: Gemini 3.8 on LegendSpot from \CtLsGemThreeEightRecW{} to \CtLsGemThreeEightRecR{} (\CtLsGemThreeEightRel\% relative), Gemini 4 on LegendSpot from \CtLsGemFourRecW{} to \CtLsGemFourRecR{} (\CtLsGemFourRel\%), Qwen3-VL on LegendSpot from \CtLsQwenThreevlRecW{} to \CtLsQwenThreevlRecR{} (\CtLsQwenThreevlRel\%), Gemini 3.8 on floor-plan mosaics from \CtFpGemThreeEightRecW{} to \CtFpGemThreeEightRecR{} (\CtFpGemThreeEightRel\%), and Qwen3-VL on floor-plan mosaics from \CtFpQwenThreevlRecW{} to \CtFpQwenThreevlRecR{} (\CtFpQwenThreevlRel\%); relative gains use unrounded recalls. Precision rose significantly in \CtSigPrec{} of \CtCells{} settings, and so did F1 (\CtSigFOne{} settings). The price is \CtTokMin{} to \CtTokMax{} times the whole-image call's tokens, \CtTokDMin{} to \CtTokDMax{} times the 44 px recipe's, and \CtCminMin{} to \CtCminMax{} times $\Cmin$. Across the \CgainN{} settings, the gain falls as the whole-image call's median $S$ rises (Spearman $\rho = \CgainRhoSw$, $p = \CgainPSw$). The cost was fixed before the first call: in all \CcostPairs{} pairs of a drawing and a model, the rule sent exactly the planned views, and every view of a drawing was billed the same number of input tokens. The tokens come from raising $S$ to 1.5, and the gap to $\Cmin$ from images only a few views wide (\cref{cor:safe}); \CtOneViewGpt{} of GPT-6.1's 60 floor-plan mosaics need a single view.

\paragraph{Small gains point to step 3.} On floor plans the 44 px recipe recovers more targets than the rule for Gemini 3.8 and Qwen3-VL (\CtFpGemThreeEightdRecD{} and \CtFpQwenThreevldRecD). There the recipe cuts much smaller views, and the rule uses as little as \CtTokDMin{} of its tokens: for Qwen3-VL this mainly lowers $L$, and for the fixed-grid Gemini 3.8 a smaller view also raises $S$. On the same grid, Gemini 4, which needs much less $S$ (\cref{sec:why}), shows only \CtFpGemFourdRecD, not significant. The rule protects within an explicit factor of the lower bound; where $L$ binds, step 3 splits its views further at the same scale, at the cost given in \supp{I}.

\paragraph{Methods without a VLM, and natural images.} Template matching reaches \FTMdevAll{} F1 on the 34 drawings, and the best tiles (the 44 px recipe and its fixed-grid counterpart) score above it (Claude Sonnet 5.5 \FDsonnetFiveFiveAll, Gemini 3.8 \FDgemThreeEightAll). On the 10 further drawings whose reference symbols come from a symbol library, template matching falls to \XrdFTM{} and the VLMs are at least 0.30 higher. On FSC-147, same-scale $2 \times 2$ tiles raise small-object F1 for GPT-5.6 by \UfscODgptFiveSixsmallTU{} and for Claude Sonnet 5.5 by \UfscODsonnetFiveFivesmallTU{}: lowering $L$ at the same scale pays on natural images too.

\paragraph{Who splits: zoom agents.} With matched reasoning, GPT-5.6's zoom agent reaches \XnFAGgptFiveSixSheets{} F1 on full sheets against \XnFDRgptFiveSixSheets{} for its tiles (\XnHOnegptFiveSix, Holm $p = \XnXNineHOnegptFiveSixHolm$); Claude Sonnet 5.5's agent reaches \XnFAGsonnetFiveFiveSheets{} against \XnFDRsonnetFiveFiveSheets, not significant. In the ablation that changes one step at a time (\cref{sec:agents}), $S$ drives GPT-5.6's gain, and native pixels add \XnHThreesonnetFiveFiveAGOVZ{} for Claude. GPT-5.6 used \XagZoomsgptFiveSix{} zooms per drawing on average and on several sheets first laid a regular grid of crops over the sheet: the agent splits the image itself, and \cref{prop:frontier} bounds its cost whenever its model's recognition is graded. Over all drawings, the Gemini 3.8 and Qwen3-VL agents fail (\XagFgemThreeEightAll{} and \XqAgF): they count as accurately as their tiles, but only \CagLocGem\% and \CagLocQwen\% of their points lie within $\tau$ of an instance, against \CtileLocGem\% and \CtileLocQwen\% for their tiles, whose points the caller maps back (\supp{J}).

\section{Where the next models should improve}
\label{sec:future}

The interface fixes, for every model, the $S$ one call can reach and the minimum cost of seeing an image at a given $S$. The model sets the rest: the $S$ it needs, how much one call can handle, whether it is monotone, and how it reports positions. Progress pays on these model parameters, and \cref{tab:limits} sorts every limit by the evidence behind it and by what the newest generation changed.

\begin{table*}[t]
\centering
\small
\begin{tabular}{p{2.1cm}p{5.8cm}p{2.6cm}p{3.8cm}}
\toprule
Limit & Statement & Support & Newest generation \\
\midrule
Information & Detail lost in the vendor's resize cannot be recovered; overlays and interpolated zoom add none (\cref{prop:dpi}); a native crop can add it. & Proposition; agent ablation (\cref{sec:agents}) & Unchanged; whether it binds depends on the recognizer. \\
Recognition & Recall rises gradually with $S$; the $S$ a model needs is a model parameter. & Controlled sweeps; dose response (\cref{sec:S-evidence}) & Falls: on the same token grid Gemini 4 needs less than half the $S$ of Gemini 3.8. \\
Minimum cost & Seeing an image at $S^\ast$ costs at least $(WH/m^2){S^\ast}^2$ tokens; every protected split and every search policy with graded recognition pays a bound of this form (\cref{prop:geometry,thm:exact,prop:frontier}). & Propositions; agent logs & Formula unchanged; its value falls with the $S$ a model needs, and a zoom agent pays it itself. \\
Content per call & Recall falls with the content of one call: more targets, more area, more tiles per image, more blocks around a fixed block. & Controlled sweeps; packed tiles; mosaics (\cref{sec:L-evidence}) & Grows: GPT-6.1 holds at \CgenGptSixOneMax{} targets per view at $S = 1.5$; drawing content still binds. \\
Monotonicity & (R) and (M) carry every guarantee of \cref{sec:safe}. & Controlled test; all paired calls (\cref{sec:assume}) & Hold on standard input for every model in the controlled test (Boundary one in \cref{sec:safe}). \\
Interface & Coordinate frame, scale and format differ by vendor, mode and even answer; agents add a crop-to-overview conversion. & Probes; agent trajectories (\cref{sec:coords,sec:agents}) & Not improving; must be checked per model and mode. \\
Verification & With per-item error rate $\varepsilon$, all $n$ items are right with probability at least $1 - n\varepsilon$, and exactly $(1-\varepsilon)^n$ for independent errors, so exact large counts need $\varepsilon \ll 1/n$; per-item locations make results checkable. & Arithmetic & Lower $\varepsilon$; locations still needed for review. \\
Localization & A model that knows only a target's token cell places it within $\tau$ with probability at most $\pi S^2/4$ for $S \le 1$ (\cref{prop:loc}). & One-dot probe & No longer binds: newer models place a dot within about 1\,px. \\
\bottomrule
\end{tabular}
\caption{Limits on finding many small objects with VLMs, the evidence behind each, and what the newest generation changed.}
\label{tab:limits}
\end{table*}

\paragraph{Lower the $S$ a target needs, not only the budget.} By \cref{prop:geometry}(a), the $S$ of a whole-image call grows only with the square root of the budget. The coverage bound of \cref{prop:geometry}(b) scales with the square of the $S$ that recognition requires, and so does the frontier of \cref{prop:frontier} for recognition curves of a fixed shape, $f(S) = \min\{1, (S/S_0)^\gamma\}$ with $\gamma \ge 2$ (\cref{sec:proofs-search}); halving that $S$ quarters them, for tiles and agents alike. The protection minimum $\Cmin$ of \cref{thm:exact}(c) adds the overlap factor $1/(1-\sigma)^2$, which also shrinks as $S$ falls, and the cost of a concrete split follows from \cref{cor:safe}, with its overlaps, whole views and per-image overhead (\cref{sec:proofs-safe}). The newest generation shows that training moves this parameter (\cref{sec:why}); recognition at low token density is the improvement with the largest effect on cost.

\paragraph{Raise what one call can handle.} Within one call, recall falls as content grows. On floor-plan mosaics, GPT-5.6's F1 on a fixed first block changes by \XfBKDAOnegptFiveSix{} when eight blocks are added around it (\cref{sec:firstblock}); on controlled images, (M) crops at equal $S$ raise its recall by \XasMGptFiveSix{} (\cref{tab:assume}). Each generation handles more well-separated targets (\cref{sec:why}). Enumerating many objects in one long answer is the weak point, and region-wise decoding, structured outputs per region and dense output heads are natural candidates, which the nested pairs of \cref{tab:assume} can test. \Cref{thm:capacity} gives the unit: each hit costs about $\log_2$ of the ratio between the area per target and the tolerance disk, in bits, so progress is measured by the bits about positions that one answer can carry.

\paragraph{Be monotone by design, and test it.} A model that satisfies (R) and (M) inherits \cref{thm:safe} and \cref{cor:safe}, so callers can split it safely without tuning per model; Boundary one of \cref{sec:safe} names the input paths on which current models violate them. Developers can train for consistency across crops and scales of the same image, so that a target found in the whole image is also found in its crop and at a higher scale, and test it with nested pairs as in \cref{tab:assume}, on twelve controlled images per model.

\paragraph{Make the interface explicit, and prefer native pixels.} Returning coordinates in the frame of the image sent, reporting the effective scale and token grid, and not resizing silently are interface decisions rather than research problems. They decide whether a caller can compute $S$ before a call and apply the guarantees. By \cref{prop:dpi}, an overlay drawn on the sent image and interpolated zoom add no Shannon information, while a crop cut from the native image can add information about the targets (\cref{prop:zoom}). Zoom tools should return native pixels and convert coordinates themselves, which takes the conversion that two of four agents fail at (\cref{sec:agents}) out of the model's hands.

\paragraph{What to measure for a new model.} The parameters that matter are few and cheap to measure, and none needs labelled images of the target task: the budget and token size from billed tokens on blank images (\cref{sec:geometry}); the coordinate frame, scale and format from a one-dot probe and from images above the budget (\cref{sec:coords}); the $S$ recognition needs and the load one view carries from controlled images (\cref{sec:S-curves,sec:thumb}); (R) from a few pairs of calls on the same region at different token counts (\cref{sec:safe}); and, for agents, the conversion from crop to overview coordinates.

\section{Conclusion: back to computer vision}
\label{sec:conclusion}

VLMs miss small objects when an object gets too few visual tokens or a call must search and report too much. Splitting is safe exactly when every location inside the whole-image call lies inside some view that does not zoom out, which views that never zoom out and overlap by one object guarantee, for every model monotone on its input path; it has a minimum token cost, and a simple rule pays it up to an explicit factor. Across \CscaleModels{} models, including the newest, the rule raised recall most where the whole image gives targets the fewest tokens, at a cost known before the call.

Tiling, a classical tool of computer vision, does not expire with VLMs: what one call cannot afford to see at the needed $S$, or cannot handle, must be split. What changes is who splits: the caller, a zoom tool, or the architecture. Future models can need less $S$, handle more per call, stay monotone across crops and scales, and declare their coordinate frame and scale.

\bibliography{main}
\bibliographystyle{acl_natbib}

\appendix
\section{Proofs}
\label{sec:proofs}

This appendix states every result of the main text in full and proves it. Statements in the main text are short forms of the ones below; where they differ, the full form here is the one proved.

\subsection{Setting and definitions}
\label{sec:defs}

\paragraph{Images, targets and tokens.} An image $X = [0,W) \times [0,H)$ has $W \times H$ native pixels and contains $n$ instances of a target whose box is $w \times h$ pixels; $m = \sqrt{wh}$ is the target scale and $\tau = m/2$ the matching tolerance. A \emph{call} sends the model one image derived from $X$. Its \emph{effective scale} $s$ is the number of pixels the model receives per native pixel, after every resampling by the caller and by the vendor. The model divides the image it receives into tokens of $t \times t$ pixels, with $t$ fixed for each interface, and accepts at most $N$ tokens per image. Resolution-scaled interfaces (GPT, Claude, Qwen) resize an image when it exceeds $N$ tokens or a long-side cap; fixed-grid interfaces (Gemini, Gemma) pad an image to a square and resize it so that its side spans $g$ tokens, whatever its pixel size. A target therefore spans $S = ms/t$ tokens per side (\cref{eq:S}).

\paragraph{Calls, views and cores.} A call $c$ consists of a source image, a rectangular region $V_c$ of it (the \emph{view}), an effective scale $s_c$, a kept core $K_c \subseteq V_c$ (a rectangle, or a finite union of rectangles), a blank padding $P_c$, and a protocol $\pi_c$: the prompt and any marking of the core. Views and cores are half-open like $X$. The source is a fixed image derived from $X$ without reference to the targets: the native image, a lower-resolution copy such as an overview, or a composite such as several regions packed into one image; it is blank outside $X$. The input of $c$ is the region $V_c$ of the source resampled to scale $s_c$ and surrounded by $P_c$; we treat the order of cropping and resampling as immaterial, which ignores differences within one resampling kernel of the region's border. A \emph{model} is any map from calls, each together with the image it is derived from, to probability laws of returned point sets: a family $P_X(\cdot \mid c)$ of response laws, which may depend on the image $X$ and on the call. Points outside $K_c$ are discarded and the rest are mapped to native coordinates.

\paragraph{Detection and interior targets.} For a target centred at $x$, let $H_c(x) = 1$ if $c$ returns a kept point within $\tau$ of $x$ and $0$ otherwise, and let $p_c(x)$ be the probability that $H_c(x) = 1$. A set of calls \emph{detects} $x$ if one of its calls returns such a point, and its \emph{hit recall} is the fraction of targets detected. When target centres are more than $2\tau$ apart, each returned point lies within $\tau$ of at most one target, and hit recall equals recall under a maximum one-to-one matching. A target is \emph{interior} to $c$ if its box lies inside $V_c$ and its centre lies at least $\tau$ inside $K_c$. Because every target has the same box, this depends only on the centre, and $J_c$ denotes the set of centres interior to $c$. We write $B_0 = [-w/2, w/2] \times [-h/2, h/2]$, so the box of a target centred at $x$ is $x + B_0$, and $K \oplus B_0$ is the core enlarged by half a target box on each side; $X \ominus B_0$ is the set of centres whose box lies in $X$.

\paragraph{Averages over images.} Let $\mathcal{D}$ be any distribution of images: the images of an application, a benchmark, or a single image. Calls, and the regions on which we compare them, are specified by rules that may depend on the image's size but not on its targets. Thus $V_c$, $K_c$, $J_c$, and a \emph{region} $U$ (a measurable set of centres) are functions of the image's size; unions, intersections and differences of regions are regions. A split may use a number of calls that depends on the image's size. Let $Y(X)$ be the set of target centres of $X$ and $n(X) = |Y(X)|$. A \emph{weight} is a function $a(X) \ge 0$ of the image alone, with $\mathbb{E}[a(X)\,n(X)] < \infty$. For a region $U$,
\[
\nu_c(U) = \mathbb{E}\Big[a(X) \sum_{x \in Y(X) \cap U \cap J_c} H_c(x)\Big],
\]
the expectation taken over $\mathcal{D}$ and the model's randomness. With $a = 1$, $\nu_c(U)$ is the expected number of targets with centres in $U \cap J_c$ that $c$ detects; with $a = 1/n(X)$ (and $a = 0$ when $n(X) = 0$), $\nu_c(J_c)$ is the mean per-image fraction of targets that are interior to $c$ and detected, which is the mean recall per image when every target is interior. $\nu_c$ is additive over disjoint regions, and $\mu(U) = \mathbb{E}[a(X)\,|Y(X) \cap U|]$. All calls of one comparison share $Y(X)$ and $a$. The proofs below use only additivity, the assumptions for the same $a$, and constructions that hold target by target, so every result holds for every weight; we write $\nu$ without the weight.

\paragraph{Three relations between calls.} Call $c'$ is a \emph{crop} of $c$ if it shows a sub-region $V_{c'} \subseteq V_c$ of a source that agrees with the source of $c$ on $V_{c'}$, at scale $s_{c'} \ge s_c$, with the protocol $\pi_{\mathrm{crop}}$ with which we show crops; padding may differ. Any part of a view outside $X$ is blank and counts as padding, and $V_{c'}$ and $K_{c'}$ are then taken intersected with $X$. For packed tiles, agreement is up to the translation that places a tile in the composite. Call $c'$ \emph{refines} $c$ if both show the same region of the same source with the same core and protocol and $s_{c'} \ge s_c$; padding may differ. Call $c$ is a \emph{garbling} of $c'$ if both show the same region with the same core, padding, protocol and scale, and the source of $c$ is a fixed degradation of the source of $c'$ that does not depend on the targets, such as a lower-resolution copy.

\paragraph{Interface assumption.} The vendor does not resize an image again if its pixel size, after blank padding to a multiple of $t$, equals the size to which the vendor resized an earlier image (for fixed grids, the resize to $g$ tokens is the only one applied). A call can then be shown at any effective scale up to the budget by choosing its padding.

\begin{restatedef}{Assumption}{ass:mono}{Monotonicity, full form}
For a distribution $\mathcal{D}$ and a weight $a$, for all calls $c, c'$ and every region $U \subseteq J_c \cap J_{c'}$:
(R) if $c'$ refines $c$, then $\nu_{c'}(U) \ge \nu_c(U)$;
(M) if $c'$ is a crop of $c$ at the same effective scale, then $\nu_{c'}(U) \ge \nu_c(U)$.
\end{restatedef}

\begin{assumption}[Information]
\label{ass:info}
(I) If $c$ is a garbling of $c'$, then $\nu_{c'}(U) \ge \nu_c(U)$ for every region $U \subseteq J_c \cap J_{c'}$.
\end{assumption}

$\mathcal{M}_{R,M}$ is the class of models that satisfy (R) and (M) for the given $\mathcal{D}$ and $a$; a model that satisfies them target by target lies in $\mathcal{M}_{R,M}$ for every $\mathcal{D}$ and $a$, and the necessity results below exhibit such models. (I) is needed only for zoom tools (\cref{prop:zoom}). When $\mathcal{D}$ is a single image whose targets have distinct centres, taking $U$ to be a single centre, and conversely summing over the finitely many targets in $U$, shows that \cref{ass:mono} is equivalent to $p_{c'}(x) \ge p_c(x)$ for every target interior to both calls. For a general $\mathcal{D}$ it is weaker: it asks for the inequality only on average over the images and over the targets in each region, which is the quantity an experiment estimates, and it allows individual targets and images to go the other way. Every result below holds for whichever $\mathcal{D}$ the assumption is made for. (I) is the behavioural counterpart of \cref{prop:dpi}(b): when the input of $c$ is a function of the input of $c'$, the Bayes risk of $c'$ is at most that of $c$ for every fixed loss; for a VLM it is an assumption.

\subsection{The interface: Propositions~\main{prop:geometry} and~\main{prop:dpi}}
\label{sec:proofs-interface}

\begin{restate}{Proposition}{prop:geometry}{Interface ceiling and price of seeing, full form}
(a) A resolution-scaled call that shows the whole image gives $S \le m\sqrt{N/(WH)}$; a fixed-grid call gives $S = mg/\max(W,H)$ at every caller scale.
(b) Suppose every view's kept core has area at most $\kappa^2$ times the view's, $0 < \kappa \le 1$. Covering the image with kept cores of views shown at $S \ge S^\ast$ then requires at least $WH{S^\ast}^2/(\kappa^2 m^2)$ tokens, hence at least $WH{S^\ast}^2/(\kappa^2 m^2 N)$ calls (with $N = g^2$ for fixed grids).
(c) A fixed-grid view whose longer side spans $\ell$ native pixels gives $S = mg/\ell$, and $k$ views cost $kg^2$ tokens. For a resolution-scaled model, let views of native sizes $a_i \times b_i$, $i = 1, \dots, k$, each within the budget, be shown at a common scale $s$, with union $R$. Excluding a fixed overhead of $\omega$ tokens per image, they cost at most
\[
\Big(\frac{s}{t}\Big)^2\big(|R| + O\big) + \frac{s}{t}\sum_i (a_i + b_i) + k
\]
tokens, where $O = \sum_i a_ib_i - |R| \ge 0$ is the area counted more than once. For a grid of $k_x \times k_y$ views over an $A_x \times A_y$ region, overlapping by $e_x$ and $e_y$, $O \le (k_x - 1)e_xA_y + (k_y - 1)e_yA_x + (k - 1)e_xe_y$ and $\sum_i (a_i + b_i) \le k_yA_x + k_xA_y + k(e_x + e_y)$, so every term beyond the tokens of the region itself grows at most linearly in $k = k_xk_y$.
\end{restate}

\begin{proof}
A target of scale $m$ spans $ms$ sent pixels, that is, $S = ms/t$ tokens. Equivalently, a native region of area $A$ shown at $S$ tokens per target side covers $AS^2/m^2$ tokens, because one native pixel covers $(s/t)^2 = S^2/m^2$ tokens; rounding patch counts up only increases the count.
(a) A resolution-scaled whole-image call satisfies $\lceil Ws/t\rceil\lceil Hs/t\rceil \le N$, hence $WH(s/t)^2 \le N$, $s/t \le \sqrt{N/(WH)}$, and $S \le m\sqrt{N/(WH)}$ for any aspect ratio. A fixed grid resizes the padded square so that its side spans $g$ tokens, so $s = gt/\max(W,H)$ whatever the caller sent, and $S = mg/\max(W,H)$.
(b) A view whose kept core has area $A_{\mathrm{core}}$ has area at least $A_{\mathrm{core}}/\kappa^2$ and costs at least $A_{\mathrm{core}}{S^\ast}^2/(\kappa^2m^2)$ tokens at $S \ge S^\ast$; for a fixed grid the view, padded to a square of side $a \ge \sqrt{|V|}$, costs $g^2 = a^2 S_V^2/m^2 \ge |V| S_V^2/m^2$ tokens by (c). The cores must cover area $WH$ and overlaps only add cost, so the total is at least $WH{S^\ast}^2/(\kappa^2 m^2)$ tokens; at most $N$ fit in one call.
(c) A $g \times g$ grid over a view of side $\ell$ native pixels has $s/t = g/\ell$, so $S = mg/\ell$, and every view costs $g^2$. For a resolution-scaled model, a view of $a_i \times b_i$ native pixels at scale $s$ within the budget costs $\lceil a_is/t\rceil\lceil b_is/t\rceil \le (a_is/t + 1)(b_is/t + 1) = (s/t)^2a_ib_i + (s/t)(a_i + b_i) + 1$ tokens. Summing over the views and writing $\sum_i a_ib_i = |R| + O$ gives the bound. For the grid, the views in one row have widths summing to $A_x + (k_x - 1)e_x$ and the rows have heights summing to $A_y + (k_y - 1)e_y$, so $\sum_i a_ib_i = (A_x + (k_x-1)e_x)(A_y + (k_y-1)e_y)$, and $O$ is this minus $A_xA_y$, which equals $(k_x-1)e_xA_y + (k_y-1)e_yA_x$ plus the product $(k_x-1)(k_y-1)e_xe_y \le (k-1)e_xe_y$ of the corner overlaps. The sums of widths and heights over all views are $k_y(A_x + (k_x-1)e_x)$ and $k_x(A_y + (k_y-1)e_y)$, which gives the second bound.
\end{proof}

Splitting a view at a fixed scale therefore changes the tokens of a resolution-scaled model only through the overlaps, the rounding to whole patches and the per-image overhead, each at most linear in the number of views; a fixed-grid model pays $g^2$ per view but gains $S$ as views shrink. Raising $S$ by a factor $a$ multiplies every term of (b) by $a^2$.

\begin{restate}{Proposition}{prop:dpi}{Information, full form}
Let $Y$ be the target set and $R_s$ the interface's deterministic preprocessing at scale $s$. (a) For any transform $\gamma$ of the native image that is deterministic or randomized independently of $Y$ given $X$, $I(Y; R_s(\gamma(X))) \le I(Y; X)$. (b) For any transform $\eta$ of the model input that is deterministic or uses randomness independent of $(X, Y)$, for example upsampling or an overlay drawn on the sent image, $I(Y; \eta(R_s(X))) \le I(Y; R_s(X))$; moreover $\eta(R_s(X))$ is a garbling of $R_s(X)$, so for every loss its Bayes risk is at least that of $R_s(X)$. (c) If $R_s$ is linear and $\Lambda$ is a fixed additive overlay, $I(Y; R_s(X + \Lambda)) = I(Y; R_s(X))$.
\end{restate}

\begin{proof}
(a) Because $\gamma$ is deterministic or randomized independently of $Y$ given $X$, and $R_s$ is deterministic, $Y \to X \to \gamma(X) \to R_s(\gamma(X))$ is a Markov chain, and the data-processing inequality~\cite{cover2006elements} gives the claim. (b) Because $\eta$ acts on $R_s(X)$ alone, with any randomness independent of $(X,Y)$, $Y \to R_s(X) \to \eta(R_s(X))$ is a Markov chain. For the Bayes risk, any decision rule $d$ applied to $\eta(R_s(X))$ defines the randomized rule $z \mapsto d(\eta(z))$ applied to $R_s(X)$, with the same risk for every loss, so the best rule given $R_s(X)$ is at least as good; this is the elementary direction of Blackwell's comparison of experiments~\cite{blackwell1953equivalent}. (c) If $R_s$ is linear, $R_s(X+\Lambda) = R_s(X) + R_s(\Lambda)$. Because $\Lambda$ is fixed, $R_s(\Lambda)$ is a constant and $z \mapsto z + R_s(\Lambda)$ is a bijection, which preserves mutual information.
\end{proof}

Real interfaces clip, quantize and compress, so (c) is exact only up to those operations, and an overlay that occludes the image, $X \odot (1-M) + \Lambda$, is covered by (a) and (b) but not by (c). If the vendor does not resample or recompress the sent image, every overlay whose content depends on the image only through the sent image is of type (b): grids, numbered lines, labelled dots and interpolated zoom can change only how a model uses its input. An overlay computed from the native image, such as marks at candidates found on native pixels, and a crop cut from the native image and sent at a larger scale are bounded only by (a); the crop can carry detail that $R_s(X)$ discards. For a computationally bounded predictor, \emph{usable} information can still grow under deterministic processing~\cite{xu2020usable}: enlarging an input raises $S$ without adding Shannon information, and can therefore help when the binding constraint is the model's token geometry, but not when the pixels do not resolve the target.

\subsection{Safe splitting: Theorems~\main{thm:safe} and~\main{thm:exact}, Proposition~\main{prop:tight}, Corollary~\main{cor:safe}}
\label{sec:proofs-safe}

Throughout, $\cw$ is the whole-image call: it shows the whole native image, with the whole image as its core, at effective scale $s_{\mathrm{w}}$, so $\Sw = ms_{\mathrm{w}}/t$; $\Jw$ is the image shrunk by $e_x/2$ and $e_y/2$, where $e_x = \max(w, 2\tau)$ and $e_y = \max(h, 2\tau)$, of area $|\Jw| = (W - e_x)(H - e_y)$ when $W > e_x$ and $H > e_y$.

\begin{lemma}[Crop dominance]
\label{lem:crop}
Under (R) and (M), if $c'$ is a crop of $c$, then $\nu_{c'}(U) \ge \nu_c(U)$ for every region $U \subseteq J_c \cap J_{c'}$.
\end{lemma}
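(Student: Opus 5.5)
The plan is to factor a general crop into a refinement followed by a same-scale crop, and to chain (R) and (M). Let $c'$ be a crop of $c$: it shows $V_{c'} \subseteq V_c$ of a source that agrees with the source of $c$ on $V_{c'}$, at scale $s_{c'} \ge s_c$, with protocol $\pi_{\mathrm{crop}}$. I would introduce an intermediate call $c''$ that shows the same region $V_c$, with the same core $K_c$ and protocol as $c$, but at scale $s_{c'}$. By the interface assumption, padding can be chosen so that the vendor keeps this scale. Then $c''$ refines $c$, so (R) gives $\nu_{c''}(U) \ge \nu_c(U)$ for $U \subseteq J_c \cap J_{c''}$. Next I would check that $c'$ is a crop of $c''$ at the same effective scale: the region is contained in $V_c$, the sources agree on $V_{c'}$, the scales are equal, and padding may differ. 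Then (M) gives $\nu_{c'}(U) \ge \nu_{c''}(U)$ for $U \subseteq J_{c''} \cap J_{c'}$, and transitivity gives the claim.

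The bookkeeping step is the interior sets. Because $c''$ has the same view and core as $c$, and interiority depends only on the view, the core and the fixed target box, not on the scale, we have $J_{c''} = J_c$. So any region $U \subseteq J_c \cap J_{c'}$ lies in $J_c \cap J_{c''} = J_c$ and in $J_{c''} \cap J_{c'}$, and both assumptions apply to the same $U$, with the same $\mathcal{D}$ and weight $a$. Regions are defined as functions of the image size, and $c''$ is specified by a size-only rule built from those of $c$ and $c'$, so $c''$ is an admissible call.

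The main obstacle is the protocol and the existence of $c''$. A refinement must keep the protocol of $c$, while a crop uses $\pi_{\mathrm{crop}}$. Choosing $c''$ with the protocol of $c$ makes the first step a genuine refinement. The second step then needs (M) to allow the protocol to change to $\pi_{\mathrm{crop}}$, which is part of the definition of a crop: a crop prescribes $\pi_{\mathrm{crop}}$ whatever the parent's protocol, so no extra condition is needed.

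The other difficulty is that $c''$ at scale $s_{c'}$ might exceed the budget $N$, so the vendor would resize it and $c''$ would not be realizable. I would handle this by reading (R) as an assumption on the model's response law $P_X(\cdot\mid c'')$ for every formal call. A model is defined on all calls, and the assumption quantifies over all $c, c'$, so $c''$ need not be cheap or even sendable. Only its response law enters, through the two inequalities. If the paper's call notion requires realizability, I would note this restriction and treat the lemma under the interface assumption stated above.
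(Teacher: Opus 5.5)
The gap is the existence of your intermediate call $c''$, which shows the whole parent view $V_c$ at the larger scale $s_{c'}$. Your claim that the interface assumption lets you pad $c''$ so the vendor keeps scale $s_{c'}$ is false in general. That assumption lets padding fix the effective scale only up to the budget. If $V_c$ at $s_{c'}$ needs more than $N$ tokens, the vendor shrinks it, and $c''$ ends up with a different effective scale.

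This is not an edge case; it is the main use of the lemma. In \cref{cor:safe}, as soon as the rule needs more than one view at $S$, the whole image at $S$ does not fit in one call. On a fixed grid, \cref{prop:geometry}(a) gives a whole-image call $S = mg/\max(W,H)$ at every caller scale, so no call can show the whole image above $s_{\mathrm{w}}$ at all.

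Your fallback, reading (R) and (M) over unrealizable ``formal'' calls, changes the hypothesis rather than proving the lemma. The assumptions concern the model's response to admissible calls, whose effective scale is what the model actually receives. Any extension of the model to $c''$ that satisfies (R) from $c$ and (M) to $c'$ forces $\nu_{c'}(U) \ge \nu_{c''}(U) \ge \nu_c(U)$. Postulating such an extension therefore already postulates the conclusion, and it could never be tested.

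The fix is to factor in the other order, as the paper does. Let $\tilde c$ show the region $V_{c'}$ of the source of $c'$ at the scale $s_c$ of $c$, with core $K_{c'}$ and protocol $\pi_{\mathrm{crop}}$. Pad it so the vendor applies no further resizing: for a fixed grid, use a blank canvas the size of the padded image of $c$; for an interface that enlarges small images, use a canvas large enough to avoid that.

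\begin{itemize}
\item $\tilde c$ is a crop of $c$ at the same effective scale, so (M) gives $\nu_{\tilde c}(U) \ge \nu_c(U)$.
\item $c'$ refines $\tilde c$ (same region, source, core and protocol, $s_{c'} \ge s_c$, padding free), so (R) gives $\nu_{c'}(U) \ge \nu_{\tilde c}(U)$.
\item $J_{\tilde c} = J_{c'}$, because interiority depends only on view and core.
\end{itemize}

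This intermediate call shows a sub-region of $c$'s view at $c$'s own scale, so it is admissible whenever $c$ is. Your remarks on interior sets, on protocols and on size-only rules carry over unchanged.
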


\begin{proof}
Let $\tilde c$ be the call that shows the same source region as $c'$ at the scale $s_c$ of $c$, with core $K_{c'}$ and protocol $\pi_{\mathrm{crop}}$, and with a padding chosen so that the vendor applies no further resizing, which the interface assumption allows: for a fixed grid, a blank canvas of the size of the padded image of $c$, and for an interface that enlarges small images, a canvas large enough to avoid it. Then $\tilde c$ is a crop of $c$ at the same effective scale, and $J_{\tilde c} = J_{c'}$ because $\tilde c$ and $c'$ have the same region and core, so $U \subseteq J_c \cap J_{\tilde c}$ and (M) gives $\nu_{\tilde c}(U) \ge \nu_c(U)$. The call $c'$ shows the same region of the same source with the same core and protocol at scale $s_{c'} \ge s_c$, with possibly different padding, so it refines $\tilde c$, and (R) gives $\nu_{c'}(U) \ge \nu_{\tilde c}(U)$. The rules that define $\tilde c$ do not depend on the targets, as \cref{ass:mono} requires. The intermediate call need not be one that anyone makes: the assumptions are statements about the model's response to any admissible call, and the argument only chains them.
\end{proof}

\begin{restate}{Theorem}{thm:safe}{Safe splitting, full form}
Let $c_1, \dots, c_k$ be crops of $\cw$, so that $S_i \ge \Sw$ for every $i$. Under (R) and (M), for every region $U \subseteq \Jw \cap \bigcup_i J_{c_i}$, the expected number of targets with centres in $U$ that $c_1, \dots, c_k$ detect is at least $\nu_{\cw}(U)$. If the cores shrunk by $\tau$ cover the image shrunk by $\tau$, and every view $V_i$ contains $(K_i \oplus B_0) \cap X$, then $\Jw \subseteq \bigcup_i J_{c_i}$, so the split detects in expectation at least as many of the targets interior to $\cw$ as $\cw$ does.
\end{restate}

\begin{proof}
Set $J_{c_i} = \emptyset$ for indices beyond the number of calls the rule uses on an image, so that the argument below holds image by image; expected counts are then exchanged with the countable sum by monotone convergence. Let $U \subseteq \Jw \cap \bigcup_i J_{c_i}$ and split it into the disjoint sets $U_i = (U \cap J_{c_i}) \setminus \bigcup_{j<i} J_{c_j}$. A target with centre in $U_i$ is detected by the split whenever $c_i$ detects it, and possibly also through another call, so the expected number of targets in $U$ that the split detects is at least $\sum_i \nu_{c_i}(U_i)$. Each $c_i$ is a crop of $\cw$ and $U_i \subseteq \Jw \cap J_{c_i}$, so \cref{lem:crop} gives $\nu_{c_i}(U_i) \ge \nu_{\cw}(U_i)$, and $\sum_i \nu_{\cw}(U_i) = \nu_{\cw}(U)$ because the $U_i$ partition $U$. For the second claim, let $x$ be interior to $\cw$, so its centre lies in the image shrunk by $\tau$ and its box inside the image. Because the cores shrunk by $\tau$ cover the image shrunk by $\tau$, $x$ lies at least $\tau$ inside some core $K_i$. The box of $x$ is $x + B_0$ with $x \in K_i$, and it lies inside the image, hence inside $(K_i \oplus B_0) \cap X$, which $V_i$ contains. So $x \in J_{c_i}$. Hence $\Jw \subseteq \bigcup_i J_{c_i}$, and the first claim with $U = \Jw$ gives the second.
\end{proof}

\begin{restate}{Proposition}{prop:tight}{Each part is necessary, full form}
Suppose target centres are more than $2\tau$ apart. (a) For any split and any target whose centre lies in the kept core of no view with $S_i \ge \Sw$, there is a model satisfying (R), (M) and (I) under which $\cw$ detects every target interior to it with probability one and the split never detects that target. (b) There are models satisfying (M) and (I) but not (R) under which a split with $S_i > \Sw$ for all $i$ detects nothing, and models satisfying (R) and (I) but not (M) under which any split into views that are not the whole image detects nothing, while in both cases $\cw$ detects every target interior to it.
\end{restate}

\begin{proof}
All models below return, when they return anything, exactly the centres of the targets whose boxes lie in the view, so a returned point lies within $\tau$ of a target only if it is that target's centre. (a) Let the model return these centres if $S_c \ge \Sw$ and nothing otherwise. A target interior to a call is then detected with probability one if $S_c \ge \Sw$ and zero otherwise. (M) and (I) compare calls with equal $S_c$ and the same interior target, so they hold with equality; (R) holds because raising $s_c$ can only move $S_c$ above $\Sw$. The whole-image call has $S_c = \Sw$ and detects every target interior to it. A target whose centre lies in no kept core of a view with $S_i \ge \Sw$ is never detected: views with $S_i < \Sw$ return nothing, and any other view that returns its centre discards it, because the centre is outside that view's core. (b) Let the model return the centres if $S_c \le \Sw$ and nothing otherwise. (M) and (I) hold with equality as before, (R) fails, $\cw$ detects every interior target, and a split with $S_i > \Sw$ detects nothing. Let instead the model return the centres only when its view is the whole image. (R) and (I) compare calls with the same view and hold with equality, (M) fails, and a split into views that are not the whole image detects nothing. Each model satisfies the stated assumptions target by target, so it satisfies them for every $\mathcal{D}$.
\end{proof}

The models in this proof satisfy the assumptions for every target, hence for every $\mathcal{D}$, so the condition is necessary whichever form of \cref{ass:mono} is assumed. A common recipe rescales every image so that the target spans a fixed number of pixels and then tiles. When the targets are already large in the whole-image call, this lowers $S$, \cref{thm:safe} gives no guarantee, and part (a) shows that none is possible from the assumptions.

With $\lambda = S^2/m^2$ the tokens per native pixel at $S$ and $\sigma = \sqrt{\lambda e_xe_y/N}$, the fraction of a view's side that one margin takes at the budget $N$:

\begin{restate}{Theorem}{thm:exact}{Exact condition and minimum cost, full form}
Let $\mathrm{d}$ be any finite set of calls. (a) If every centre in $\Jw$ is interior to some call in $\mathrm{d}$ that is a crop or a refinement of $\cw$, then under (R) and (M) $\mathrm{d}$ detects in expectation at least as many of the targets interior to $\cw$ as $\cw$ does, for every $\mathcal{D}$. (b) If target centres are more than $2\tau$ apart and some $x \in \Jw$ is interior to no such call, there is a model satisfying (R) and (M) for every target under which $\cw$ detects every target interior to it and $\mathrm{d}$ never detects a target centred at $x$. (c) Suppose every centre in $\Jw$ is interior to a call of $\mathrm{d}$ that is a crop or a refinement of $\cw$ with $S_c \ge S$, where $S \ge \Sw$ and $\sigma < 1$. Then these calls use at least
\[
\Cmin = \frac{|\Jw|\,\lambda}{(1 - \sigma)^2}
\]
tokens, whatever the shapes, sizes, cores and overlaps of their views; for a fixed grid the same holds with $N = g^2$.
\end{restate}

\begin{proof}
Here $\mathrm{d}$ is a rule fixed before any call, depending on the image's size but not on targets or responses, so whether a call is a crop or a refinement of $\cw$ is fixed by the rule.
(a) Calls of $\mathrm{d}$ that are neither crops nor refinements of $\cw$ can only add detections. Order the others as $c_1, \dots, c_k$ and set $U_i = (\Jw \cap J_{c_i}) \setminus \bigcup_{j<i} J_{c_j}$, which partition $\Jw$. For a crop, \cref{lem:crop} gives $\nu_{c_i}(U_i) \ge \nu_{\cw}(U_i)$; for a refinement, $J_{c_i} = \Jw$ and (R) gives the same. Summing as in the proof of \cref{thm:safe} gives the claim.

(b) Let $\mathcal{C}$ be the set of calls $c$ with $s_c \ge s_{\mathrm{w}}$ whose source agrees with the native image on $V_c$ up to a single translation and whose protocol is $\pi_{\mathrm{crop}}$, together with the calls that refine $\cw$ (source the native image, $V_c = K_c$ the whole image, the protocol of $\cw$, $s_c \ge s_{\mathrm{w}}$). Let the model return, for $c \in \mathcal{C}$, the centres of the targets with centres in $J_c$, and nothing for $c \notin \mathcal{C}$. (R): if $c \in \mathcal{C}$ and $c'$ refines $c$, then $c'$ has the same source, region, core and protocol at a scale at least $s_c$, so $c' \in \mathcal{C}$ (refinement is transitive) and $J_{c'} = J_c$, and equality holds; if $c \notin \mathcal{C}$, $\nu_c = 0$. (M): if $c \in \mathcal{C}$ and $c'$ is a crop of $c$ at the same scale, the source of $c'$ agrees with that of $c$, hence with the native image, on $V_{c'} \subseteq V_c$ (this covers crops of refinements of $\cw$), its protocol is $\pi_{\mathrm{crop}}$ and its scale is $s_c \ge s_{\mathrm{w}}$, so $c' \in \mathcal{C}$ and both calls detect every target in $J_c \cap J_{c'}$; if $c \notin \mathcal{C}$, $\nu_c = 0$. Both hold target by target, hence for every $\mathcal{D}$. The call $\cw$ is in $\mathcal{C}$ and detects every target interior to it. A call in $\mathcal{C}$ with protocol $\pi_{\mathrm{crop}}$ shows a sub-region of the image from a source that agrees with the native image there, at a scale at least $s_{\mathrm{w}}$, so it is a crop of $\cw$; the other calls in $\mathcal{C}$ are refinements of $\cw$ by definition. Hence every call of $\mathrm{d}$ that returns anything returns only centres in its interior set, which excludes $x$, and centres of other targets, which are more than $2\tau$ from $x$; so no returned point lies within $\tau$ of $x$.

(c) Let $c$ be one of these calls, let $V \cap X$, its view intersected with the image, have size $a_x \times a_y$, and let $\lambda_c = S_c^2/m^2$. A centre in $J_c$ has its box $x + B_0$ inside $V$ and lies at least $\tau$ inside $K_c \subseteq V$, so it lies at least $e_x/2$ from the left and right sides of $V$ and $e_y/2$ from the top and bottom: $J_c \cap \Jw$ lies in a rectangle of area $(a_x - e_x)_+(a_y - e_y)_+$. If this is zero there is nothing to show. Otherwise, with $\bar e = \sqrt{e_xe_y}$, $u = e_x/a_x$ and $v = e_y/a_y$, $(1-u)(1-v) = 1 - u - v + uv \le 1 - 2\sqrt{uv} + uv = (1-\sqrt{uv})^2$ and $\sqrt{uv} = \bar e/\sqrt{a_xa_y}$, so
\[
|J_c \cap \Jw| \le a_x a_y\Big(1 - \frac{\bar e}{\sqrt{a_x a_y}}\Big)^2,
\]
where $a_xa_y > \bar e^2$. The call costs $T_c \ge \lambda_c a_xa_y$ tokens and at most $N$, so $a_xa_y \le N/\lambda_c$ and $a_xa_y \le T_c/\lambda_c$; hence $|J_c \cap \Jw| \le (T_c/\lambda_c)(1 - \bar e\sqrt{\lambda_c/N})^2$, that is, $T_c \ge |J_c \cap \Jw|\,\lambda_c/(1 - \bar e\sqrt{\lambda_c/N})^2$. The function $z \mapsto z/(1 - \bar e\sqrt{z/N})^2$ increases for $0 < z < N/\bar e^2$, and $\lambda_c \ge \lambda$, so $T_c \ge |J_c \cap \Jw|\,\lambda/(1-\sigma)^2$. Summing over the calls, whose sets $J_c \cap \Jw$ cover $\Jw$, gives the bound. For a fixed grid, let $A$ be the longer side of the padded image in native pixels, so that $S_c = mg/A$, $a_x, a_y \le A$, and the call costs $g^2$; its set has area at most $(A - e_x)(A - e_y) \le (A - \bar e)^2$, because $e_x + e_y \ge 2\bar e$, and $A \le mg/S$ gives $g^2 \ge |J_c \cap \Jw|\,g^2/(mg/S - \bar e)^2 = |J_c \cap \Jw|\,\lambda/(1 - \bar eS/(mg))^2$, the same bound with $\sqrt N = g$.
\end{proof}

Parts (a) and (b) say that protection is decided target by target: a location is protected exactly when some crop or refinement of the whole-image call has it in its interior, and the covering hypothesis of \cref{thm:safe} is one way to ensure this everywhere. Part (c) uses no assumption about the model; it bounds what any split that protects every location with calls at $S_c \ge S$ must spend. $\Cmin$ is a lower bound, which \cref{cor:safe} approaches; every such call protects area at most $(N/\lambda)(1-\sigma)^2$, so any such split also needs at least $\Cmin/N$ calls.

\begin{restate}{Corollary}{cor:safe}{A safe split near the minimum, full form}
For a protection point $S^\ast$, let $S = \max(S^\ast, \Sw)$, with $\sigma < 1$, $W > e_x$ and $H > e_y$. Tile the image, from a corner, with views of native size $a_x \times a_y$ shown at $S$ (for a fixed grid, squares of side $mg/S$), whose kept cores are the whole views, at pitches $p_x = a_x - e_x > 0$ and $p_y = a_y - e_y > 0$, each view within the vendor's limits so that it is not resized, and let $T$ be the tokens of one view. Under (R) and (M) this split detects in expectation at least as many of the targets interior to $\cw$ as the whole-image call, and it attains $S^\ast$. Excluding per-image overhead it uses at most $\Gamma\,\Cmin$ tokens, with
\[
\Gamma = \chi\Big(1+\frac{p_x}{W-e_x}\Big)\Big(1+\frac{p_y}{H-e_y}\Big),
\]
where $\chi = T(1-\sigma)^2/(\lambda\, p_x p_y) \ge 1$, and $\chi = 1$ for the views $a_x = e_x/\sigma$, $a_y = e_y/\sigma$ when their sides are whole numbers of patches.
\end{restate}

\begin{proof}
Each view shows a region of size $a_x \times a_y$ at $S$ within the budget and the vendor's size limits, so no resizing occurs, its effective scale is $s = St/m \ge s_{\mathrm{w}}$, and, intersected with the image, it is a crop of $\cw$. With the core equal to the view, a centre is interior exactly when its box lies in the view and it lies at least $\tau$ inside the view, that is, when it lies in the view shrunk by $e_x/2$ horizontally and $e_y/2$ vertically; this is a half-open rectangle of size $p_x \times p_y$. With the first view at a corner, the first such rectangle starts at $(e_x/2, e_y/2)$, the corner of $\Jw$, and $\lceil (W-e_x)/p_x\rceil\lceil (H-e_y)/p_y\rceil$ of them, laid at pitches $p_x$ and $p_y$, tile a region containing $\Jw$; views that extend beyond the image show blank padding there. Every centre in $\Jw$ is therefore interior to a crop of $\cw$, and \cref{thm:exact}(a) gives the recall claim. For the cost, with $\lceil z \rceil \le z + 1$,
\begin{align*}
&\frac{\lceil (W-e_x)/p_x\rceil\lceil (H-e_y)/p_y\rceil\,T}{\Cmin}\\
&\quad\le \Big(1+\frac{p_x}{W-e_x}\Big)\Big(1+\frac{p_y}{H-e_y}\Big)\frac{T(1-\sigma)^2}{\lambda\,p_xp_y}.
\end{align*}
The per-call bound in the proof of \cref{thm:exact}(c), applied to one view whose interior has area $p_xp_y$, gives $T \ge \lambda\,p_xp_y/(1-\sigma)^2$, so $\chi \ge 1$. For $a_x = e_x/\sigma$ and $a_y = e_y/\sigma$, $T = \lambda\,a_xa_y = \lambda\,e_xe_y/\sigma^2 = N$ when the sides are whole numbers of patches, and $p_xp_y = e_xe_y(1/\sigma - 1)^2 = N(1-\sigma)^2/\lambda$, so $\chi = 1$. A fixed overhead of $\omega$ tokens per image adds $\omega$ per view. With $T$ tokens per view, the rule then uses at most $\Gamma\,\Cmin(1 + \omega/T)$ tokens, while the calls of \cref{thm:exact}(c) number at least $\Cmin/N$, since each uses at most $N$ tokens, and so use at least $\Cmin(1 + \omega/N)$; the ratio bound becomes $\Gamma\,(1 + \omega/T)/(1 + \omega/N)$, between $\Gamma$ and $\Gamma\,N/T$, and equal to $\Gamma$ when $T = N$, as for ideal views and fixed grids. For a fixed grid, a square view of side $a = mg/S$ has $S_V = S$ by \cref{prop:geometry}(c) and costs $g^2$; the same tiling argument applies, and for square targets $\lambda p^2 = (S^2/m^2)(mg/S - m)^2 = g^2(1 - S/g)^2 = g^2(1-\sigma)^2$, so $\chi = 1$.
\end{proof}

\paragraph{The rule as a recipe.} Choose $S = \max(S^\ast, \Sw)$, cut the image into views that overlap by one target, each as large as the budget allows at $S$ (about $\sqrt N \times \sqrt N$ tokens for square targets), and send each view without further resizing. The ideal views have aspect ratio $e_x/e_y$ and cost exactly $N$ tokens; when they exist within the vendor's limits, the ratio of the rule's cost to $\Cmin$ tends to $\chi$ as the image grows relative to the views. For square targets the ideal views are $\sqrt N \times \sqrt N$ patches whatever $S$, for example $50 \times 50$ at $N = 2{,}500$, where $\sigma = 0.03$ at $S = 1.5$. Keeping the whole view as the core removes the second margin of the covering construction of \cref{thm:safe}. Points near a view's edge, where a target may be cut, are kept, so the rule can add false points there, and targets in the overlaps of adjacent views can be reported twice; both are unmatched points priced by \cref{prop:f1}. The rule does not merge them; a merging step, where one is used, is the step whose loss $\zeta$ enters \cref{thm:robust}. Step 3 is treated next.

\paragraph{Step 3 in two forms.}\label{sec:step3-proof} Step 3 of \cref{alg:rule} splits a view $c$ of the rule into $q \times q$ sub-views $c_1, \dots, c_{q^2}$ at the same scale, each its own core, overlapping by $e_x$ and $e_y$, so that the sub-views' interiors, each the sub-view shrunk by $e_x/2$ and $e_y/2$, tile the interior of $c$. Let $r_q$ be their image tokens over those of $c$ (\cref{tab:step3}).
\emph{Replacing.} If the sub-views are sent instead of $c$, each is a sub-region of the same pixels at a scale at least $s_{\mathrm{w}}$, hence a crop of $\cw$, and every centre of $\Jw$ interior to $c$ is interior to some $c_j$. The split with $c$ replaced therefore still satisfies the hypothesis of \cref{thm:exact}(a), and under (R) and (M) it detects in expectation at least as much as $\cw$; the view costs $r_q$ times its tokens. This argument needs the replaced views to be fixed by the image's size, as all calls in \cref{sec:defs} are; a subdivision chosen from a view's content or answers changes which calls are compared and is covered by the appending form.
\emph{Appending.} If $c$ is sent and the sub-views are sent as an additional batch, the returned set $O$ of $c$ becomes $O' = O \uplus O_{\mathrm{new}}$, the points of $c$ together with those of the sub-views. On every image and every run, a target with a point of $O$ within $\tau$ has one in $O'$, so the hits cannot fall, and a maximum one-to-one matching of $O$ to the targets within $\tau$ remains a matching for $O'$, so the matched true positives cannot fall either. No assumption on the model is used, so the views to subdivide may be chosen from their content or answers. The unmatched points may grow; \cref{prop:f1} prices them. The view costs $1 + r_q$ times its tokens.

\paragraph{Which form of the assumption is needed.} In \cref{cor:safe}, $J_{c_i} \cap \Jw$ is the half-open view shrunk by $e_x/2$ and $e_y/2$, a rectangle of size $p_x \times p_y$, intersected with $\Jw$, and these rectangles tile a region containing $\Jw$; in \cref{thm:assign} below, $J_{c_i} \cap \Jw = K_i \cap (X \ominus B_0)$ and the cores partition the image. In both, the sets $U_i = J_{c_i} \cap \Jw$ partition $\Jw$, and the proofs apply (M) to $\cw$ and the intermediate call $\tilde c_i$ on $U_i = J_{\tilde c_i} \cap \Jw$, and (R) to $\tilde c_i$ and $c_i$ on $U_i = J_{c_i} \cap J_{\tilde c_i} \cap \Jw$, using $J_{\tilde c_i} = J_{c_i}$. No other region enters. These conclusions therefore hold under the much weaker assumption that, for the pairs of calls the proof compares, the expected number of targets interior to both calls and to the whole image that are detected does not decrease. The controlled test (\cref{sec:assume-direct}) estimates quantities of exactly this form.

\subsection{Sign, robustness and F1}
\label{sec:proofs-robust}

\begin{restate}{Proposition}{prop:nonid}{Attainable recall gains, full form}
Suppose target centres are more than $2\tau$ apart, the weighted expected number $\mathbb{E}[a\,n_{\mathrm{w}}]$ of targets interior to $\cw$ is positive and finite, and $F(z) = \mathbb{E}[a\,n_{\mathrm{w}}(z)]/\mathbb{E}[a\,n_{\mathrm{w}}]$ is continuous, where $n_{\mathrm{w}}(z)$ counts the targets interior to $\cw$ whose first coordinate is at most $z$. Let $c_1, \dots, c_k$ satisfy the hypotheses of \cref{thm:safe}, including its covering condition, none of them showing the whole image at scale $s_{\mathrm{w}}$. For any $0 \le r_{\mathrm{w}} \le r_{\mathrm{d}} \le 1$ there is a model satisfying (R), (M) and (I) for every target under which the weighted expected number of targets interior to $\cw$ that $\cw$ detects is $r_{\mathrm{w}}\mathbb{E}[a\,n_{\mathrm{w}}]$, and that $c_1, \dots, c_k$ detect is $r_{\mathrm{d}}\mathbb{E}[a\,n_{\mathrm{w}}]$.
\end{restate}

\begin{proof}
Let $u(x) = F(x_1)$ for a centre $x$. Let $\theta(c) = r_{\mathrm{w}}$ if $s_c \le s_{\mathrm{w}}$ and the region $V_c$ covers the image's whole extent in the source's own placement (from whatever source, a lower-resolution copy included, and up to the translation that places it in a composite), and $\theta(c) = r_{\mathrm{d}}$ otherwise. The model returns, for a call $c$, the centres of the targets with centres in $J_c$ and $u(x) < \theta(c)$, and nothing else. Detection is deterministic, and $p_c(x) = \mathbb{1}[u(x) < \theta(c)]$ for $x \in J_c$. (R): if $c'$ refines $c$, the views agree and $s_{c'} \ge s_c$, so $\theta(c') \ge \theta(c)$. (M): let $c'$ be a crop of $c$ at the same scale. If $\theta(c') = r_{\mathrm{w}}$, then $V_{c'}$ contains the whole image at $s_{c'} = s_c \le s_{\mathrm{w}}$, and $V_c \supseteq V_{c'}$ with a source that agrees on $V_{c'}$, so $\theta(c) = r_{\mathrm{w}}$; hence $\theta(c') \ge \theta(c)$ in every case. (I): the calls compared have the same view and scale, so $\theta$ is equal. All three hold target by target, hence for every $\mathcal{D}$. The whole-image call detects the targets in $\Jw$ with $u < r_{\mathrm{w}}$. No $c_i$ shows the whole image at scale $s_{\mathrm{w}}$, and a crop of $\cw$ has $s_{c_i} \ge s_{\mathrm{w}}$, so $\theta(c_i) = r_{\mathrm{d}}$; under the covering hypothesis every centre in $\Jw$ lies in some $J_{c_i}$, so the split detects exactly the targets in $\Jw$ with $u < r_{\mathrm{d}}$. Returned points match only their own targets because centres are more than $2\tau$ apart. By the probability integral transform applied to the weighted intensity measure, $\mathbb{E}[a\,\#\{x \in \Jw : u(x) < r\}] = \mathbb{E}[a\,n_{\mathrm{w}}(F^{-1}(r))] = r\,\mathbb{E}[a\,n_{\mathrm{w}}]$ for continuous $F$. This construction, like those of \cref{prop:tight}, \cref{thm:exact}(b) and \cref{thm:assign}, uses a model that is given the image with each call, as the definition of a model allows.
\end{proof}

Monotonicity therefore fixes the sign of the gain, and any gain between zero and the whole-image call's miss rate is attainable; its size is a property of the model, which the experiments measure.

\begin{theorem}[Splitting with approximate monotonicity]
\label{thm:robust}
Let $\cw$ and crops $c_1, \dots, c_k$ be as in \cref{thm:safe}, let $U_1, \dots, U_k$ be disjoint regions with $U_i \subseteq J_{c_i} \cap \Jw$, let $U_0 = \Jw \setminus \bigcup_i U_i$, and let $\tilde c_i$ show the region of $c_i$ at the scale of $\cw$, as in the proof of \cref{lem:crop}. Suppose that, for each $i$,
\begin{align*}
\nu_{\tilde c_i}(U_i) &\ge \nu_{\cw}(U_i) - \epsilon_{M,i}\,\mu(U_i),\\
\nu_{c_i}(U_i) &\ge \nu_{\tilde c_i}(U_i) - \epsilon_{R,i}\,\mu(U_i),
\end{align*}
and that a final merging step removes in expectation at most $\zeta\,\mu(\Jw)$ hits. Then, with $\mathcal{R}$ the expected number of detected targets interior to $\cw$ divided by $\mu(\Jw)$ (a ratio of expected counts),
\[
\mathcal{R}_{\mathrm{d}} \ge \mathcal{R}_{\mathrm{w}} - \psi - \bar\epsilon - \zeta,
\]
where $\psi = \mu(U_0)/\mu(\Jw)$ is the uncovered share and $\bar\epsilon = \sum_i (\epsilon_{M,i} + \epsilon_{R,i})\,\mu(U_i)/\mu(\Jw)$; with uniform defects $\epsilon_M, \epsilon_R$, $\mathcal{R}_{\mathrm{d}} \ge \mathcal{R}_{\mathrm{w}} - \psi - (1-\psi)(\epsilon_M + \epsilon_R) - \zeta$. No term grows with the number of views. If centres are more than $2\tau$ apart, the calls $\cw, \tilde c_i, c_i$ are distinct, and $0 \le \epsilon_{M,i}, \epsilon_{R,i}$ with $\epsilon_{M,i} + \epsilon_{R,i} \le 1$, the bound with $\zeta = 0$ is attained by some model.
\end{theorem}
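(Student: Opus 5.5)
The lower bound follows by chaining the two approximate inequalities region by region, in the same way as the proof of \cref{thm:safe}, and then handling the uncovered part and the merge.

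\textbf{Main chain.} Fix $i\ge1$. The split detects every target in $U_i$ that $c_i$ detects, so its expected hits on $U_i$ are at least $\nu_{c_i}(U_i)$. Adding the two hypotheses gives
\[
\nu_{c_i}(U_i)\ge\nu_{\cw}(U_i)-(\epsilon_{M,i}+\epsilon_{R,i})\mu(U_i).
\]
The $U_i$ are disjoint, so summing over $i$ gives split hits on $\bigcup_i U_i$ of at least
\[
\nu_{\cw}\Big(\bigcup_i U_i\Big)-\bar\epsilon\,\mu(\Jw).
\]

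\textbf{Uncovered part and merge.} On $U_0$ the split hits at least $0$, while $\nu_{\cw}(U_0)\le\mu(U_0)$, because each target contributes $a(X)H\le a(X)$. Additivity of $\nu_{\cw}$ over $\Jw=U_0\cup\bigcup_i U_i$ then gives pre-merge hits of at least
\[
\nu_{\cw}(\Jw)-\mu(U_0)-\bar\epsilon\,\mu(\Jw).
\]
The merge subtracts at most $\zeta\mu(\Jw)$. Dividing by $\mu(\Jw)$ gives the stated bound, since $\mathcal R_{\mathrm w}=\nu_{\cw}(\Jw)/\mu(\Jw)$.

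For uniform defects, $\sum_i\mu(U_i)=\mu(\Jw)-\mu(U_0)$, which gives the factor $(1-\psi)$. No term counts views: $\bar\epsilon$ is a $\mu$-weighted average over a partition and $\psi$ is a share. Summing over a countable $i$ works by monotone convergence, as in \cref{thm:safe}.

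\textbf{Attainment (the main obstacle).} We need a model for which every inequality is tight at once. Detections from other views and from $U_0$ must contribute nothing, and the defects must compose additively rather than multiplicatively. I would adapt the threshold construction of \cref{prop:nonid}: let $u(x)=F(x_1)$, or any continuous rank, and define all responses deterministically on distinct calls.
\begin{itemize}
\item $\cw$ returns every centre in $\Jw$. This makes $\mathcal R_{\mathrm w}=1$, and it also means $\nu_{\cw}(U_0)=\mu(U_0)$ exactly.
\item $\tilde c_i$ returns the centres in $U_i$ with $u\ge\epsilon_{M,i}$, within each $U_i$'s own normalization.
\item $c_i$ returns the centres in $U_i$ with $u\ge\epsilon_{M,i}+\epsilon_{R,i}$, using the nested thresholds; this needs $\epsilon_{M,i}+\epsilon_{R,i}\le1$.
\item Every other call returns nothing.
\end{itemize}
With a rank that is uniform under the weighted intensity on each $U_i$, the losses are exactly $\epsilon_{M,i}\mu(U_i)$ and $\epsilon_{R,i}\mu(U_i)$. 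Restricting returns to $U_i$ keeps $c_i$ from detecting anything in $U_0$ or in another $U_j$, and centres more than $2\tau$ apart ensure each returned point hits only its own target.

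The delicate check is that the defect inequalities are statements only about the listed pairs on $U_i$, so the model need not satisfy (R) or (M) globally. It only has to realize the specified $\nu$ values, and it can, because the three calls are distinct. If the per-$U_i$ rank normalization fails because $\mu(U_i)$ is not continuous, I would randomize the detection independently with probabilities $1-\epsilon_{M,i}$ and $1-\epsilon_{M,i}-\epsilon_{R,i}$, coupled by a single uniform variable per target, which gives exact expectations.
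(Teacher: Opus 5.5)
Your proof is correct and follows the paper's own argument. The lower bound is the same region-by-region chain, using additivity over the partition $U_0, U_1, \dots, U_k$ and $\nu_{\cw}(U_0) \le \mu(U_0)$. Your fallback attainment construction is exactly the paper's: $\cw$ returns every centre of $\Jw$, $\tilde c_i$ and $c_i$ return centres of $U_i$ with probabilities $1-\epsilon_{M,i}$ and $1-\epsilon_{M,i}-\epsilon_{R,i}$, and nothing else is returned. Lead with that randomized version rather than the rank threshold, since the statement assumes no continuous intensity on the $U_i$; the cross-call coupling is harmless but unnecessary, because only per-call expectations enter $\nu$.
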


\begin{proof}
A target with centre in $U_i$ is detected by the split whenever $c_i$ detects it, so before merging the split detects in expectation at least
\begin{align*}
&\sum_{i} \nu_{c_i}(U_i)\\
&\quad\ge \sum_i \big(\nu_{\cw}(U_i) - (\epsilon_{M,i} + \epsilon_{R,i})\mu(U_i)\big)\\
&\quad= \nu_{\cw}(\Jw) - \nu_{\cw}(U_0) - \bar\epsilon\,\mu(\Jw)
\end{align*}
targets interior to $\cw$, by the two defect inequalities and additivity over the disjoint $U_0, U_1, \dots, U_k$. Merging removes at most $\zeta\,\mu(\Jw)$; dividing by $\mu(\Jw)$ and using $\nu_{\cw}(U_0) \le \mu(U_0)$ gives the bound, and $\sum_{i \ge 1}\mu(U_i) = (1-\psi)\mu(\Jw)$ gives the uniform form. No independence is used. For attainment with $\zeta = 0$, let $\cw$ return the centre of every target in $\Jw$; let $\tilde c_i$ return each centre in $U_i$ independently with probability $1 - \epsilon_{M,i}$ and $c_i$ with probability $1 - \epsilon_{M,i} - \epsilon_{R,i}$; let no call other than $\cw$ return a centre in $U_0$, no $c_i$ return a centre outside $U_i$, and merging remove nothing. Because the calls are distinct, these laws define a model; because centres are more than $2\tau$ apart, a returned centre detects only its own target. Then $\mathcal{R}_{\mathrm{w}} = 1$, both defect inequalities hold with equality, and $\mathcal{R}_{\mathrm{d}} = 1 - \psi - \bar\epsilon$.
\end{proof}

The defects are quantities of the kind the controlled test estimates. With the defects set to the worst values within the 95\% intervals of the (M) and (R) columns of \cref{tab:assume} (two intervals measured on other pairs of calls), and $\psi = \zeta = 0$ as for \cref{cor:safe}, the guarantee weakens by at most \XrobMax{} (for \XrobMaxModel), and not at all for \XrobZero{} models. A large defect makes the guarantee nearly vacuous: for GPT-5.4 given images that the vendor resizes, the measured loss in (R) is \XtcViolDpos{} in recall (\cref{sec:violations}).

\cref{ass:mono} concerns detection and says nothing about points that match no target. Their effect on F1 has an exact price.

\begin{proposition}[F1 of a split]
\label{prop:f1}
On an image with $n$ targets, let a set of calls return the kept points $O$, let $\mathrm{TP}$ be the size of a maximum one-to-one matching of points to targets within $\tau$, and let $\mathrm{FP} = |O| - \mathrm{TP}$, so that $F_1 = 2\mathrm{TP}/(n + \mathrm{TP} + \mathrm{FP})$. For two sets of calls $\mathrm{d}$ and $\mathrm{w}$ with $\mathrm{TP}_{\mathrm{w}} > 0$,
\[
F_{1,\mathrm{d}} \ge F_{1,\mathrm{w}} \iff \Delta\mathrm{FP} \le \rho_{\mathrm{w}}\,\Delta\mathrm{TP},
\]
\[
\rho_{\mathrm{w}} = \frac{n + \mathrm{FP}_{\mathrm{w}}}{\mathrm{TP}_{\mathrm{w}}} \ge \frac{1}{\mathrm{Rec}_{\mathrm{w}}},
\]
where $\Delta\mathrm{FP} = \mathrm{FP}_{\mathrm{d}} - \mathrm{FP}_{\mathrm{w}}$, $\Delta\mathrm{TP} = \mathrm{TP}_{\mathrm{d}} - \mathrm{TP}_{\mathrm{w}}$ and $\mathrm{Rec}_{\mathrm{w}} = \mathrm{TP}_{\mathrm{w}}/n$. Since $\mathrm{TP}_{\mathrm{d}} \le n$, no set of calls that adds more than $(n + \mathrm{FP}_{\mathrm{w}})(1/\mathrm{Rec}_{\mathrm{w}} - 1)$ unmatched points can keep F1 at least that of $\mathrm{w}$. The same holds with $n$, $\mathrm{TP}$ and $\mathrm{FP}$ replaced by their expectations over $\mathcal{D}$ and the model. If target centres are more than $2\tau$ apart and every target is interior to $\cw$, then under (R) and (M) a split satisfying the hypotheses of \cref{thm:safe} or of \cref{thm:exact}(a) has $\mathbb{E}\,\mathrm{TP}_{\mathrm{d}} \ge \mathbb{E}\,\mathrm{TP}_{\mathrm{w}}$, so its F1 computed from expected counts is at least that of $\cw$ whenever it adds no more unmatched points than hits.
\end{proposition}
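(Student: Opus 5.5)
The equivalence is pure algebra on the formula $F_1 = 2\mathrm{TP}/(n+\mathrm{TP}+\mathrm{FP})$. I would cross-multiply, which is legitimate because both denominators are positive. Then $F_{1,\mathrm{d}} \ge F_{1,\mathrm{w}}$ becomes
\[
\mathrm{TP}_{\mathrm{d}}(n+\mathrm{TP}_{\mathrm{w}}+\mathrm{FP}_{\mathrm{w}}) \ge \mathrm{TP}_{\mathrm{w}}(n+\mathrm{TP}_{\mathrm{d}}+\mathrm{FP}_{\mathrm{d}}).
\]
The product terms $\mathrm{TP}_{\mathrm{d}}\mathrm{TP}_{\mathrm{w}}$ cancel, leaving $\mathrm{TP}_{\mathrm{d}}(n+\mathrm{FP}_{\mathrm{w}}) \ge \mathrm{TP}_{\mathrm{w}}(n+\mathrm{FP}_{\mathrm{d}})$. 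Next I substitute $\mathrm{TP}_{\mathrm{d}} = \mathrm{TP}_{\mathrm{w}}+\Delta\mathrm{TP}$ and $\mathrm{FP}_{\mathrm{d}} = \mathrm{FP}_{\mathrm{w}}+\Delta\mathrm{FP}$. The common term $\mathrm{TP}_{\mathrm{w}}(n+\mathrm{FP}_{\mathrm{w}})$ then cancels, giving $\Delta\mathrm{TP}\,(n+\mathrm{FP}_{\mathrm{w}}) \ge \mathrm{TP}_{\mathrm{w}}\Delta\mathrm{FP}$. Dividing by $\mathrm{TP}_{\mathrm{w}}>0$ yields the stated criterion. The bound $\rho_{\mathrm{w}} \ge n/\mathrm{TP}_{\mathrm{w}} = 1/\mathrm{Rec}_{\mathrm{w}}$ holds because $\mathrm{FP}_{\mathrm{w}} \ge 0$.

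For the corollary on added unmatched points, I use $\Delta\mathrm{TP} \le n - \mathrm{TP}_{\mathrm{w}}$. This gives
\[
\rho_{\mathrm{w}}\Delta\mathrm{TP} \le \frac{n+\mathrm{FP}_{\mathrm{w}}}{\mathrm{TP}_{\mathrm{w}}}(n-\mathrm{TP}_{\mathrm{w}}) = (n+\mathrm{FP}_{\mathrm{w}})(1/\mathrm{Rec}_{\mathrm{w}}-1).
\]
So if $\Delta\mathrm{FP}$ exceeds this quantity, the criterion fails. The version with expectations is the same algebra applied to the numbers $\mathbb{E}n$, $\mathbb{E}\mathrm{TP}$ and $\mathbb{E}\mathrm{FP}$, with F1 defined from expected counts. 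Nothing in the derivation used integrality, and the requirement $\mathbb{E}\mathrm{TP}_{\mathrm{d}} \le \mathbb{E}n$ still holds pointwise.

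For the last claim, I first note that when centres are more than $2\tau$ apart, hit recall equals matched recall, as stated in the definitions. Each returned point lies within $\tau$ of at most one target, so $\mathrm{TP}$ equals the number of detected targets. Because every target is interior to $\cw$, the detected targets interior to $\cw$ are all detected targets, and $\mathrm{TP}_{\mathrm{w}}$ counts them. \Cref{thm:safe} or \cref{thm:exact}(a) with $a=1$ then gives $\mathbb{E}\,\mathrm{TP}_{\mathrm{d}} \ge \mathbb{E}\,\mathrm{TP}_{\mathrm{w}}$, that is, $\Delta\mathrm{TP} \ge 0$ in expectation.

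The main obstacle is matching the final hypothesis to the criterion. ``Adds no more unmatched points than hits'' means $\Delta\mathrm{FP} \le \Delta\mathrm{TP}$. Since $\Delta\mathrm{TP}\ge 0$ and $\rho_{\mathrm{w}} \ge 1/\mathrm{Rec}_{\mathrm{w}} \ge 1$, we get $\Delta\mathrm{FP} \le \Delta\mathrm{TP} \le \rho_{\mathrm{w}}\Delta\mathrm{TP}$, and the equivalence concludes. One further point needs checking: the split's duplicate points in overlaps must be counted in $\mathrm{FP}_{\mathrm{d}}$, not in $\mathrm{TP}_{\mathrm{d}}$. This is consistent because $\mathrm{TP}$ is defined as a maximum one-to-one matching. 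Finally, I would treat the degenerate case $\mathbb{E}\mathrm{TP}_{\mathrm{w}}=0$ by excluding it, as the hypothesis $\mathrm{TP}_{\mathrm{w}}>0$ does.
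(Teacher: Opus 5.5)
Your proposal is correct and follows the paper's proof essentially step for step. It uses the same cross-multiplication with cancellation of $\mathrm{TP}_{\mathrm{d}}\mathrm{TP}_{\mathrm{w}}$, the bound through $\Delta\mathrm{TP} \le n - \mathrm{TP}_{\mathrm{w}}$, the remark that the algebra uses only counts and so carries over to expectations, and the appeal to \cref{thm:safe} (with disjoint $\tau$-disks making $\mathrm{TP}$ the number of detected targets) combined with $\rho_{\mathrm{w}} \ge 1$; you also make explicit that $\Delta\mathrm{TP} \ge 0$ is what licenses $\Delta\mathrm{TP} \le \rho_{\mathrm{w}}\Delta\mathrm{TP}$, which the paper leaves implicit.
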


\begin{proof}
With $|O| = \mathrm{TP} + \mathrm{FP}$, $F_1 = 2\mathrm{TP}/(n + |O|)$. For $\mathrm{TP}_{\mathrm{w}} > 0$, $F_{1,\mathrm{d}} \ge F_{1,\mathrm{w}}$ holds if and only if $\mathrm{TP}_{\mathrm{d}}(n + \mathrm{TP}_{\mathrm{w}} + \mathrm{FP}_{\mathrm{w}}) \ge \mathrm{TP}_{\mathrm{w}}(n + \mathrm{TP}_{\mathrm{d}} + \mathrm{FP}_{\mathrm{d}})$, that is, $(\mathrm{TP}_{\mathrm{d}} - \mathrm{TP}_{\mathrm{w}})(n + \mathrm{FP}_{\mathrm{w}}) \ge \mathrm{TP}_{\mathrm{w}}(\mathrm{FP}_{\mathrm{d}} - \mathrm{FP}_{\mathrm{w}})$; dividing by $\mathrm{TP}_{\mathrm{w}}$ gives the condition, and $\rho_{\mathrm{w}} \ge n/\mathrm{TP}_{\mathrm{w}} = 1/\mathrm{Rec}_{\mathrm{w}}$. With $\Delta\mathrm{TP} \le n - \mathrm{TP}_{\mathrm{w}}$, the condition implies $\Delta\mathrm{FP} \le \rho_{\mathrm{w}}(n - \mathrm{TP}_{\mathrm{w}}) = (n + \mathrm{FP}_{\mathrm{w}})(1/\mathrm{Rec}_{\mathrm{w}} - 1)$. The argument uses only the counts, so it holds equally for their expectations. If centres are more than $2\tau$ apart, the disks of radius $\tau$ around them are disjoint, so a maximum matching within $\tau$ matches exactly one point to each detected target and $\mathrm{TP}$ equals the number of detected targets; \cref{thm:safe} then gives $\mathbb{E}\,\mathrm{TP}_{\mathrm{d}} \ge \mathbb{E}\,\mathrm{TP}_{\mathrm{w}}$, and if $\mathbb{E}\,\Delta\mathrm{FP} \le \mathbb{E}\,\Delta\mathrm{TP}$ the condition holds because $\rho_{\mathrm{w}} \ge 1$.
\end{proof}

The rate $\rho_{\mathrm{w}}$ is large whenever the whole-image call has low recall: a split can then add many unmatched points per additional hit and still raise F1, whereas when the whole image already yields most targets it can afford few. Duplicates in overlaps are unmatched points and are priced at the same rate.

\subsection{Information: \cref{thm:capacity}}
\label{sec:proofs-bits}

\begin{restate}{Theorem}{thm:capacity}{What a hit costs, full form}
Let a call, or any procedure, return a finite set $\hat Y$ of $R$ points for a region of area $A$ containing the centres $Y$ of $n$ targets, with $n$ and $A$ fixed and $Y$ a labelled tuple whose labels are exchangeable, and let $\mathrm{TP}$ be the size of a maximum one-to-one matching of points to centres within $\tau$. Let $p_k$ be the largest probability, over $k$ distinct labelled centres and $k$ disks of radius $\tau$, that each of these centres lies in its disk, and $(n)_k = n!/(n-k)!$. Then
\[
\mathbb{E}\big[-\log_2 p_{\mathrm{TP}} - \log_2 (n)_{\mathrm{TP}}\big] \le I(Y; \hat Y) + o,
\]
where $o = \mathbb{E}\log_2\binom{R}{\mathrm{TP}} + \log_2(n+1)$. If the centres are independent and uniform on the region, then with $\Lbit = n\log_2\big(A/(n\pi\tau^2)\big)$ and the matched recall $\mathrm{Rec} = \mathrm{TP}/n$, $\mathbb{E}[\mathrm{Rec}] \cdot \Lbit \le I(Y; \hat Y) + o$. Consequently, for independent uniform centres and $\Lbit > 0$: (a) a call whose answer carries at most $C$ bits about the target positions has expected recall at most $(C + o)/\Lbit$; (b) any policy whose report depends on the image only through the model's responses reaches expected recall $\mathbb{E}[\mathrm{Rec}]$ only if its transcript carries at least $\mathbb{E}[\mathrm{Rec}]\,\Lbit - o$ bits about the target positions, with $\Lbit$ computed for the whole image.
\end{restate}

\begin{proof}
Fix a deterministic rule that picks a maximum matching $M$ for each pair $(Y, \hat Y)$, with $\hat Y$ listed in a fixed order; $M$ records $\mathrm{TP}$, which labelled centres and which points are matched, and how. Then $I(Y; \hat Y) = I(Y; \hat Y, M) - I(Y; M \mid \hat Y) \ge I(Y; \hat Y, M) - H(M \mid \hat Y)$. Given $(\hat Y, M)$ with $\mathrm{TP} = k$, $Y$ lies in the set $\Sigma$ of configurations in which the $k$ matched labelled centres lie in the disks of radius $\tau$ around their matched points, and $P_Y(\Sigma) \le p_k$. Since the conditional law of $Y$ is supported on $\Sigma$, $D(P_{Y \mid \hat Y, M} \,\|\, P_Y) \ge -\log_2 P_Y(\Sigma) \ge -\log_2 p_k$, so $I(Y; \hat Y, M) \ge \mathbb{E}[-\log_2 p_{\mathrm{TP}}]$. Given $\hat Y$, $M$ is determined by $\mathrm{TP}$ (at most $n+1$ values), the matched labels and points, and their pairing, at most $(n)_k\binom{R}{k}$ choices for $\mathrm{TP} = k$; so $H(M \mid \hat Y) \le \log_2(n+1) + \mathbb{E}[\log_2 (n)_{\mathrm{TP}} + \log_2 \binom{R}{\mathrm{TP}}]$. Combining gives the first inequality. For independent uniform centres, a disk of radius $\tau$ meets the region in area at most $\pi\tau^2$, so $p_k \le (\pi\tau^2/A)^k$, and $(n)_k \le n^k$; hence the left side is at least $\mathbb{E}[\mathrm{TP}]\log_2(A/(n\pi\tau^2)) = \mathbb{E}[\mathrm{Rec}]\,\Lbit$. (a) follows directly. (b): the report is a function of the transcript $T$, which includes the policy's randomness, so $Y \to T \to \hat Y$ and $I(Y; \hat Y) \le I(Y; T)$, which equals the information the responses carry about $Y$ given the policy's randomness, because that randomness is independent of $Y$.
\end{proof}

\paragraph{Tightness near full recall.} Let the region be a rectangle whose sides are multiples of $\sqrt2\tau$, cut into $N_{\mathrm{cell}} = A/(2\tau^2)$ cells of side $\sqrt2\tau$, and let $n \le N_{\mathrm{cell}}$. The centre of a cell lies within $\tau$ of every point of the cell. Report the centre of every cell that contains a target centre. Each reported point lies within $\tau$ of a target in its own cell and distinct cells give distinct targets, so $\mathrm{TP} = R$ is the number of occupied cells and $o = \log_2(n+1)$. Two independent uniform centres share a cell with probability $1/N_{\mathrm{cell}}$, so $\mathbb{E}[\mathrm{Rec}] \ge 1 - (n-1)/(2N_{\mathrm{cell}})$. The report is a set of at most $n$ cells, so $I(Y; \hat Y) \le \log_2 \sum_{j \le n} \binom{N_{\mathrm{cell}}}{j} \le n\log_2(eN_{\mathrm{cell}}/n) = \Lbit + n\log_2(e\pi/2)$: about 2.1 bits per target more than $\Lbit$. The term $o$ charges reports that match nothing: an answer that lists every cell of a grid finds every target while saying nothing about where they are. Doubling both the targets and the area of a call doubles $\Lbit$; doubling the targets at a fixed area doubles it less one bit per target; packing $k$ views of equal density into one image adds their contents.

\subsection{Search strategies: \cref{prop:frontier}, adaptive zooming and zoom tools}
\label{sec:proofs-search}

\begin{assumption}[Graded recognition]
\label{ass:graded}
Each image's pixels depend on the image $X$ only within its view, and the model's response to a call depends only on the call's images and text and on the history of responses. There is a non-decreasing $f: [0,\infty) \to [0,1]$ such that, for any two images that differ only by one target with box $B$ and for any history of earlier responses, the laws of the model's response to a call under the two images differ in total variation by at most $\sum_I f(S_I)$, the sum over the images $I$ of the call whose views meet $B$.
\end{assumption}

$f(S)$ is at least the probability that a view at $S$ detects a target it shows minus the probability that it reports one where there is none. Because $f$ bounds the change in the whole response, a measured detection curve gives a lower bound on $f$.

\begin{restate}{Proposition}{prop:frontier}{Recall-cost frontier, full form}
Under \cref{ass:graded}, let $X_0$ be an image and $\Omega$ a measurable set of centres, $0 < |\Omega| < \infty$, such that for every $x \in \Omega$ the image $X_x$, $X_0$ with one target of box $B_x = x + B_0$ added at $x$, is valid and $x$ is more than $2\tau$ from every target of $X_0$. Consider any policy, possibly adaptive and randomized, whose calls and final report depend on the image only through the model's responses, with randomness independent of the image, and which makes at most $K$ calls almost surely, for some $K$. Let $\bar R$ be the probability, averaged over $x \in \Omega$, that its report on $X_x$ contains a point within $\tau$ of $x$, and $\overline{\mathrm{FP}}_0$ the expected number of reported points on $X_0$ farther than $\tau$ from every target. Then its expected token cost on $X_0$, counting all images of every call, is at least
\[
\frac{\bar R\, |\Omega| - \pi\tau^2\, \overline{\mathrm{FP}}_0}{\beta\, \kappa_f\, m^2}, \qquad \kappa_f = \sup_{S > 0} \frac{f(S)}{S^2},
\]
where $\beta \ge 1$ bounds $(1 + w/a_x)(1 + h/a_y)$ over the $a_x \times a_y$ views it uses.
\end{restate}

\begin{proof}
Fix $x \in \Omega$ and write $\Pr_0, \Pr_x$ for laws under $X_0$ and $X_x$. Let the transcript be the sequence of calls and responses, together with the policy's randomness. For $j = 0, \dots, K$, let $H_j$ be the law of the transcript when the first $j$ responses are drawn under $X_0$ and the rest under $X_x$, the policy choosing calls from the history as always; $H_K$ and $H_0$ are the laws under $X_0$ and $X_x$. $H_j$ and $H_{j-1}$ share the law of the first $j-1$ steps and use the same kernels after step $j$, so by the data-processing inequality for total variation, $\mathrm{TV}(H_j, H_{j-1})$ is at most the expected total variation of the $j$-th response law given the history, under $X_0$, which by \cref{ass:graded} is at most $\mathbb{E}_0 \sum_{I \in \text{call } j,\, V_I \cap B_x \ne \emptyset} f(S_I)$. By the triangle inequality, $\mathrm{TV}(\Pr_0, \Pr_x) \le \mathbb{E}_0 \sum_{j}\sum_{I \in \text{call } j} f(S_I)\,\mathbb{1}[V_I \cap B_x \ne \emptyset]$. The report is a function of the transcript and of independent randomness, so for the event $F_x$ that it contains a point within $\tau$ of $x$, $\Pr_x(F_x) \le \Pr_0(F_x) + \mathrm{TV}(\Pr_0, \Pr_x)$. Integrate over $x \in \Omega$. The left side gives $\bar R|\Omega|$. For the first term on the right, every reported point within $\tau$ of some $x \in \Omega$ is farther than $\tau$ from every target of $X_0$, because $x$ is more than $2\tau$ from them, and it covers at most $\pi\tau^2$ of $\Omega$; so $\int_\Omega \Pr_0(F_x)\,dx \le \pi\tau^2\,\overline{\mathrm{FP}}_0$. For the second, $\{x : V_I \cap (x + B_0) \ne \emptyset\} = V_I \oplus B_0$ has area $(a_x + w)(a_y + h) \le \beta |V_I|$, and an image costs at least $|V_I| S_I^2/m^2$ tokens, so $f(S_I)|V_I| \le \kappa_f m^2\,\mathrm{tokens}(I)$. Summing over images and calls bounds the integral by $\beta\kappa_f m^2$ times the expected token cost on $X_0$, and rearranging gives the claim.
\end{proof}

If $f(S) = \min\{1, (S/S_0)^\gamma\}$ with $\gamma \ge 2$, then $f(S)/S^2 = S^{\gamma-2}/S_0^\gamma \le 1/S_0^2$ for $S \le S_0$ and $1/S^2 \le 1/S_0^2$ above, so $\kappa_f = 1/S_0^2$: every search pays the coverage cost of \cref{prop:geometry}(b) at $S^\ast = S_0$ and $\kappa = 1$, up to the factors $\bar R$, $\beta$ and $|\Omega|/(WH)$, less the price of its false points. For $\gamma < 2$ the ratio is unbounded as $S \to 0$ and the bound is vacuous: spreading tokens thinly over the whole image could then pay. Which case holds is a property of the model, and lowering $S_0$ lowers the frontier quadratically. The frontier covers tiles, high-resolution calls, model-based propose-and-verify and zoom agents, with imperfect recall and false points charged at an explicit rate.

\paragraph{A sharp threshold.} Under a hard recognition threshold, an always-correct adaptive policy must pay the coverage cost exactly.

\begin{assumption}[Recognition threshold]
\label{ass:threshold}
There is an $S_{\mathrm{rec}} > 0$ such that, for any two images that differ only by one target with box $B$ and for any history of earlier views and responses, the model's response to a view $V$ has the same conditional distribution under both images unless $V$ shows targets at $S_V \ge S_{\mathrm{rec}}$ and $B$ lies in $V$.
\end{assumption}

\begin{proposition}[Adaptive zooming pays the coverage cost]
\label{prop:adaptive}
Under \cref{ass:threshold}, let $\mathcal{F}$ be a family of images, $X_0 \in \mathcal{F}$, and $D$ the set of centres $x$ such that $X_0$ with one target added at $x$ lies in $\mathcal{F}$. Consider any policy, possibly adaptive and randomized, whose views and final report depend on the image only through the model's responses, that takes views and then reports a target set, and suppose that on every image in $\mathcal{F}$ the report is correct (a one-to-one matching within $\tau$ covers every instance and leaves no report unmatched) with probability at least $1-\delta$, $\delta < \tfrac12$. Then on $X_0$ its expected token cost is at least $(1-2\delta)\,|D|\,S_{\mathrm{rec}}^2/m^2$.
\end{proposition}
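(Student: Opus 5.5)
The plan is a two-point (Le Cam–style) argument: compare $X_0$ with each perturbed image $X_x$, $x \in D$, and show that a reliable policy must, on $X_0$, take views at $S_V \ge S_{\mathrm{rec}}$ that contain $B_x$ for most $x$. Integrating over $D$ and charging each such view at least $|V|S_V^2/m^2 \ge |V|S_{\mathrm{rec}}^2/m^2$ tokens, as in the proof of \cref{prop:geometry}(b), then gives the bound.

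\textbf{Step 1 (coupling on unseen locations).} Fix $x \in D$ and let $E_x$ be the event, defined on the transcript, that some view $V$ taken with $S_V \ge S_{\mathrm{rec}}$ contains $B_x$. Run the policy on $X_0$ and on $X_x$ with the same randomness. By \cref{ass:threshold} and induction over the (finitely many, or via a stopping-time argument) steps, the conditional response laws agree at every step as long as no view so far has shown $B_x$ at $S_V \ge S_{\mathrm{rec}}$. The transcripts can therefore be coupled to coincide up to the first such view. In particular $E_x$ has the same probability under both laws, and on $E_x^c$ the whole transcript, and hence the report, has the same law. This gives $\mathrm{TV}$ between the report laws restricted to $E_x^c$ equal to zero, i.e.\ $\Pr_0(\cdot \cap E_x^c) = \Pr_x(\cdot \cap E_x^c)$.

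\textbf{Step 2 (correctness forces seeing).} The reports correct for $X_0$ and for $X_x$ are disjoint events. A correct report for $X_x$ contains a point matched within $\tau$ to $x$. Since $x$ is more than $2\tau$ from every other target (or, absent that hypothesis, by the one-to-one count), a correct report for $X_0$ has exactly $n_0$ points all matched to targets of $X_0$, while a correct report for $X_x$ needs $n_0+1$ matched points. Let $G_0$ and $G_x$ be these events, each of probability at least $1-\delta$ under its own law. Then
\[
1-\delta \le \Pr_x(G_x) \le \Pr_x(E_x) + \Pr_x(G_x \cap E_x^c) = \Pr_0(E_x) + \Pr_0(G_x \cap E_x^c) \le \Pr_0(E_x) + \Pr_0(G_0^c) \le \Pr_0(E_x) + \delta ,
\]
so $\Pr_0(E_x) \ge 1-2\delta$.

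\textbf{Step 3 (integrate and charge tokens).} Integrate Step 2 over $x \in D$:
\[
(1-2\delta)|D| \le \mathbb{E}_0 \int_D \mathbb{1}[E_x]\,dx \le \mathbb{E}_0 \sum_{V:\,S_V \ge S_{\mathrm{rec}}} |\{x : x+B_0 \subseteq V\}| \le \mathbb{E}_0 \sum_{V:\,S_V \ge S_{\mathrm{rec}}} |V| ,
\]
using Tonelli and a union bound over views. Each such view costs at least $|V|S_V^2/m^2 \ge |V|S_{\mathrm{rec}}^2/m^2$ tokens, with the fixed-grid case handled as in \cref{prop:geometry}(b), and the other views cost nonnegative tokens. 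Rearranging gives the claim.

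The main obstacle is Step 1 done rigorously for adaptive, randomized policies with possibly unbounded numbers of views. Measurability of $E_x$ in $(x,\omega)$ and the coupling up to a stopping time must be handled carefully; I would build the coupling step by step on a common probability space, using \cref{ass:threshold} conditionally on the history, and then pass to the limit in the number of steps.
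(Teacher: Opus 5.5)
Your proposal is correct and follows essentially the paper's route. You couple the transcripts under $X_0$ and $X_x$ up to the first view at $S_V \ge S_{\mathrm{rec}}$ that contains $B_x$, use that correct reports for the two images are disjoint because their instance counts differ by one to get $\Pr_0(E_x) \ge 1-2\delta$, then integrate over $D$ by Fubini and charge each such view $|V|S_{\mathrm{rec}}^2/m^2$ tokens; the paper differs only cosmetically, using the complementary event, summing the two error probabilities, and bounding by the area of the union $Z$ of these views instead of a union bound. The appeal to $2\tau$ separation in Step 2 is unnecessary and is not a hypothesis of this statement; the count argument alone suffices, as in the paper.
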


\begin{proof}
Write $X_x$ for $X_0$ with one target, box $B_x$, added at centre $x \in D$. Let $Z$ be the union of all views the policy takes at $S_V \ge S_{\mathrm{rec}}$. Each such view costs at least $|V|S_V^2/m^2 \ge |V|\,S_{\mathrm{rec}}^2/m^2$ tokens and $\sum_V |V| \ge |Z|$, so the cost is at least $|Z|\,S_{\mathrm{rec}}^2/m^2$; it suffices to show $\mathbb{E}_0|Z| \ge (1-2\delta)|D|$. Fix $x \in D$ and let $E_x$ be the event that no view with $S_V \ge S_{\mathrm{rec}}$ contains $B_x$. By \cref{ass:threshold} and induction over views, the transcripts under $X_0$ and $X_x$ have the same law up to the first view at $S_V \ge S_{\mathrm{rec}}$ that contains $B_x$. Hence $\Pr_0(A \cap E_x) = \Pr_x(A \cap E_x)$ for every event $A$ of the report. No report is correct for both images, because a correct report has as many points as instances, which differ by one. Therefore
\begin{align*}
&\Pr_0(\mathrm{err}) + \Pr_x(\mathrm{err})\\
&\quad\ge \Pr_0(\hat Y \ne Y_0,\, E_x) + \Pr_x(\hat Y \ne Y_x,\, E_x)\\
&\quad= \Pr_0(\hat Y \ne Y_0,\, E_x) + \Pr_0(\hat Y \ne Y_x,\, E_x)\\
&\quad\ge \Pr_0(E_x),
\end{align*}
and correctness at least $1-\delta$ on both images gives $\Pr_0(E_x) \le 2\delta$. Since $x \in B_x$, the event $x \notin Z$ implies $E_x$, so $\Pr_0(x \notin Z) \le 2\delta$, and by Fubini $\mathbb{E}_0|D \setminus Z| \le 2\delta|D|$, hence $\mathbb{E}_0|Z| \ge (1-2\delta)|D|$.
\end{proof}

If each view with $S_V < S_{\mathrm{rec}}$ or not containing $B_x$ may change the response law by at most $\epsilon$ in total variation, and the policy takes at most $k$ views, the same argument on the stopped transcript gives $(1 - 2\delta - k\epsilon)|D|S_{\mathrm{rec}}^2/m^2$. Area excluded a priori, such as title blocks and margins, is not in $D$. Agents that crop and zoom therefore do not escape splitting; they perform it, and an overview shown at $S \ge S_{\mathrm{rec}}$ is itself a covering view whose tokens count. Our data show recognition rising gradually rather than at a threshold (\cref{sec:S-curves}), which is why the main text uses the graded frontier.

\paragraph{Zoom tools.} The three steps of the zoom-agent ablation (\cref{sec:agents}) are ordered by the assumptions at the level of single calls.

\begin{proposition}[Zoom tools]
\label{prop:zoom}
Let an overview call $c_{\mathrm{o}}$ show the whole image from a lower-resolution copy at scale $s_{\mathrm{o}}$, with its own core. Fix a region and three calls on it with a common core and the protocol $\pi_{\mathrm{crop}}$, $c_2$ and $c_3$ with the same padding: $c_1$, the region cut from the overview at scale $s_{\mathrm{o}}$; $c_2$, the same cut enlarged to a scale $s' > s_{\mathrm{o}}$; and $c_3$, native pixels of the region at scale $s'$. Let $U$ be a set of centres interior to all four calls. Under (R), (M) and (I), in expected number of targets detected in $U$, $c_1$ is no worse than $c_{\mathrm{o}}$ (M), $c_2$ is no worse than $c_1$ (R), and $c_3$ is no worse than $c_2$, which is a garbling of it (I).
\end{proposition}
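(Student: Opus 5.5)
The plan is to verify three separate one-step comparisons. For each, I check that the pair of calls stands in the relation the relevant assumption names, with $U$ contained in the interiors of both calls. The assumption then applies directly with the fixed $\mathcal{D}$ and weight $a$. No chaining through intermediate calls as in \cref{lem:crop} should be needed, because the three steps are already arranged so that each one changes exactly one thing.

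\emph{Step 1 ($c_1$ vs.\ $c_{\mathrm{o}}$, by (M)).} The call $c_1$ shows the region from the same lower-resolution source as $c_{\mathrm{o}}$. That source trivially agrees with itself on $V_{c_1} \subseteq V_{c_{\mathrm{o}}}$, and $c_1$ is shown at the same scale $s_{\mathrm{o}}$ with protocol $\pi_{\mathrm{crop}}$. Padding may differ. So $c_1$ is a crop of $c_{\mathrm{o}}$ at the same effective scale in the sense of the definitions. Since $U \subseteq J_{c_{\mathrm{o}}} \cap J_{c_1}$, (M) gives $\nu_{c_1}(U) \ge \nu_{c_{\mathrm{o}}}(U)$. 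The point to check here is that the protocol requirement is met: crops are defined with $\pi_{\mathrm{crop}}$ regardless of the parent's protocol, so the overview's own protocol does not matter.

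\emph{Step 2 ($c_2$ vs.\ $c_1$, by (R)).} Both calls show the same region of the same source (the overview copy) with the same core and protocol, at $s' > s_{\mathrm{o}}$. Padding may differ, which refinement allows. So $c_2$ refines $c_1$, and (R) on $U \subseteq J_{c_1} \cap J_{c_2}$ gives $\nu_{c_2}(U) \ge \nu_{c_1}(U)$. Interpolated enlargement is exactly what "the region resampled to scale $s'$" means, since we treat cropping and resampling as commuting.

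\emph{Step 3 ($c_3$ vs.\ $c_2$, by (I)).} Both calls share region, core, padding, protocol and scale $s'$. The source of $c_2$ is the lower-resolution copy, which is a fixed degradation of the native image that does not depend on the targets. So $c_2$ is a garbling of $c_3$, and (I) gives $\nu_{c_3}(U) \ge \nu_{c_2}(U)$.

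The main obstacle is definitional rather than analytic. It lies in justifying that the overview source counts as a legitimate source for crops and refinements and as a garbling of the native source. In particular, the garbling definition requires identical padding, which the statement supplies only for $c_2$ and $c_3$. That is precisely why Step 3 holds while Steps 1 and 2 tolerate differing padding. I would spell out this matching of hypotheses, then note that chaining the three inequalities gives $\nu_{c_3}(U) \ge \nu_{c_{\mathrm{o}}}(U)$.
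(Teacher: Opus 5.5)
Your proposal is correct and matches the paper's proof. Each step applies (M), (R) or (I) directly, once you have checked that $c_1$ is a crop of $c_{\mathrm{o}}$ at equal scale, that $c_2$ refines $c_1$, and that $c_2$ is a garbling of $c_3$; no intermediate call is needed, and your remarks on $\pi_{\mathrm{crop}}$ and on why only the garbling step needs identical padding are accurate bookkeeping that the paper leaves implicit.
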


\begin{proof}
$c_1$ shows a sub-region of the source of $c_{\mathrm{o}}$ at scale $s_{\mathrm{o}}$, so it is a crop of $c_{\mathrm{o}}$ at the same effective scale (M). $c_2$ shows the same region of the same source as $c_1$ with the same core at a larger scale, so it refines $c_1$ (R). $c_3$ and $c_2$ show the same region with the same core and scale; the source of $c_2$ is the lower-resolution copy, a fixed degradation of the native image, so $c_2$ is a garbling of $c_3$ (I).
\end{proof}

\cref{lem:crop} also orders two other designs we test. When several tiles are packed into one image, each single tile is a crop of the packed image if sent at no smaller scale (up to the separator lines drawn in the tile margins, outside the cores). When content is added around a fixed region, the region alone is a crop of the larger image if that image is sent at no larger scale.

\subsection{Every local task with one position and one answer per object}
\label{sec:proofs-local}

Nothing above depends on what a target is. An \emph{item} has an anchor point $x$; a support box, centred at $x$, inside the $w \times h$ box centred at $x$, and containing the pixels needed to answer the item; and a correct answer, such as a class, a mask, a transcription or a value. An item is interior to a call if the $w \times h$ box centred at its anchor lies inside $V_c$ and its anchor lies at least $\tau$ inside $K_c$.

\paragraph{Enumeration tasks.} A call returns located answers, and an item is detected if a kept answer lies within $\tau$ of its anchor and equals its correct answer. Detection of one or several classes, instance segmentation scored by an answer per anchor, text spotting, keypoint and landmark localization, cell detection and classification, and listing the interactive elements of a screenshot are enumeration tasks. With $p_c(x)$ and $\nu_c(U)$ defined by this event, \cref{ass:mono} is stated as before, and \cref{lem:crop}, \cref{thm:safe}, \cref{prop:tight}, \cref{thm:exact}, \cref{cor:safe}, \cref{thm:robust}, \cref{prop:nonid}, \cref{prop:f1} and \cref{prop:zoom} hold as stated, where a matching pairs only answers and items with equal answers. The proofs of \cref{lem:crop}, \cref{thm:safe}, \cref{thm:exact}(a) and (c), \cref{cor:safe}, \cref{thm:robust} and \cref{prop:zoom} use the detection event only through $\nu_c$ and the sets $J_c$, which depend only on anchors. The proofs of \cref{prop:tight}, \cref{thm:exact}(b) and \cref{prop:nonid}, and the attainment in \cref{thm:robust}, use models that return anchors with their correct answers; with anchors of items with equal answers more than $2\tau$ apart, a returned answer matches an item only if it is that item's anchor and answer.

\paragraph{Assignment tasks.} The items are given and each needs an answer: classify each listed region, read each field of a form, label each pixel or each cell of a grid (semantic segmentation). Calls and regions may depend on the image's size and on the given items, but not on their correct answers. A call answers the items whose anchors lie in its kept core; an item is interior to the call if the $w \times h$ box centred at its anchor lies in the view and its anchor in the core, with no margin $\tau$. $\nu_c(U)$ counts correct answers to the items anchored in $U \cap J_c$. A split answers each item by the call whose core contains its anchor, so no tolerance and no overlap of cores is needed.

\begin{theorem}[Safe splitting for assignment tasks]
\label{thm:assign}
Let $\cw$ show the whole native image $X$, with $X$ as its core, at effective scale $s_{\mathrm{w}}$ and answer every item, and let $c_1, \dots, c_k$ be crops of $\cw$ whose cores, taken half-open and intersected with $X$, partition it, each view $V_i$ containing $(K_i \oplus B_0) \cap X$. Under (R) and (M), over the items whose $w \times h$ boxes centred at their anchors lie in the image, the expected number of correct answers of the split is at least that of $\cw$.
\end{theorem}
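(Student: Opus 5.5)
The proof will mirror the proofs of \cref{thm:safe} and \cref{thm:exact}(a), with the margin $\tau$ dropped and the cores' partition used in place of a covering. Work with the assignment-task versions of $J_c$ and $\nu_c$ defined just above: $J_c$ is the set of anchors whose $w \times h$ box lies in $V_c$ and which lie in $K_c$, and $\nu_c(U)$ counts correct answers to items anchored in $U \cap J_c$. For the whole-image call $\cw$, with core $X$, the interior set is $\Jw' = X \ominus B_0$, the anchors whose boxes lie in the image. These are exactly the items the statement quantifies over, and $\cw$ answers all of them.

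\emph{Step 1: interior sets partition $\Jw'$.} Set $U_i = K_i \cap (X \ominus B_0)$. Because the half-open cores, intersected with $X$, partition $X$, the $U_i$ partition $\Jw'$. Now take an anchor $x \in U_i$. Its box $x + B_0$ lies in $X$, and since $x \in K_i$ it also lies in $K_i \oplus B_0$. So the box lies in $(K_i \oplus B_0) \cap X \subseteq V_i$, which gives $x \in J_{c_i}$. Hence $U_i \subseteq J_{c_i} \cap \Jw'$. The split answers the item at $x$ by $c_i$, the unique call whose core contains $x$. The split's expected number of correct answers over $\Jw'$ is therefore exactly $\sum_i \nu_{c_i}(U_i)$. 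This is an equality rather than the inequality used for enumeration tasks, because there is no union of detections and no merging step.

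\emph{Step 2: apply crop dominance.} Each $c_i$ is a crop of $\cw$, and $U_i \subseteq J_{\cw} \cap J_{c_i}$. So \cref{lem:crop} gives $\nu_{c_i}(U_i) \ge \nu_{\cw}(U_i)$; its proof uses only (R), (M) and the interface assumption, and does not depend on how $J_c$ or the detection event is defined. Summing over the partition gives $\sum_i \nu_{\cw}(U_i) = \nu_{\cw}(\Jw')$ by additivity, which is the claim.

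\emph{Main obstacle.} The delicate point is checking that \cref{lem:crop} carries over to the assignment setting. There are two things to verify. First, the intermediate call $\tilde c_i$ must have the same interior set as $c_i$. This holds because $J$ depends only on the view and the core, not on the scale. Second, the calls must still be target-independent rules. Here they may depend on the given items but not on the correct answers, and that dependence is harmless: the lemma's argument only chains (M) and (R) on fixed regions, and the assumptions are stated for calls specified by such rules. A minor edge case is cores that extend past $X$. The statement already intersects them with $X$, and any part of a view outside $X$ counts as blank padding, so the crop relation is preserved.
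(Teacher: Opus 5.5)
Your proof is correct and follows essentially the same route as the paper's. You partition the in-image anchors by the cores, use $(K_i \oplus B_0) \cap X \subseteq V_i$ to place each piece in the interior set of its crop, write the split's expected count as the exact sum $\sum_i \nu_{c_i}(U_i)$, and apply \cref{lem:crop} piece by piece; the paper indexes the pieces by the set $A$ of given anchors whose boxes lie in $X$ instead of by $X \ominus B_0$, which yields the same counts.
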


\begin{proof}
Let $A$ be the set of anchors of items whose boxes lie in $X$; it depends on the given items, not on their answers. An item with anchor $x \in A \cap K_i$ has its box $x + B_0$ inside $X$, hence inside $(K_i \oplus B_0) \cap X \subseteq V_i$, and its anchor in $K_i$, so it is interior to $c_i$; it is interior to $\cw$, whose view and core are the image. The split answers it by $c_i$, so its expected number of correct answers over these items is $\sum_i \nu_{c_i}(A \cap K_i)$. The proof of \cref{lem:crop} uses only (R), (M) and the interior sets, so it applies to the event of a correct answer and gives $\nu_{c_i}(A \cap K_i) \ge \nu_{\cw}(A \cap K_i)$; summing over the partition gives $\nu_{\cw}(A)$.
\end{proof}

Every item receives exactly one answer, and a wrong answer on a background pixel or an empty field counts like a missed object, so \cref{thm:assign} bounds both kinds of error at once. \cref{thm:exact}(c) and \cref{cor:safe} carry over with $e_x = w$ and $e_y = h$, taking as cores the half-open $p_x \times p_y$ interiors of the views, with the outermost extended to the image's edge, so that the cores partition the image, and \cref{prop:tight}(a) holds with the model that answers correctly when $S_c \ge \Sw$ and incorrectly otherwise.

\subsection{Two reference lines}
\label{sec:proofs-ref}

\begin{assumption}[Token-quantized position]
\label{ass:quant}
The model knows only which token cell contains a target centre, and within the cell the centre is uniform.
\end{assumption}

\begin{proposition}[Quantized-observer ceiling]
\label{prop:loc}
Under \cref{ass:quant}, any reported point $\hat{x}$ satisfies $P(\lVert \hat{x} - x \rVert \le \tau) \le \varphi(S/2)$, where $\varphi(z)$ is the area of the part of the unit square within distance $z$ of its centre, with equality when $\hat{x}$ is the cell centre. In closed form the bound is $\pi S^2/4$ for $S \le 1$ and $1$ for $S \ge \sqrt{2}$.
\end{proposition}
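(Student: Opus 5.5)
The plan is to work entirely in the coordinates of one token cell. Rescale so that a token has side $1$. A target of native side $m$ then spans $S$ token units, and the tolerance $\tau = m/2$ becomes $S/2$ in these units. By \cref{ass:quant}, conditional on everything the model knows, the centre $x$ is uniform on a known unit square $Q$ (the token cell containing it). Any reported point $\hat x$ is a function of the model's information, possibly randomized independently of the position of $x$ within $Q$. So it suffices to bound, for each fixed point $y$, the probability $P(\lVert y - x\rVert \le S/2) = |Q \cap D(y, S/2)|$, where $D(y,r)$ denotes the closed disk of radius $r$ about $y$. We then average over the randomness of $\hat x$ and over the cell.

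The key step is to show that $y \mapsto |Q \cap D(y,r)|$ is maximized at the centre of $Q$, which makes that maximum equal to $\varphi(r)$ by definition. The cleanest route is a rearrangement argument. The indicator of $Q$ is symmetric decreasing separately in each coordinate about the centre, and so is the indicator of a disk centred at the origin. By the one-dimensional Riesz--Hardy--Littlewood inequality, applied first in the first coordinate with the second fixed, and then in the second coordinate, the convolution $\mathbb{1}_Q * \mathbb{1}_{D(0,r)}(y)$ does not increase when either coordinate of $y$ is moved to the centre's coordinate. Alternatively, and more elementarily: the overlap $g(y) = (\mathbb{1}_Q * \mathbb{1}_D)(y)$ is the convolution of two log-concave, centrally symmetric functions. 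It is therefore log-concave and symmetric about the centre of $Q$, and any such function is maximized at its centre of symmetry. I would use the log-concavity route (Prékopa--Leindler) because it handles both coordinates at once. This is the step I expect to be the main obstacle, only in the sense of choosing a citable clean argument. The equality case is immediate, since $\hat x$ equal to the cell centre attains $\varphi(S/2)$.

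The closed form then follows by direct computation. For $r = S/2 \le 1/2$, that is $S \le 1$, the disk about the centre lies inside the unit square, so $\varphi = \pi r^2 = \pi S^2/4$. For $r \ge \sqrt2/2$, that is $S \ge \sqrt2$, the disk contains the whole square, whose half-diagonal is $\sqrt2/2$, so $\varphi = 1$. Between these values, $\varphi$ is the area of the disk minus four circular segments, and no closed form is claimed there. Finally, averaging the pointwise bound over cells and over any randomization of $\hat x$ preserves it, which gives $P(\lVert \hat x - x\rVert \le \tau) \le \varphi(S/2)$.
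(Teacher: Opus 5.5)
Your proof is correct, and its skeleton matches the paper's. Both condition on the cell and write the hit probability as the overlap $|Q \cap D(\hat x, S/2)|$, i.e.\ a convolution of the indicators of two centrally symmetric convex sets. Both then show this overlap peaks at the cell centre and read off $\pi S^2/4$ and $1$.

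The difference is the lemma you use for the peak. The paper cites Anderson's theorem. That gives at once that the overlap is non-increasing along every ray from the centre. It would also survive replacing the uniform law on $Q$ by any symmetric unimodal (quasi-concave) law.

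You use Prékopa's theorem plus central symmetry, with $c$ the centre of $Q$:
$\log g(c) \ge \tfrac12\log g(y) + \tfrac12\log g(2c - y) = \log g(y)$.
This needs log-concavity of both factors, which holds here since both are indicators of convex sets. It also gives ray-monotonicity, because a symmetric log-concave function of one variable is non-increasing away from its centre.

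Your coordinate-wise alternative is the most elementary of the three. $Q$ is a product of intervals and every section of the disk is a centred interval. So each step reduces to the fact that two intervals overlap most when their centres coincide, and no rearrangement inequality is actually needed.

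You are also more explicit than the paper on one point. Given the cell, a randomised $\hat x$ is independent of the position of $x$ within $Q$, and this is what justifies averaging the pointwise bound.
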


\begin{proof}
Conditioned on its cell $Q$ (side $t$ sent pixels), the centre $x$ is uniform on $Q$. Measured in sent pixels the tolerance is $\tau s$, so $P(\lVert \hat x - x\rVert \le \tau) = \lvert \mathcal{B}(\hat x, \tau s) \cap Q\rvert / t^2$. As a function of $\hat x$ this is the convolution of the indicators of two centrally symmetric convex sets, which by Anderson's theorem~\cite{anderson1955integral} is maximized at the centre of $Q$. Rescaling by $t$ gives $\varphi(\tau s/t) = \varphi(S/2)$. The disk of radius $z$ centred in the unit square lies inside it for $z \le \tfrac12$ (area $\pi z^2$), covers it for $z \ge \tfrac{\sqrt2}{2}$, and in between loses four circular segments, giving $\varphi(z) = \pi z^2 - 4\big(z^2\arccos\tfrac{1}{2z} - \tfrac12\sqrt{z^2 - \tfrac14}\big)$.
\end{proof}

In a one-dot probe every probed model except Gemini 2.5 placed an isolated dot within about 2 px, far below the token size (\cref{sec:coords}), so this observer describes Gemini 2.5, and the newer models localize far more finely than it.

\begin{proposition}[Propose and verify]
\label{prop:hybrid}
Fix a maximum one-to-one matching within $\tau$ between candidates and the $n$ instances. Let $u$ candidates be matched and $v$ unmatched, and let $\alpha_1$ and $\alpha_0$ be the fractions of matched and of unmatched candidates that a verifier accepts. Then
\[
\frac{2\alpha_1 u}{\alpha_1 u + \alpha_0 v + n} \;\le\; F_1 \;\le\; \frac{2q}{1+q}, \qquad q = u/n,
\]
and both bounds equal $2q/(1+q)$ when $\alpha_1 = 1$ and $\alpha_0 = 0$.
\end{proposition}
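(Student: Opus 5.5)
The plan is to write F1 of the accepted set directly as $F_1 = 2\mathrm{TP}/(n + |O|)$, as in the proof of \cref{prop:f1}. Here $O$ is the set of accepted candidates, and $\mathrm{TP}$ is the size of a maximum one-to-one matching within $\tau$ between $O$ and the $n$ instances. By definition of $\alpha_1$ and $\alpha_0$, the verifier accepts $\alpha_1 u$ of the matched candidates and $\alpha_0 v$ of the unmatched ones, so $|O| = \alpha_1 u + \alpha_0 v$. Both bounds then come from bracketing $\mathrm{TP}$ and using the monotonicity of the two expressions in $\mathrm{TP}$.

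For the lower bound, restrict the fixed matching to the accepted matched candidates. This gives a one-to-one matching within $\tau$ of size $\alpha_1 u$ inside $O$, so $\mathrm{TP} \ge \alpha_1 u$. With $|O|$ held fixed, $F_1$ is increasing in $\mathrm{TP}$, which gives $F_1 \ge 2\alpha_1 u/(\alpha_1 u + \alpha_0 v + n)$.

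For the upper bound, I use two facts.
\begin{itemize}
\item Every matching of $O$ is also a matching of the full candidate set. Since the fixed matching is maximum for the full set, $\mathrm{TP} \le u$.
\item Every matched point is a returned point, so $|O| \ge \mathrm{TP}$.
\end{itemize}
The second fact gives $F_1 \le 2\mathrm{TP}/(n + \mathrm{TP})$. The function $z \mapsto 2z/(n+z)$ is increasing, so the first fact gives $F_1 \le 2u/(n+u) = 2q/(1+q)$.

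For the equality case, setting $\alpha_1 = 1$ and $\alpha_0 = 0$ reduces the lower bound to $2u/(u+n) = 2q/(1+q)$, which equals the upper bound.

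The only step that needs care is the claim that a subset of the candidates cannot have a larger maximum matching than the whole set, which is what gives $\mathrm{TP} \le u$. This is immediate once the fixed matching is taken to be maximum, as the statement stipulates. The remaining steps are monotonicity of rational functions.
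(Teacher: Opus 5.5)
Your proposal is correct and follows the paper's proof essentially step for step. You get the lower bound by restricting the fixed matching to the accepted matched candidates, and the upper bound from $\mathrm{TP} \le u$ and $|O| \ge \mathrm{TP}$ together with the monotonicity of $z \mapsto 2z/(n+z)$; you spell out slightly more explicitly than the paper why a subset cannot have a larger matching.
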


\begin{proof}
The verifier keeps $\alpha_1 u + \alpha_0 v$ candidates. The $\alpha_1 u$ accepted matched candidates keep their partners in the fixed matching, so a maximum matching of the accepted set has at least $\alpha_1 u$ hits, and $F_1 \ge 2\alpha_1 u/(\alpha_1 u + \alpha_0 v + n)$. Any subset of candidates has at most $u$ hits, and a set with $\mathrm{TP}$ hits has at least $\mathrm{TP}$ elements, so $F_1 \le 2\,\mathrm{TP}/(\mathrm{TP} + n) \le 2u/(u + n)$.
\end{proof}

\section{The interface: token geometry, $S$ and coordinates}
\label{sec:interface}

\subsection{Token geometry}
\label{sec:geometry}

We sent blank white images of several sizes to every model and recorded the billed input tokens; the content of an image does not change its cost. \Cref{tab:calib} compares the measured geometry with vendor documentation~\cite{openai2026vision,anthropic2026vision,google2026mediares}, consulted in October 2026. Two behaviours were not in the documentation when we started: Gemini 2.5's \texttt{HIGH} setting is ignored when a request contains two images, so its high-setting calls place the legend crop in a band above the drawing, in one image, and GPT-5.5 and GPT-5.6 accept at least 8{,}192 patches at the default setting, where the documentation lists 2{,}500 for \texttt{high}; the current documentation states that GPT-5.6's default keeps the original size.

\begin{table*}[t]
\centering
\small
\begin{tabular}{p{3.0cm}p{5.5cm}p{6.2cm}}
\toprule
Model & Geometry (documentation) & Measured (input tokens of a blank image) \\
\midrule
GPT-5.4 & 32 px patches, $\times 1.2$; $\le$2{,}500 (auto), $\le$10{,}000 (original) & $2048^2$: 3{,}019; $3072^2$ original: 11{,}068 \\
GPT-5.5, GPT-5.6 & 32 px patches, $\times 1.2$; 2{,}500 (high), $\le$30{,}000 (original); for GPT-5.6, auto keeps the original size & $2048^2$: 4{,}934; $2896^2$: 8{,}281 patches, not resized; $6000\times4000$ original: 28{,}219 \\
GPT-6.1 & as GPT-5.6 & $3072^2$: 9{,}216 patches, not resized \\
Claude Sonnet 4.6 & 28 px; $\le$1{,}568 tokens, long side $\le$1{,}568 & $1536^2$ and larger: 1{,}542 \\
Claude Sonnet 5.5, Opus 5.5 & 28 px; $\le$4{,}784 tokens, long side $\le$2{,}576 & $1536^2$: 3{,}050; $1932^2$: 4{,}761; $2048^2$: 4{,}786 \\
Gemini 2.5 Flash & fixed grid, 258 tokens (default) & 258 at every size \\
Gemini 3.8 Flash & fixed grid, 1{,}120 (default and high), 2{,}240 (ultra-high) & 1{,}092 to 1{,}111; ultra-high 2{,}189 to 2{,}220 \\
Gemini 4 & not listed separately & about 1{,}091 at every size \\
Gemma 4 31B & fixed budget, 280 tokens (provider default) & 280 at every size \\
Qwen3-VL-235B & 32 px per token (16 px patches, $2\times2$ merge); cap set by the provider & about 2{,}500 at a long side of 1{,}600 px; one provider did not resize up to about 1{,}500 px \\
\bottomrule
\end{tabular}
\caption{Input geometry from documentation and from billed tokens. $\times 1.2$: billed tokens per 32 px patch for GPT; auto, high and original: OpenAI's detail settings. The per-view budgets used by the rule are 8{,}192 patches (GPT-6.1), 4{,}784 tokens (Claude Opus 5.5), about 1{,}090 (Gemini 3.8 and Gemini 4) and 2{,}500 (Qwen3-VL); \cref{sec:thumb} lists them with the targets per side they allow.}
\label{tab:calib}
\end{table*}

\subsection{Computing $S$}
\label{sec:computeS}

For every call we compute $S = ms/t$ from the measured geometry, before any call and without looking at the answer. The effective scale $s$ is the product of the scale at which we send the image and the vendor's own resize factor. For GPT models the vendor factor fits the image to the patch budget (none within 8{,}192 patches for GPT-5.6 and GPT-6.1, 2{,}500 for GPT-5.4); for Claude it is the documented resize to the long-side cap and the token budget of the model's tier; for Qwen3-VL it is $\min\{1, \sqrt{2{,}500 \cdot 1{,}024/(w'h')}\}$ for a sent image of $w' \times h'$ px. For fixed grids the token side is the side of the padded square divided by $\sqrt N$, so $S = mg/\max(W,H)$ for a whole image and $S = mg/\ell$ for a view of longer side $\ell$ (\cref{prop:geometry}), with $N = 258$ (Gemini 2.5), 280 (Gemma 4) and about 1{,}090 (Gemini 3.8 and Gemini 4). On the $4096 \times 2048$ canvas of the area sweep (\cref{sec:L-sweeps}) the fixed-grid token side computed as $\sqrt{WH/N}$ is smaller than the padded-square side by $\sqrt 2$; the values in \cref{tab:syn} use $\sqrt{WH/N}$. For a target with a non-square box, $m = \sqrt{wh}$ is the geometric mean; a stricter criterion that uses the shorter side gives the same conclusions.

\paragraph{The example of \cref{fig:intro}.} The drawing is $\CfigW \times \CfigH$ px and its target spans $m = \CfigM$ px. Sent whole within 2{,}500 patches of 32 px, the schematic interface of \cref{fig:intro} (a long side of about 1{,}930 px, the largest that fits), the target spans \CfigMsent{} px, $S = 0.40$, which is \CfigPctToken\% of a token side. The rule at $S = 1.5$ cuts \CfigViews{} views, each within the budget and about \CfigTokens{} tokens in all; the minimum cost of \cref{thm:exact}(c) is about \CfigCmin{} tokens, and the coverage bound of \cref{prop:geometry}(b) about \CfigPropB.

\paragraph{The small drawing.} On a drawing of $\CsmallW \times \CsmallH$ px with $m = \CsmallM$ px, the whole-image call gives GPT-5.4 $S = \CsmallSw$, and the 44 px recipe shrinks the drawing so that its tiles give $S = \CsmallSt$: the recipe zooms out. GPT-5.4 recalls \CsmallRecWhole{} of the targets in the whole image and \CsmallRecTile{} in the tiles, in both of two runs.

\subsection{Coordinates}
\label{sec:coords}

What reaches the user passes through three properties of a model's answers: the \emph{frame} (the image the coordinates refer to: as sent, or as resized by the vendor), the \emph{scale} (pixels or a 0 to 1000 normalization) and the \emph{format} (how a point is written). Each changed some of our scores by as much as a model upgrade, so every model is read in its published convention, and the reader is fixed before scoring.

\paragraph{Probe.} We placed one black dot at known positions in a $1600 \times 1000$ image and asked for its centre. GPT-5.4 erred by 0 to 1.4 px; Claude Sonnet 4.6 answered in the frame of its internally resized image ($1389 \times 868$) and erred by 0.3 to 2.3 px after rescaling; Gemini 2.5 Flash erred by 24 to 203 px, about one token. GPT-5.5, GPT-5.6, Claude Sonnet 5.5 and Gemini 3.8 answered within about 1 px, both in pixels and on a 0 to 1000 scale.

\paragraph{Conventions.} Gemma 4, Qwen3-VL, Gemini 2.5 and Gemini 3.8 answer on a 0 to 1000 scale even when pixels are requested, and we read them on that scale; Gemini 4 mixes the two (below). Claude Sonnet 4.6 answers in the frame of its resized image. When an input exceeds Claude Sonnet 5.5's 4{,}784-token cap, the model answers sometimes in the resized frame and sometimes in the original one; for such inputs we choose the frame per answer by a rule that uses only the image: the original frame if any point lies outside the resized frame, otherwise the frame under which more points fall on ink (a pixel darker than 128 within $m/2$), with ties going to the documented resized frame. Inputs within the cap, which include every call of the rule, need no such choice. Gemini 3.8 with thinking switches between pixel and 0 to 1000 coordinates from one image to the next.

\paragraph{Gemini 4.} Asked for pixel coordinates on whole images, Gemini 4 mixes the two conventions: \CgemFourOver{} of its \CgemFourAns{} whole-image answers in \cref{tab:main} contain coordinates above 1000, which can only be pixels, and the rest are mostly normalized. Read in any single convention, its whole-image recall is low (all pixels: \CgemFourLsPx{} on LegendSpot and \CgemFourFpPx{} on floor plans; all 0 to 1000: \CgemFourLsNorm{} and \CgemFourFpNorm; per answer, pixels when a coordinate exceeds 1000: \CgemFourLsMix{} and \CgemFourFpMix), far below what it reaches when asked for 0 to 1000 coordinates (below). We therefore also ran its whole-image calls with the same prompt, changing only the two sentences on coordinates to request 0 to 1000, the convention that all Gemini tile calls use. No answer then exceeds 1000, and its whole-image recall is \CtLsGemFourRecW{} on LegendSpot and \CtFpGemFourRecW{} on floor plans; \cref{tab:main} uses these calls. With the pixel prompt the rule's gain over the whole image would be inflated by the coordinate problem, which is neither $S$ nor $L$.

\paragraph{Format.} Qwen3-VL sometimes writes points as \texttt{"cx": [x, y]} lists, and Gemini 3.8 sometimes uses \texttt{x} and \texttt{y} keys instead of the requested \texttt{cx} and \texttt{cy}; our reader keeps both, and recovers points from malformed JSON with a regular expression (4 of 156 answers for Claude Sonnet 5.5 in the controlled sweeps, 20 for Qwen3-VL, 1 for Claude Sonnet 4.6, none for the others); an answer with no recoverable point counts as empty. Points on the 0 to 1000 scale refer to the image as sent, padding included, and every call uses the vendor's default sampling temperature. Reading only \texttt{cx} and \texttt{cy} keys changes results only for Gemini 3.8; \cref{sec:map} shows \cref{fig:map} under that reading.

\paragraph{Agents.} An agent must convert positions seen in a zoomed crop back to the overview, a conversion that no tiled or whole-image call requires; two of our four agents fail at it (\cref{sec:agents}). Unlike recognition and the content one call can handle, the coordinate interface did not improve across generations; zoom tools that convert coordinates themselves (\cref{sec:future}) would take this conversion out of the model's hands.

\section{Data and protocol}
\label{sec:protocol-full}

\subsection{Data}
\label{sec:data}

\paragraph{LegendSpot.} 40 partial screenshots of architectural drawings from different projects and sources, with no project-identifying information. Each image comes with one target symbol cropped from its legend, and every instance of that symbol is annotated with a centre point: 1{,}028 instances, 3 to 116 per image. Twenty images are large sheet regions (long side 2{,}947 to 11{,}484 px; \emph{full sheets}) and twenty are small drawings (483 to 3{,}520 px). Six sheets were used only to write the prompt and to choose the settings; the other 34 images (14 full sheets and 20 views, 660 instances) are the test set used for every result except the map of \cref{fig:map}, which pools all drawings (\cref{sec:map}).

\paragraph{Library-10.} Ten further architectural drawings, annotated by one of the authors before this study: 23 drawing and class items with 225 instances. Their reference symbols come from a shared symbol library rather than from each drawing's legend, so they differ from the drawn instances in scale and stroke. They are used for evaluation only.

\paragraph{Floor plans.} The test split of FloorPlanCAD~\cite{fan2021floorplancad} has $1{,}000 \times 1{,}000$ px rasters of 20\,m blocks. \emph{Floor-plan blocks}: 369 block and class items (21 countable object classes, 3 to 48 instances), 30 of which set thresholds and 339 form the test set, with the median-size instance as the reference. \emph{Floor-plan mosaics}: the first nine blocks of each of 60 source drawings tiled into a $3{,}000^2$ mosaic, whose top-left $2{,}000^2$ and first $1{,}000^2$ block are nested subsets, so the same symbols and reference appear at three areas; each mosaic counts one class (the class with most instances, at least 6, and at least one in the first block). \Cref{tab:main} uses the 60 full $3{,}000^2$ mosaics. FloorPlanCAD blocks are cut from larger drawings, so instances crossing a block edge are truncated and the remainder is unlabelled; we ignore predicted and annotated points within $m/2$ of any block edge.

\paragraph{Natural images.} 150 images of FSC-147~\cite{ranjan2021fsc147} (its 384 px versions), 50 in each of three strata of exemplar size, scored against the annotated dots within $m/2$ of the exemplar scale.

\paragraph{Controlled images.} \CscaleCtrl{} images built from LegendSpot legend templates. Each places $n$ copies of one template (the target) and $n$ copies drawn from three other templates (distractors) on a white canvas, on a jittered grid with spacing at least $1.6m$, after drawing thin grey horizontal and vertical segments (40 per megapixel unless varied). Templates are rescaled so that $m = \sqrt{wh}$ takes the stated value, and the reference image is the native-size target template; seeds are fixed per image. They serve four purposes. \emph{Sweeps} (156 images, six per level): resolution, $m \in \{8, 12, 16, 24, 32, 48, 64\}$ px with $n = 8$ on $1024^2$; target count, $m = 40$ and $n \in \{2, 4, 8, 16, 32, 64, 96\}$ on $1024^2$; area, $m = 40$ and $n = 16$ on canvases of $1024^2$, $1448^2$, $2048^2$, $2896^2$ and $4096 \times 2048$; clutter, $m = 16$ and $n = 8$ with 0 to 640 segments per megapixel; and upsampling, the resolution images with $m \le 16$ enlarged twofold by Lanczos interpolation. \emph{Controlled test}: twelve $2048^2$ images with 48 targets and 48 distractors each, at $m$ of 12, 20 or 32 px (\cref{sec:assume-direct}). \emph{Newest generation}: the recognition measurements of \cref{sec:newest}. \emph{Load at $S = 1.5$}: the sweep of \cref{sec:thumb}.

\paragraph{Scale.} In all we sent about \CscaleCalls{} requests to \CscaleModels{} models on \CscaleImages{} images with about \CscaleInst{} annotated instances.

\begin{table*}[t]
\centering
\footnotesize
\begin{tabular}{p{3.0cm}p{3.2cm}p{4.4cm}>{\raggedright\arraybackslash}p{3.1cm}}
\toprule
Experiment & Data & Models & Where \\
\midrule
Recognition, load, area, clutter, upsampling sweeps & controlled sweeps (156) & GPT-5.4, GPT-5.6, Claude Sonnet 4.6 and 5.5, Gemini 2.5 and 3.8, Gemma 4, Qwen3-VL & \cref{fig:curves}, \cref{sec:S-curves,sec:L-sweeps} \\
Budget at fixed pixels & controlled resolution sweep; floor-plan mosaics & Gemini 3.8 at three budgets & \cref{sec:why}, \cref{sec:budget} \\
Newest generation & controlled & GPT-5.6, GPT-6.1, Claude Opus 5.5, Gemini 3.8, Gemini 4 & \cref{sec:why}, \cref{sec:newest} \\
Controlled test & 12 controlled $2048^2$ images & GPT-5.4, GPT-5.6, GPT-6.1, Claude Sonnet 5.5, Claude Opus 5.5, Gemini 3.8, Gemma 4, Qwen3-VL & \cref{sec:safe}, \cref{sec:assume-direct} \\
Whole image, tiles, high resolution, propose and verify & LegendSpot (34) & the eight sweep models, and GPT-5.5 and Claude Opus 5.5 for the whole image & \cref{fig:map}, \cref{sec:generations,sec:assume-orderings,sec:errors,sec:more} \\
Packed tiles & LegendSpot full sheets (14) & GPT-5.6, Claude Sonnet 5.5 & \cref{sec:why}, \cref{sec:packing} \\
Tiles against whole image & floor-plan blocks (339), mosaics (60 at three areas) & the eight sweep models & \cref{fig:map}, \cref{sec:firstblock,sec:rule-more} \\
Rule against whole image and 44 px recipe & LegendSpot (34), mosaics (60) & GPT-6.1, Claude Opus 5.5, Gemini 3.8, Gemini 4, Qwen3-VL & \cref{tab:main}, \cref{sec:rule-full} \\
Same-scale tiles on natural images & FSC-147 (150) & GPT-5.6, Claude Sonnet 5.5, Gemini 3.8 & \cref{sec:practice}, \cref{sec:rule-more} \\
Zoom agents and their ablation & LegendSpot (34), Library-10, mosaics & GPT-5.6, Claude Sonnet 5.5, Gemini 3.8, Qwen3-VL & \cref{sec:practice}, \cref{sec:agents} \\
Template matching, trained counters & LegendSpot, Library-10, floor plans, FSC-147 & none (local) & \cref{sec:practice}, \cref{sec:rule-more} \\
Load at $S = 1.5$ & controlled & GPT-5.6, GPT-6.1, Claude Opus 5.5, Gemini 3.8, Gemini 4, Qwen3-VL & \cref{tab:thumb}, \cref{sec:thumb} \\
\bottomrule
\end{tabular}
\caption{Which images and models each experiment uses.}
\label{tab:experiments}
\end{table*}

\subsection{Models and settings}
\label{sec:models}

\begin{table*}[t]
\centering
\small
\begin{tabular}{p{3.3cm}p{3.0cm}p{8.4cm}}
\toprule
Model & Access & Reasoning setting in our calls \\
\midrule
GPT-5.4 & OpenAI API & none \\
GPT-5.5 & OpenAI API & none \\
GPT-5.6 Sol & OpenAI API & none; medium effort in the agent comparisons (agent, tiles, whole image); high effort in one further whole-image comparison \\
GPT-6.1 Sol & OpenAI API & lowest accepted effort (\texttt{low}); vendor default for the newest-generation measurement \\
Claude Sonnet 4.6 & Anthropic API & none \\
Claude Sonnet 5.5 & Anthropic API & the API's default adaptive thinking in every call; reasoning arm adds an explicit high effort \\
Claude Opus 5.5 & Anthropic API & the API's default adaptive thinking \\
Gemini 2.5 Flash & Google API & none \\
Gemini 3.8 Flash & Google API & minimal thinking budget; dynamic thinking in the agent and reasoning comparisons; vendor default for the newest-generation measurement \\
Gemini 4 & Google API & lowest thinking level (\texttt{low}); vendor default for the newest-generation measurement \\
Gemma 4 31B & hosted, pinned provider & none \\
Qwen3-VL-235B-A22B Instruct & hosted, pinned provider (a second provider for \cref{sec:newest}, a third for \cref{sec:thumb}) & none (the instruct model has no reasoning mode) \\
\bottomrule
\end{tabular}
\caption{The \CscaleModels{} models. We write GPT-5.6 and GPT-6.1 for the Sol models and Gemini 2.5 and Gemini 3.8 for the Flash models. Versions are those available in September and October 2026. The output cap was 8{,}192 tokens in the controlled sweeps and 16{,}384 for Gemini 4, the newest-generation measurement and the load sweep.}
\label{tab:models}
\end{table*}

\paragraph{Standard input.} Our standard input is input that we resize ourselves so that the vendor need not resize it again: whole-image calls with a long side of at most 1{,}600 px (drawings, floor plans; at most 2{,}048 px in the controlled test), and views within the per-view budgets of \cref{tab:calib}. We checked every experiment against this definition from the measured geometry: the OpenAI models keep all these inputs at their sent size, and the Gemini and Gemma grids apply their single resize. Two kinds of input lie outside it: native-resolution drawings sent through OpenAI's \texttt{detail=original} mode or Gemini's ultra-high setting, and controlled images enlarged to 2{,}048 px for GPT-5.4, whose 2{,}500-patch budget makes the vendor shrink them again. All four violations of the assumptions that we found lie there (\cref{sec:violations}); every pair of calls on standard input is consistent with them.

\subsection{Conditions}
\label{sec:conditions}

Every condition on drawings, floor plans and controlled images uses the same task: find every instance of the reference symbol and return its centre as JSON.
\emph{Whole image}: the image, long side at most 1{,}600 px, in one call.
\emph{High resolution}: the same call with OpenAI's \texttt{detail=original}, which accepts far more pixels, or Gemini's higher media-resolution settings.
\emph{Tiles}: the image cut into views with a margin; a blue frame marks each view's core, and only centres inside it are kept and mapped back. The \emph{44 px recipe} rescales the image so that the reference spans 44 px and cuts 1{,}024 px tiles, so it is defined in pixels; for fixed-grid models its counterpart chooses the field of view by \cref{prop:geometry}(c) so that the target spans 1.5 tokens (Gemma 4) or 2.5 tokens (Gemini).
\emph{The rule}: steps 1 and 2 of \cref{alg:rule} at $S^\ast = 1.5$ with the per-view budgets of \cref{tab:calib}, cores equal to views. $\Sw$ is the $S$ of the whole-image call of standard input (long side at most 1{,}600 px), and the target box $w \times h$ is the size of the reference image (the legend crop; for floor plans, the median-size instance). Each view is cut from the native image, resized with Lanczos interpolation to the scale $s = St/m$, and placed at the top left of a white canvas of whole tokens; for fixed-grid models a view is a square of side $mg/S$ native pixels, which we send without resizing. The reference image is resized by the same factor, a blue frame marks the view's border, and the prompt is that of tiles. All points are kept, so a target in an overlap can be reported twice, and such duplicates count against precision.
\emph{Packed tiles}: $k$ tiles of the 44 px recipe packed into one image, separated by grey lines in the tile margins.
\emph{Zoom agent}: the legend crop and an overview (long side 1{,}600 px) plus a \texttt{zoom} tool that returns any rectangle cut from the native image (long side at most 1{,}024 px, at most twice native), up to 20 calls; two variants of the tool return the rectangle cut from the overview at the overview's scale, or the same crop enlarged to the native crop's size (\cref{sec:agents}).
\emph{Propose and verify}: template matching proposes candidates, and the model accepts or rejects each one, shown as a crop at the reference's scale.
\emph{Template matching}: multi-scale, four-rotation normalized cross-correlation with non-maximum suppression, threshold chosen on the six prompt-writing sheets.
\emph{Trained counters}: CountGD~\cite{amininaieni2024countgd}, GeCo2~\cite{pelhan2026geco2} and the open-vocabulary detector OWLv2 in image-guided mode~\cite{minderer2023owlv2}.

\subsection{Prompts}
\label{sec:prompts}

\paragraph{Whole image.} \textit{You are a precise engineering drawing analysis assistant. You will be shown a reference image of ONE target symbol, followed by a full engineering drawing. Your task is to count every instance of that target symbol in the drawing. The drawing image is $W \times H$ pixels. All coordinates you output must be in this image's pixel space, where (0,0) is the top-left corner.} This is followed by a JSON schema with a \texttt{count} and a list \texttt{shapes} of centres \texttt{cx}, \texttt{cy}, and: \textit{The ``count'' must equal the length of ``shapes''. Count only symbols matching the reference image; do NOT count other symbols, text labels, or drawing linework.} The 0 to 1000 variant replaces the two sentences on coordinates by a request for coordinates normalized to 0 to 1000.

\paragraph{Tiles and views.} \textit{You will be shown a reference image of ONE target symbol, followed by one tile of a larger engineering drawing. Your task is to find every instance of that target symbol whose center lies inside the blue rectangle drawn on the tile. The area outside the blue rectangle is context only.} This is followed by the coordinate convention (pixels of the tile, or 0 to 1000) and the same schema. Views of the rule keep their whole view as the core.

\paragraph{Natural images.} \textit{You will be shown an exemplar crop of ONE object, followed by an image. Find every object in the image that is of the same kind as the exemplar (the exemplar itself is one of them).} The same schema follows, with \textit{Give the centre of each object.} A first, drawing-specific wording led GPT-5.6 to report single objects on natural images; \cref{tab:fsc} reports both.

\paragraph{Verification.} \textit{Image 1 is a reference crop of ONE target symbol taken from the drawing's legend. Image 2 is a small crop of the drawing, shown at the same scale as the reference, with a thin red square marking ONE candidate location. Decide whether the red square contains an instance of the reference symbol.} The answer is \texttt{\{"match": true or false, "confidence": 0-100\}}.

\subsection{Metrics, statistics and labels}
\label{sec:stats}

Predicted and annotated centres are matched one to one by a maximum matching within $\tau = m/2$, with $\mathrm{TP}$ matched pairs; matched recall is $\mathrm{TP}/n$, precision is $\mathrm{TP}$ over the returned points, and F1 is their harmonic mean. Tests of the theory use hit recall, under which a target counts as detected if any returned point lies within $\tau$. Hit recall is at least matched recall, with equality when targets are more than $2\tau$ apart, because each returned point then lies within $\tau$ of at most one target. Mean recall per image is the weighted measure of \cref{sec:defs} with $a(X) = 1/n(X)$, so the theorems apply to it. Runs are averaged per image, and every comparison is a paired bootstrap over images (4{,}000 resamples, percentile intervals), clustered by source drawing for the mosaics and for the orderings of \cref{sec:assume-orderings}; Holm's method~\cite{holm1979simple} corrects within the family of agent comparisons, and other intervals are unadjusted. A tolerance based on the template's ink box, or twice the primary tolerance, leaves the conclusions unchanged; \cref{tab:sens} lists the one comparison of two models that changes order.

\paragraph{Label audit.} After all runs we pooled the detections of template matching and six model configurations on the LegendSpot test set and flagged every location where at least three methods agreed on an instance absent from the labels (60 locations), and every label that no method found (none). The annotator judged each flagged location from a crop shown next to the legend symbol; all 60 were judged not to be instances, so the audit found no missing labels (one-sided 95\% upper bound: 5\% of such locations).

\section{Evidence on $S$}
\label{sec:S-evidence}

\subsection{Recognition curves on controlled images}
\label{sec:S-curves}

\begin{figure*}[t]
\centering
\includegraphics[width=\textwidth]{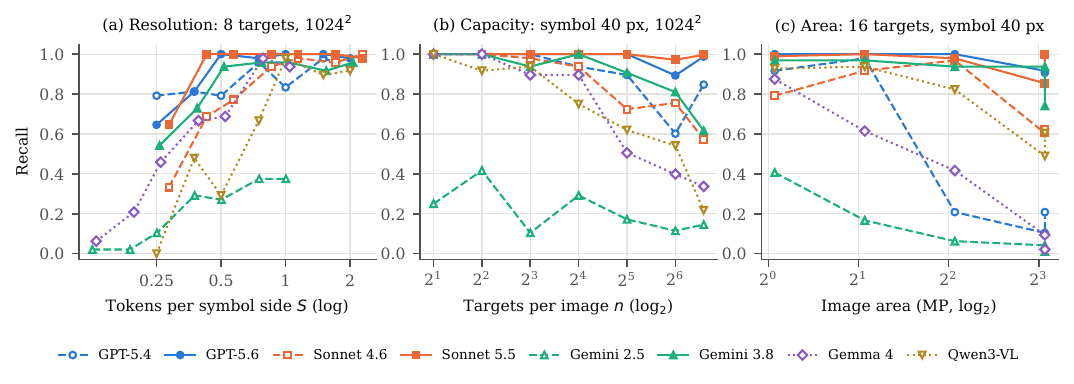}
\caption{Controlled images, eight models. (a) Recall against $S$ (8 targets and 8 distractors on $1024^2$). (b) Recall against the number of targets per call at fixed $S$ (1.25 to 1.43 for resolution-scaled models; symbol 40 px on $1024^2$). (c) Recall against image area with 16 targets of fixed pixel size. Dashed and hollow: older generation; solid and filled: newer generation. Six images per point; mean matched recall per image; no intervals shown. Panels (a) and (b) are \cref{fig:curves}.}
\label{fig:syn}
\end{figure*}

\begin{table*}[t]
\centering
\footnotesize
\setlength{\tabcolsep}{4pt}
\begin{adjustbox}{max width=\linewidth}\begin{tabular}{lcccccc}
\toprule
 & \multicolumn{2}{c}{Recognition} & \multicolumn{3}{c}{Target count (fixed $S$)} & Area \\
\cmidrule(lr){2-3}\cmidrule(lr){4-6}\cmidrule(lr){7-7}
Model & $S_{50}$ & R($S{<}0.5$) & $S$ & R, $n{=}2 \to 96$ & slope & R, 1 $\to$ 8.4 megapixels ($S$) \\
\midrule
Gemini 2.5 Flash & 1.28 & 0.11 & 0.63 & 0.25 $\to$ 0.15 & $-$0.032 & 0.41 $\to$ 0.01 (0.22) \\
Gemma 4 & 0.31 & 0.35 & 0.65 & 1.00 $\to$ 0.34 & $-$0.133 & 0.88 $\to$ 0.02 (0.23) \\
Qwen3-VL & 0.54 & 0.24 & 1.25 & 1.00 $\to$ 0.22 & $-$0.125 & 0.93 $\to$ 0.60 (0.69) \\
Claude Sonnet 4.6 & 0.35 & 0.51 & 1.43 & 1.00 $\to$ 0.57 & $-$0.074 & 0.79 $\to$ 0.62 (0.55) \\
GPT-5.4 & $<$0.25 & 0.80 & 1.25 & 1.00 $\to$ 0.85 & $-$0.052 & 0.92 $\to$ 0.21 (0.68) \\
\midrule
Gemini 3.8 Flash & $<$0.25 & 0.64 & 1.29 & 1.00 $\to$ 0.62 & $-$0.056 & 0.97 $\to$ 0.74 (0.46) \\
Claude Sonnet 5.5 & 0.28 & 0.82 & 1.43 & 1.00 $\to$ 1.00 & $-$0.003 & 0.99 $\to$ 1.00 (0.90) \\
GPT-5.6 & $<$0.25 & 0.73 & 1.25 & 1.00 $\to$ 0.99 & $-$0.010 & 1.00 $\to$ 0.91 (1.25) \\
 
\bottomrule
\end{tabular}\end{adjustbox}
\caption{Controlled images. $S_{50}$: the $S$ at which recall reaches one half (logistic fit of per-image recall on $\log S$; $<$0.25: below 0.25, extrapolated below the smallest $S$ tested). R($S{<}0.5$): mean recall over images with $S < 0.5$. Target count: $1024^2$ canvas, symbol 40 px, $n$ targets and $n$ distractors; slope of recall per doubling of $n$. Area: 16 targets at 40 px on canvases from $1024^2$ to $4096 \times 2048$, with $S$ on the largest canvas in brackets. Upper block: older generation; lower block: newer generation.}
\label{tab:syn}
\end{table*}

Recall rises with $S$ for every model, gradually rather than at a sharp threshold, and the level at which it rises differs by model (\cref{fig:syn}a, \cref{tab:syn}). Gemini 2.5 needs about one token per target side ($S_{50} = \XSfiftygeminiTwoFive$, interval \XSfiftyCIgeminiTwoFive). Gemma 4, Claude Sonnet 4.6 and Qwen3-VL reach one half at \XSfiftygemmaFour{} \XSfiftyCIgemmaFour, \XSfiftysonnetFourSix{} \XSfiftyCIsonnetFourSix{} and \XSfiftyqwenThreevl{} \XSfiftyCIqwenThreevl. GPT-5.4, GPT-5.6, Gemini 3.8 and Claude Sonnet 5.5 reach it near or below the smallest $S$ we tested for them (0.13 to 0.29), so their values (\XSfiftygptFiveFour, \XSfiftygptFiveSix, \XSfiftygemThreeEight, \XSfiftysonnetFiveFive) rest on extrapolation. Clutter, 0 to 640 grey segments per megapixel, changes recall little: the slope per doubling of the segment density has an interval containing zero for every model (for example GPT-5.6 \XclutSlopegptFiveSix{} \XclutSlopeCIgptFiveSix). The difference between models is not geometric. At Gemini 3.8's low budget its token grid is the same as Gemini 2.5's default grid, yet its recall on the same images is higher by \XtOneHThree{} \XtOneHThreeCI.

\subsection{The same pixels at three budgets}
\label{sec:budget}

Gemini 3.8 lets the caller set the token budget per image while the pixels stay fixed: about 257, 1{,}090 or 2{,}210 tokens. On the resolution sweep, recall of 8 px targets rises from \CbudEightLo{} to \CbudEightDef{} to \CbudEightUh{} across the three budgets. A logistic curve in $\log S$ captures most of this; adding a separate offset per budget fits better (difference in Akaike information criterion, $\Delta\mathrm{AIC} = \XtOneDAIC$), so the budget also acts beyond $S$. On floor-plan mosaics, raising the budget from low to default changes F1 by \XfTOnesheetssmallDL{} \XfTOnesheetssmallDLCI{} on the \XfTOnesheetssmallN{} items whose low-budget $S$ is below 1.5 and by \XfTOnesheetslargeDL{} \XfTOnesheetslargeDLCI{} on the other \XfTOnesheetslargeN; on floor-plan blocks the two gains are \XfTOnebenchsmallDL{} \XfTOnebenchsmallDLCI{} and \XfTOnebenchlargeDL{} \XfTOnebenchlargeDLCI. Raising it further, from default to ultra-high, changes F1 by at most \XfTOnesheetslargeUD{} in any of these groups, none significantly: the gain is concentrated where $S$ is low.

\subsection{Dose response on drawings and floor plans}
\label{sec:dose}

\begin{figure}[t]
\centering
\includegraphics[width=0.92\columnwidth]{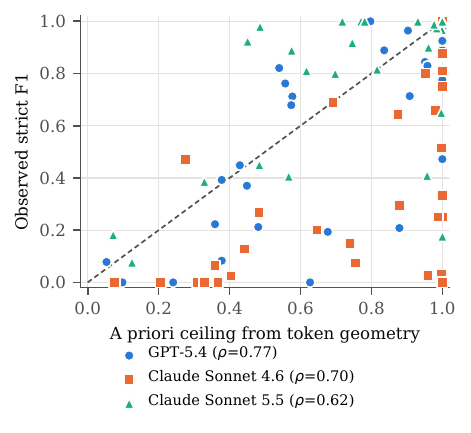}
\caption{Whole-image F1 per LegendSpot test drawing against a ceiling computed from the token geometry before any call (34 drawings; dashed line: observed equals ceiling), for three models. F1 under one-to-one matching (axis: strict F1); Spearman $\rho$ for every model in the text.}
\label{fig:dose}
\end{figure}

For each LegendSpot test drawing we computed a ceiling from the geometry alone, before any call, by simulating the token-quantized observer of \cref{prop:loc}: instances that share a token cell merge into one report at the cell centre. Whole-image F1 follows it (\cref{fig:dose}; Spearman $\rho = \RhogptFiveFour$ \RhogptFiveFourCI{} for GPT-5.4, \RhosonnetFourSix{} \RhosonnetFourSixCI{} for Claude Sonnet 4.6, \RhogemmaFour{} \RhogemmaFourCI{} for Gemma 4, \RhogeminiTwoFive{} \RhogeminiTwoFiveCI{} for Gemini 2.5, \RhoqwenThreevl{} \RhoqwenThreevlCI{} for Qwen3-VL, \RhogptFiveFive{} for GPT-5.5, \RhogptFiveSix{} for GPT-5.6, \RhosonnetFiveFive{} for Claude Sonnet 5.5, \RhoopusFiveFive{} for Claude Opus 5.5, and \RhogemThreeEight{} \RhogemThreeEightCI{} for Gemini 3.8, the only interval including zero). Tiles help where the ceiling is low (\GainLowgptFiveFour{} F1 for GPT-5.4) and not where it is high (\GainHighgptFiveFour{} \GainHighgptFiveFourCI). The ceiling is a function of $S$ and of how often instances share a token, and $\log S$ alone explains as much of the per-drawing variance ($R^2$ \XtTwoRSgptFiveFour{} against \XtTwoRCgptFiveFour{} for GPT-5.4, \XtTwoRSsonnetFourSix{} against \XtTwoRCsonnetFourSix{} for Claude Sonnet 4.6), so we treat $S$ as the operative variable and the ceiling as a reference line. On floor-plan mosaics, pooled over the three areas, per-item F1 again rises with $S$ for every model (Spearman \XfHEightgptFiveSix{} for GPT-5.6, \XfHEightsonnetFiveFive{} for Claude Sonnet 5.5, \XfHEightgemThreeEight{} for Gemini 3.8).

\subsection{Generations}
\label{sec:generations}

\begin{figure*}[t]
\centering
\includegraphics[width=0.95\textwidth]{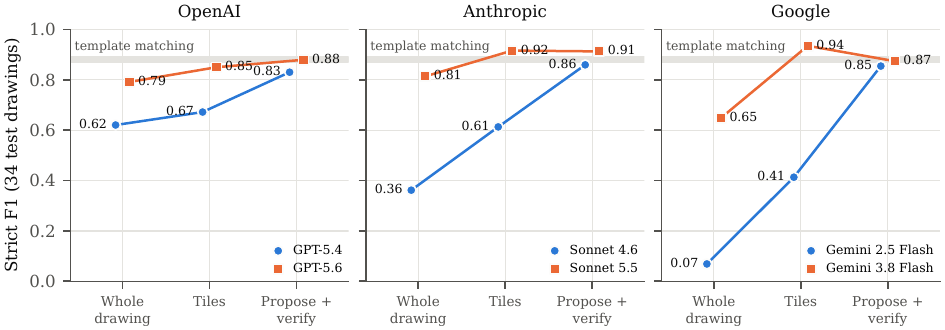}
\caption{Two generations per vendor under three configurations (F1, 34 LegendSpot test drawings; whole image read in each model's published convention). Propose and verify: template matching proposes candidates and the model accepts or rejects each one, shown as a crop at the reference's scale. Grey band: template matching with two threshold choices. F1 under one-to-one matching (axis: strict F1); point estimates.}
\label{fig:gen}
\end{figure*}

Larger budgets raise whole-image F1 where geometry says they should (\cref{fig:gen}): by \CgenclaudeATwosheets{} \CgenclaudeATwosheetsCI{} on full sheets from Claude Sonnet 4.6 to 5.5 (1{,}568 to 4{,}784 tokens; Sonnet 5.5 also thinks adaptively), and by \CgengeminiATwosheets{} \CgengeminiATwosheetsCI{} from Gemini 2.5 to 3.8 (258 to about 1{,}090 tokens). At unchanged geometry GPT-5.6 improves on GPT-5.4 by \CgengptFiveSixAOne{} \CgengptFiveSixAOneCI, while GPT-5.5 does not (\CgengptFiveFiveAOne{} \CgengptFiveFiveAOneCI): recognition also improves across generations, though not monotonically.

\subsection{Enlargement}
\label{sec:enlarge}

\cref{prop:geometry} predicts that a fixed-grid model gains from a smaller field of view but not from more pixels, while a resolution-scaled model can gain from enlargement as long as the pixels resolve the target. Choosing Gemini 2.5's and Gemma 4's field of view by \cref{prop:geometry}(c) instead of the 44 px recipe adds \CDMinusCgeminiTwoFive{} \CDMinusCgeminiTwoFiveCI{} and \CDMinusCgemmaFour{} \CDMinusCgemmaFourCI{} F1 on LegendSpot, and enlarging controlled images twofold leaves fixed-grid recall unchanged (Gemini 3.8 \XupDgemThreeEight{} \XupDCIgemThreeEight, Gemma 4 \XupDgemmaFour{} \XupDCIgemmaFour). For resolution-scaled models enlargement adds tokens but no Shannon information (\cref{prop:dpi}), so it helps only when the pixels resolve the target but the tokens do not, and whether it helps depends on the recognizer: at 8 px, twofold enlargement raised GPT-5.6's recall from \XupEightgptFiveSix{} and left Claude Sonnet 5.5's unchanged (\XupEightsonnetFiveFive). GPT-5.4's recall changed by \XtcViolD{} \XtcViolCI{} on the enlarged images, which its vendor shrinks again; this is one of the four violations (\cref{sec:violations}).

\subsection{The newest generation}
\label{sec:newest}

To compare the newest models we measured, for each, the $S$ at which it first reaches recall and precision of 0.9 ($S_{90}$), at the vendor's default reasoning setting (no reasoning parameter sent) and with an output cap of 16{,}384 tokens. Each controlled image holds 8 targets of one type and distractors of three other types, laid out in 12 base scenes; each scene is rendered at a grid of $S$ values by scaling the whole scene, without blank padding for the resolution-scaled models and centred on a fixed $1024^2$ canvas for the fixed-grid models. $S_{90}$ is the smallest tested $S$ at which the pooled recall and precision both reach 0.9 and stay at or above 0.9 at every larger tested $S$ (a single failure at the largest $S$ alone does not count); 90\% intervals resample the base scenes (2{,}000 draws). \Cref{tab:newest} gives the results; Qwen3-VL, run through a second provider, does not reach 0.9 at the largest $S$ tested (2.8). On the same 1{,}090-token grid, the $S$ that Gemini 4 needs is less than half of Gemini 3.8's.

\begin{table}[t]
\centering
\small
\setlength{\tabcolsep}{4pt}
\begin{adjustbox}{max width=\linewidth}\begin{tabular}{lcc}
\toprule
Model & $S_{90}$ & 90\% interval \\
\midrule
GPT-5.6 & 0.64 & [0.53, 0.79] \\
GPT-6.1 & 0.57 & [0.44, 0.60] \\
Claude Opus 5.5 & 0.63 & [0.51, 0.73] \\
Gemini 3.8 & \CgenSninetyGemThreeEight & [0.58, 1.63] \\
Gemini 4 & \CgenSninetyGemFour & [0.21, 0.38] \\
\bottomrule
\end{tabular}\end{adjustbox}
\caption{The $S$ at which recall and precision reach 0.9 on controlled images, at each vendor's default reasoning. The 90\% intervals resample the 12 base scenes (2{,}000 draws).}
\label{tab:newest}
\end{table}

\paragraph{Rendering.} The rendering pastes only template pixels darker than 200 after Lanczos downscaling, so the lightest strokes of a symbol are dropped at small scales; the load sweep of \cref{sec:thumb} keeps them as grey.

\section{Evidence on $L$}
\label{sec:L-evidence}

\subsection{More targets and more area at fixed $S$}
\label{sec:L-sweeps}

At fixed $S$, recall falls as one call must report more targets (\cref{fig:syn}b, \cref{tab:syn}). With 96 targets in one call it falls to \XcapNlastqwenThreevl{} for Qwen3-VL, \XcapNlastsonnetFourSix{} for Claude Sonnet 4.6, \XcapNlastgemmaFour{} for Gemma 4 and \XcapNlastgemThreeEight{} for Gemini 3.8. The slope against the number of targets is negative for every model whose $S$ in this sweep is above one, and negligible for GPT-5.6 and Claude Sonnet 5.5 on these well-separated targets (\XcapSlopegptFiveSix{} and \XcapSlopesonnetFiveFive{} per doubling; recall \XcapNlastgptFiveSix{} and \XcapNlastsonnetFiveFive{} with 96 targets). Within a vendor, what one call can handle rose across generations: from \XcapNlastsonnetFourSix{} to \XcapNlastsonnetFiveFive{} with 96 targets from Claude Sonnet 4.6 to 5.5, and from \XcapNlastgptFiveFour{} to \XcapNlastgptFiveSix{} from GPT-5.4 to GPT-5.6. Adding area at constant $S$ also costs recall: GPT-5.6, whose budget covers an 8.4-megapixel canvas without resizing, falls from \XareaFirstgptFiveSix{} to \XareaLastgptFiveSix{} with 16 targets (\cref{fig:syn}c).

\paragraph{Failures in answers that end normally.} In the target-count sweep, \CcapNormal{} of \CcapCalls{} calls ended normally. The exception is a Gemini 3.8 call with 96 targets whose thinking used 7{,}861 of the 8{,}192 output tokens allowed, although its thinking budget was set to the minimum; the longest visible answer (Gemini 2.5) used \CcapMaxOut{} tokens. A call that must report more targets fails in three ways that the answers show directly: it stops listing early, it places points less precisely, and it reports look-alike symbols.

\paragraph{Bits.} On the four 12 px images of the controlled test, the best model's whole-image answers carried on average at least about \XcapBestBits{} bits about the target positions by \cref{thm:capacity}, against at least \XcapBestFull{} bits needed for full recall without false points (recall \XcapBestRecall).

\subsection{Packing tiles into one call}
\label{sec:packing}

On the 14 LegendSpot full sheets we packed the tiles of the 44 px recipe $k$ at a time into one image, separated by grey lines in the tile margins, so that one call must search $k$ times the content (\cref{tab:packing}). Each single tile is a crop of the packed image (\cref{lem:crop}), so the theory predicts no gain from packing. For GPT-5.6 each symbol keeps its tokens, and recall falls clearly at $k = 8$, although a call packing eight tiles must report a median of only \CpackMedian{} targets (mean \CpackMean, at most \CpackMax). Claude Sonnet 5.5's packed images at $k = 4$ and 8 exceed its 4{,}784-token cap and are resized, so its drop also reflects a lower $S$; it is read with the per-answer frame rule of \cref{sec:coords} (with the documented frame alone, its F1 at $k = 8$ would be \XmosRegFsonnetFiveFivekEight).

\begin{table}[t]
\centering
\small
\setlength{\tabcolsep}{3pt}
\begin{adjustbox}{max width=\linewidth}\begin{tabular}{lcccc}
\toprule
 & $k = 1$ & 2 & 4 & 8 \\
\midrule
\multicolumn{5}{l}{\emph{GPT-5.6}} \\
Recall & \XmosRgptFiveSixkOne & \XmosRgptFiveSixkTwo & \XmosRgptFiveSixkFour & \XmosRgptFiveSixkEight \\
Precision & \XmosPgptFiveSixkOne & \XmosPgptFiveSixkTwo & \XmosPgptFiveSixkFour & \XmosPgptFiveSixkEight \\
F1 & \XmosFgptFiveSixkOne & \XmosFgptFiveSixkTwo & \XmosFgptFiveSixkFour & \XmosFgptFiveSixkEight \\
$\Delta$ recall & & \XmosdRgptFiveSixkTwo & \XmosdRgptFiveSixkFour & \textbf{\XmosdRgptFiveSixkEight} \\
\midrule
\multicolumn{5}{l}{\emph{Claude Sonnet 5.5}} \\
Recall & \XmosRsonnetFiveFivekOne & \XmosRsonnetFiveFivekTwo & \XmosRsonnetFiveFivekFour & \XmosRsonnetFiveFivekEight \\
Precision & \XmosPsonnetFiveFivekOne & \XmosPsonnetFiveFivekTwo & \XmosPsonnetFiveFivekFour & \XmosPsonnetFiveFivekEight \\
F1 & \XmosFsonnetFiveFivekOne & \XmosFsonnetFiveFivekTwo & \XmosFsonnetFiveFivekFour & \XmosFsonnetFiveFivekEight \\
$\Delta$ recall & & \textbf{\XmosdRsonnetFiveFivekTwo} & \textbf{\XmosdRsonnetFiveFivekFour} & \textbf{\XmosdRsonnetFiveFivekEight} \\
\bottomrule
\end{tabular}\end{adjustbox}
\caption{Packing $k$ tiles into one call on 14 full sheets. $\Delta$ recall: packed minus single tiles; bold: 95\% interval below zero (GPT-5.6, $k = 8$: \XmosdRCIgptFiveSixkEight; Claude Sonnet 5.5, $k = 2, 4, 8$: \XmosdRCIsonnetFiveFivekTwo, \XmosdRCIsonnetFiveFivekFour, \XmosdRCIsonnetFiveFivekEight).}
\label{tab:packing}
\end{table}

\subsection{Same targets, more content: the first block}
\label{sec:firstblock}

The floor-plan mosaics hold the same first block and the same reference at three areas, so scoring only the first block measures the effect of the content around it, which on the larger mosaics comes with a lower $S$ (\cref{tab:fpcsheets}). Whole-image F1 on the first block falls as the other blocks are added in the same call, for every model: by \XfBKDAOnegptFiveSix{} \XfBKDCIAOnegptFiveSix{} for GPT-5.6, \XfBKDAOnesonnetFiveFive{} \XfBKDCIAOnesonnetFiveFive{} for Claude Sonnet 5.5 and \XfBKDAOnegemThreeEight{} \XfBKDCIAOnegemThreeEight{} for Gemini 3.8, from one block to nine. Adding blocks also lowers $S$ (median \XfBKSmedGptFiveSix{} for GPT-5.6 on the largest mosaics), so we repeated the comparison on the mosaics where the largest still gives $S \ge 1$: the drop remains (GPT-5.6 \XfBKSoneGptFiveSix{} \XfBKSoneCIGptFiveSix{} on \XfBKSoneNGptFiveSix{} mosaics; Claude Sonnet 5.5 \XfBKSoneSonnetFiveFive{} \XfBKSoneCISonnetFiveFive{} on \XfBKSoneNSonnetFiveFive). GPT-5.6's tiles, whose content per call does not change, stay flat (GPT-5.6 \XfBKDDgptFiveSix{} \XfBKDCIDgptFiveSix), and so does GPT-5.6's zoom agent (\XfBKDAGgptFiveSix{} \XfBKDCIAGgptFiveSix). Scored over all blocks, every method declines with area, template matching included (0.54 to 0.18 at a fixed threshold), because the reference comes from the first block and 49\% of the instances in other blocks differ from it by more than 20\% in box size, against 14\% in the first block; scoring the first block removes this confound.

\begin{table*}[t]
\centering
\small
\setlength{\tabcolsep}{4pt}
\begin{adjustbox}{max width=\linewidth}\begin{tabular}{lccccccccc}
\toprule
 & \multicolumn{3}{c}{Whole image} & \multicolumn{3}{c}{Tiles} & \multicolumn{3}{c}{Zoom agent} \\
\cmidrule(lr){2-4}\cmidrule(lr){5-7}\cmidrule(lr){8-10}
Model & $k{=}1$ & 2 & 3 & 1 & 2 & 3 & 1 & 2 & 3 \\
\midrule
\multicolumn{10}{l}{\emph{First block only}} \\
GPT-5.4 & 0.61 & 0.51 & 0.46 & 0.36 & 0.38 & 0.37 &  &  &  \\
Claude Sonnet 4.6 & 0.46 & 0.20 & 0.11 & 0.29 & 0.33 & 0.34 &  &  &  \\
Gemini 2.5 Flash & 0.18 & 0.11 & 0.02 & 0.30 & 0.21 & 0.22 &  &  &  \\
Gemma 4 31B & 0.38 & 0.09 & 0.02 & 0.57 & 0.52 & 0.49 &  &  &  \\
Qwen3-VL & 0.45 & 0.27 & 0.08 & 0.34 & 0.24 & 0.28 &  &  &  \\
\midrule
GPT-5.6 & 0.73 & 0.68 & 0.52 & 0.60 & 0.57 & 0.59 & 0.81 & 0.83 & 0.80 \\
Claude Sonnet 5.5 & 0.78 & 0.75 & 0.62 & 0.76 & 0.74 & 0.69 & 0.84 & 0.79 & 0.74 \\
Gemini 3.8 (default) & 0.81 & 0.66 & 0.47 & 0.81 & 0.79 & 0.78 &  &  &  \\
Gemini 3.8 (low) & 0.76 & 0.41 & 0.17 &  &  &  &  &  &  \\
Gemini 3.8 (ultra-high) & 0.83 & 0.64 & 0.46 &  &  &  &  &  &  \\
\midrule
Template matching & \multicolumn{9}{c}{0.54 / 0.54 / 0.54} \\
 
\midrule
\multicolumn{10}{l}{\emph{All blocks}} \\
GPT-5.4 & 0.61 & 0.40 & 0.25 & 0.36 & 0.32 & 0.26 &  &  &  \\
Claude Sonnet 4.6 & 0.46 & 0.19 & 0.09 & 0.29 & 0.24 & 0.24 &  &  &  \\
Gemini 2.5 Flash & 0.18 & 0.07 & 0.02 & 0.30 & 0.18 & 0.19 &  &  &  \\
Gemma 4 31B & 0.38 & 0.06 & 0.01 & 0.57 & 0.42 & 0.32 &  &  &  \\
Qwen3-VL & 0.45 & 0.21 & 0.06 & 0.34 & 0.24 & 0.25 &  &  &  \\
\midrule
GPT-5.6 & 0.73 & 0.54 & 0.39 & 0.60 & 0.44 & 0.39 & 0.81 & 0.73 & 0.59 \\
Claude Sonnet 5.5 & 0.78 & 0.67 & 0.50 & 0.76 & 0.61 & 0.52 & 0.84 & 0.67 & 0.55 \\
Gemini 3.8 (default) & 0.81 & 0.50 & 0.32 & 0.81 & 0.68 & 0.63 &  &  &  \\
Gemini 3.8 (low) & 0.76 & 0.34 & 0.12 &  &  &  &  &  &  \\
Gemini 3.8 (ultra-high) & 0.83 & 0.52 & 0.34 &  &  &  &  &  &  \\
\midrule
Template matching & \multicolumn{9}{c}{0.54 / 0.31 / 0.18} \\
 
\bottomrule
\end{tabular}\end{adjustbox}
\caption{Floor-plan mosaics of $k \times k$ blocks ($1{,}000^2$ to $3{,}000^2$ px), F1, 60 items per area. Columns: $k = 1, 2, 3$. First block only: F1 on the first block; all blocks: F1 on the whole mosaic. Tiles: the 44 px recipe and its fixed-grid counterpart. Point estimates; empty cells: not run.}
\label{tab:fpcsheets}
\end{table*}

\subsection{Error types and tolerance}
\label{sec:errors}

\begin{table}[t]
\centering
\footnotesize
\setlength{\tabcolsep}{2.5pt}
\resizebox{\columnwidth}{!}{\begin{tabular}{llcccc}
\toprule
Model & Condition & miss & dup. & displ. & spur. \\
\midrule
GPT-5.4 & tiles & 0.23 & 0.02 & 0.13 & 0.40 \\
GPT-5.4 & propose + verify & 0.22 & 0.02 & 0.00 & 0.13 \\
Claude Sonnet 4.6 & tiles & 0.30 & 0.04 & 0.15 & 0.52 \\
Claude Sonnet 4.6 & propose + verify & 0.13 & 0.02 & 0.00 & 0.16 \\
Gemini 2.5 Flash & tiles & 0.58 & 0.02 & 0.25 & 0.43 \\
Gemini 2.5 Flash & propose + verify & 0.12 & 0.03 & 0.00 & 0.22 \\
Gemma 4 & tiles & 0.06 & 0.01 & 0.00 & 0.88 \\
Gemma 4 & propose + verify & 0.11 & 0.03 & 0.00 & 0.25 \\
Qwen3-VL & tiles & 0.38 & 0.02 & 0.05 & 0.25 \\
Qwen3-VL & propose + verify & 0.12 & 0.02 & 0.00 & 0.04 \\
GPT-5.6 & tiles & 0.15 & 0.01 & 0.00 & 0.15 \\
GPT-5.6 & propose + verify & 0.10 & 0.03 & 0.00 & 0.35 \\
Claude Sonnet 5.5 & tiles & 0.03 & 0.00 & 0.00 & 0.11 \\
Claude Sonnet 5.5 & propose + verify & 0.07 & 0.01 & 0.00 & 0.08 \\
Gemini 3.8 Flash & tiles & 0.09 & 0.00 & 0.00 & 0.06 \\
Gemini 3.8 Flash & propose + verify & 0.21 & 0.01 & 0.00 & 0.02 \\
GPT-5.6 (original) & whole, native & 0.35 & 0.00 & 0.04 & 0.24 \\
 
\bottomrule
\end{tabular}}
\caption{Errors per annotated instance on the LegendSpot test drawings (one run), pooled over drawings: missed instances, duplicate reports of a found instance, reports within $2\tau$ of a missed instance, and spurious reports elsewhere.}
\label{tab:tax}
\end{table}

\begin{table}[t]
\centering
\footnotesize
\setlength{\tabcolsep}{3pt}
\begin{adjustbox}{max width=\linewidth}\begin{tabular}{lccc}
\toprule
Model, condition & $\tau$ & ink box & $2\tau$ \\
\midrule
GPT-5.4, whole drawing & 0.62 & 0.61 & 0.67 \\
Claude Sonnet 4.6, whole drawing & 0.36 & 0.33 & 0.50 \\
GPT-5.4, tiles & 0.67 & 0.67 & 0.80 \\
Gemma 4, tiles & 0.75 & 0.75 & 0.76 \\
Qwen3-VL, propose + verify & 0.91 & 0.91 & 0.91 \\
Claude Sonnet 5.5, tiles & 0.92 & 0.92 & 0.92 \\
Gemini 3.8 Flash, tiles & 0.94 & 0.93 & 0.94 \\
Claude Sonnet 5.5, whole drawing & 0.81 & 0.81 & 0.85 \\
tuning-set threshold, template matching & 0.87 & 0.87 & 0.87 \\
 
\bottomrule
\end{tabular}\end{adjustbox}
\caption{F1 on LegendSpot under alternative matching tolerances: the primary $\tau = m/2$, a tolerance from the template's ink box, and $2\tau$.}
\label{tab:sens}
\end{table}

\Cref{tab:tax} splits the errors. Propose and verify removes nearly all displaced reports and, for most models, most spurious ones; GPT-5.6 is the exception, with more spurious reports as a verifier (0.35) than with tiles (0.15). GPT-5.6's whole-image calls at native resolution miss 0.35 of the instances and add 0.24 spurious reports per instance, so their counts are closer to the truth than their positions. Under the alternative tolerances of \cref{tab:sens} the comparisons keep their direction, with one exception: at $2\tau$ GPT-5.4's tiles (0.80) overtake Gemma 4's (0.76), consistent with less precise placement by GPT-5.4. Claude Sonnet 4.6's whole-image score also rises (0.36 to 0.50), as expected for a model that localizes coarsely at about half a token per target.

\section{Tests of the assumptions}
\label{sec:assume}

\subsection{The controlled test}
\label{sec:assume-direct}

Controlled images let each relation of \cref{sec:defs} be built exactly. Each of twelve $2048^2$ images holds 48 targets and 48 distractors at a target scale of 12, 20 or 32 px. For each image, each model received five kinds of call: the whole image; sixteen $512^2$ crops at the same effective scale (padded with blank canvas for fixed-grid interfaces, so that the vendor applies the same resizing); the same crops at twice the scale; the same crops at twice the scale but cut from a copy downsampled fourfold (a garbling); and a $3 \times 3$ tiling whose cores overlap by $2\tau$, which satisfies the covering hypothesis of \cref{thm:safe} at the whole image's scale, the hardest case the theorem allows. Calls to resolution-scaled models were sized so that the vendor does not resize them. Over \XasModels{} models and five relations (\cref{tab:assume}), none of the \XasN{} orderings is violated: \XasSup{} hold significantly and \XasCons{} are consistent, the smallest mean difference being \XasMin. The safe tiling gains \XasThmMin{} to \XasThmMax{} in hit recall at equal $S$, significantly for \XasThmSup{} of \XasModels{} models; the tiles are shown at the whole image's scale, so this gain comes from less content per call alone. The two newest models, Claude Opus 5.5 and GPT-6.1, already find most targets in the whole image (whole-image recall \XasWholeRecOpusFiveFive{} and \XasWholeRecGptSixOne, against \XasWholeRecGptFiveSix{} for GPT-5.6), and \XasNewSup{} of their \XasNewN{} orderings hold significantly. No stratum by target scale has an interval below zero.

\begin{table*}[t]
\centering
\footnotesize
\setlength{\tabcolsep}{4pt}
\resizebox{\textwidth}{!}{\begin{tabular}{lccccc}
\toprule
 & (M) crop $-$ whole & (R) $2\times$ crop $-$ crop & (I) native $-$ garbled & (R, M) $2\times$ crop $-$ whole & \cref{thm:safe}: safe tiles $-$ whole \\
\midrule
GPT-5.4 & +0.12{\scriptsize\,[$-$0.06, +0.32]} & \textbf{+0.16}{\scriptsize\,[+0.01, +0.29]} & \textbf{+0.43}{\scriptsize\,[+0.28, +0.56]} & \textbf{+0.28}{\scriptsize\,[+0.13, +0.45]} & \textbf{+0.25}{\scriptsize\,[+0.10, +0.40]} \\
GPT-5.6 & \textbf{+0.39}{\scriptsize\,[+0.16, +0.65]} & +0.03{\scriptsize\,[0.00, +0.07]} & \textbf{+0.28}{\scriptsize\,[+0.16, +0.41]} & \textbf{+0.42}{\scriptsize\,[+0.17, +0.69]} & \textbf{+0.37}{\scriptsize\,[+0.15, +0.60]} \\
Claude Sonnet 5.5 & +0.17{\scriptsize\,[$-$0.01, +0.42]} & $-$0.03{\scriptsize\,[$-$0.07, +0.01]} & \textbf{+0.49}{\scriptsize\,[+0.26, +0.72]} & +0.14{\scriptsize\,[$-$0.05, +0.38]} & +0.09{\scriptsize\,[$-$0.08, +0.31]} \\
Qwen3-VL & +0.003{\scriptsize\,[$-$0.15, +0.14]} & \textbf{+0.46}{\scriptsize\,[+0.29, +0.62]} & \textbf{+0.58}{\scriptsize\,[+0.36, +0.79]} & \textbf{+0.46}{\scriptsize\,[+0.25, +0.67]} & \textbf{+0.23}{\scriptsize\,[+0.09, +0.40]} \\
Gemma 4 & \textbf{+0.10}{\scriptsize\,[+0.02, +0.20]} & \textbf{+0.32}{\scriptsize\,[+0.19, +0.47]} & \textbf{+0.19}{\scriptsize\,[+0.10, +0.29]} & \textbf{+0.42}{\scriptsize\,[+0.24, +0.61]} & \textbf{+0.07}{\scriptsize\,[+0.02, +0.13]} \\
Gemini 3.8 & \textbf{+0.28}{\scriptsize\,[+0.10, +0.47]} & \textbf{+0.20}{\scriptsize\,[+0.07, +0.34]} & \textbf{+0.40}{\scriptsize\,[+0.20, +0.62]} & \textbf{+0.48}{\scriptsize\,[+0.33, +0.64]} & +0.16{\scriptsize\,[$-$0.03, +0.34]} \\
\midrule
Claude Opus 5.5$^\dagger$ & +0.26{\scriptsize\,[0.00, +0.53]} & $-$0.003{\scriptsize\,[$-$0.02, +0.01]} & \textbf{+0.39}{\scriptsize\,[+0.19, +0.60]} & +0.26{\scriptsize\,[0.00, +0.52]} & +0.22{\scriptsize\,[0.00, +0.44]} \\
GPT-6.1 Sol$^\dagger$ & +0.03{\scriptsize\,[$-$0.12, +0.24]} & +0.05{\scriptsize\,[0.00, +0.11]} & \textbf{+0.38}{\scriptsize\,[+0.20, +0.58]} & +0.08{\scriptsize\,[$-$0.03, +0.26]} & +0.06{\scriptsize\,[$-$0.09, +0.26]} \\
 
\bottomrule
\end{tabular}}
\caption{The controlled test of \cref{ass:mono}, \cref{ass:info} and \cref{thm:safe}. Each cell is the mean difference in hit recall over targets interior to both calls, with a 95\% bootstrap interval over the twelve images; bold: interval above zero. Every pair of calls satisfies the definitions of \cref{sec:defs} exactly. $^\dagger$: the two newest models.}
\label{tab:assume}
\end{table*}

\paragraph{Unmatched points.} The safe tiling, keeping every point in the overlapping cores, also raised F1 for all \XfoFOneN{} models, by \XfoFOneMin{} to \XfoFOneMax, significantly for \XfoFOneSup. Points that match no target do not behave monotonically: crops at equal scale returned more such points than the whole image for \XfoCropFalse{} of \XfoModels{} models. What decides F1 is the rate of \cref{prop:f1}: \XfoFewer{} models added fewer unmatched points than hits, and Gemma 4 added \XfoGemmaRate{} per additional hit, within its affordable rate $\rho_{\mathrm{w}} = \XfoGemmaRho$, because its whole-image recall was \XfoGemmaRecall\%.

\subsection{Every ordering implied in the other experiments}
\label{sec:assume-orderings}

\begin{figure*}[p]
\centering
\includegraphics[width=\textwidth]{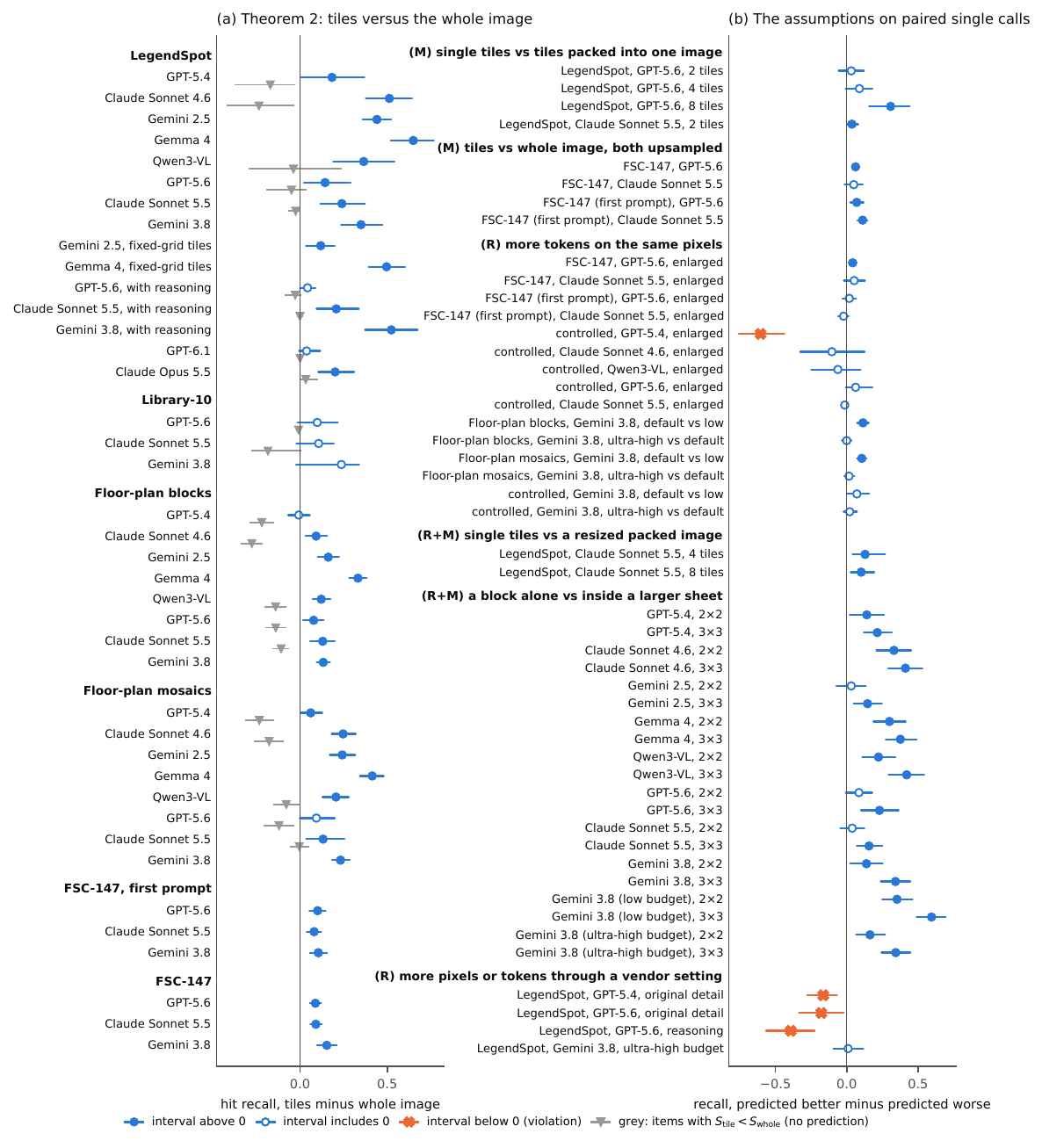}
\caption{Every pair of single calls that \cref{ass:mono} orders in the data of our other experiments, grouped by the relation it tests. Each row is a paired difference, predicted better minus predicted worse, with a 95\% bootstrap interval, clustered by source image where items share one. (a) \cref{thm:safe}: hit recall over targets interior to the whole-image call and to a tile, on items where every tile has at least the whole image's $S$; grey, for comparison, the items where the 44 px recipe zooms out, where the theory makes no prediction (\cref{sec:zoomout}). (b) Crops at equal scale (M); more tokens on the same pixels (R); crops at a larger scale, such as a block alone against the same block inside a larger mosaic sent at a lower scale (R and M); and the same image with more pixels or tokens through a vendor setting (R, with new information where native pixels are sent). Library-10: the drawings with library references.}
\label{fig:theory}
\end{figure*}

We searched the data of all other experiments for every pair of single calls that the assumptions order, on the same targets and with the same prompt, and report all \XtcN{} that we found (\cref{fig:theory}). A pair counts as a violation if its 95\% interval lies below zero. \XtcSup{} pairs lie significantly in the predicted direction, \XtcCons{} are consistent with it, and \XtcViol{} violate it.

\paragraph{Splitting (\cref{fig:theory}a).} We tested \cref{thm:safe} as stated: on hit recall, over the targets interior to the whole-image call and to a tile, on the items where every tile has at least the whole image's $S$. Our tiles' cores abut without the overlap of the covering hypothesis, so this tests the theorem's first claim; the interior targets are \XtcInteriorPct\% of all targets on those items. In all \XtcThmN{} model and data settings, which include tiles and whole images with reasoning, fixed-grid tiles, the newest models and Library-10, the difference is at least \XtcThmMin, and in \XtcThmSup{} the interval lies above zero.

\paragraph{The assumptions on paired calls (\cref{fig:theory}b).} The natural-image crop pairs are scored on all targets rather than on those interior to both calls, which counts targets cut by tile borders against the tiles and so makes these tests conservative. Packing more tiles into one image never raised recall. More tokens on the same pixels (Gemini 3.8's three budgets, and enlarged images for the resolution-scaled models) agree with (R) on standard input. No floor-plan block was detected better inside a larger mosaic, sent at a lower scale, than alone, for any of the ten model settings. The zoom-agent comparisons (\cref{sec:agents}) are analogues of \cref{prop:zoom} at the level of a policy rather than of single calls, and are not counted.

\subsection{The four violations}
\label{sec:violations}

\begin{figure}[t]
\centering
\includegraphics[width=\columnwidth]{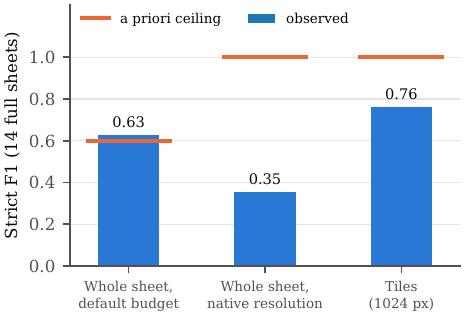}
\caption{GPT-5.6 without reasoning on the 14 full sheets: F1 of the whole sheet at a long side of 1{,}600 px, the same sheet at native resolution through the high-resolution mode, and tiles; bars, observed; orange lines, the ceiling computed from the geometry. More pixels of the same content lower F1.}
\label{fig:capacity}
\end{figure}

All \XtcViol{} violations involve OpenAI models given input outside our standard input (\cref{sec:models}). GPT-5.4's recall changed by \XtcViolD{} \XtcViolCI{} on controlled images that we enlarged to 2{,}048 px, which its vendor then shrinks, although the measured geometry gives them more tokens per target than the originals. Sending a LegendSpot drawing at native resolution through \texttt{detail=original} instead of at 1{,}600 px changed recall by \XtcVendGptFiveFour{} \XtcVendCIGptFiveFour{} for GPT-5.4, \XtcVendGptFiveSix{} \XtcVendCIGptFiveSix{} for GPT-5.6, and \XtcVendGptFiveSixR{} \XtcVendCIGptFiveSixR{} for GPT-5.6 with reasoning, whose F1 on full sheets fell to \XnFNATRgptFiveSixSheets{} from \XnFAOneRgptFiveSixSheets{} (\cref{fig:capacity} shows the same contrast without reasoning: \CcapnativeMinusautosheets{} \CcapnativeMinusautosheetsCI). These comparisons use the \XtcVendNGptFiveFour{} drawings on which the mode adds pixels; on small views the two calls send the same image. None of these losses is a coordinate error: no rescaling or shift of the answers recovers them, even fitted to the labels, and in the high-resolution mode GPT-5.6 often reports about the right number of symbols at wrong positions. Gemini 3.8's ultra-high budget on the same drawings changed recall by \XtcVendGemThreeEight{} \XtcVendCIGemThreeEight. Each violating pair shows the same content with more pixels, so the losses are failures of monotonicity on these input paths, not effects of content; a caller who uses such a path should first test (R) on a few pairs of calls, as \cref{sec:safe} recommends.

\section{Splits that zoom out}
\label{sec:zoomout}

The 44 px recipe is defined in pixels: it rescales the image so that the reference spans 44 px and then tiles. For a resolution-scaled model with 32 px tokens its tiles give $S$ of about 1.4 whatever the image (44/32 = 1.375 across the reference's shorter side), so it zooms out whenever the whole-image call already gives a larger $S$, which happens on small images with large targets. Which items these are depends on the image's size and on the model, so it is hard to see without computing $S$ (\cref{sec:computeS}). \Cref{tab:zoomout} lists every model and data cell in which the recipe zooms out on some items, the case that \cref{thm:exact}(b) and \cref{prop:tight}(a) leave unprotected. The point estimate is negative in \XtcOutNegPoint{} of the \XtcOutN{} cells and the interval lies below zero in \XtcOutNeg; the two non-negative cells are the newest models on LegendSpot, which find nearly every target in both calls. The data show that this protection is needed. \Cref{fig:map}(b) shows the same contrast for all models and data together, binned by $S$ and targets: every cell there is negative.

\begin{table}[t]
\centering
\footnotesize
\setlength{\tabcolsep}{3pt}
\resizebox{\columnwidth}{!}{\begin{tabular}{llccc}
\toprule
Data & Model & Items & Recall change & 95\% interval \\
\midrule
LegendSpot & GPT-5.4 & 9 & \textbf{$-$0.17} & [$-$0.37, $-$0.03] \\
LegendSpot & Claude Sonnet 4.6 & 7 & \textbf{$-$0.24} & [$-$0.42, $-$0.04] \\
LegendSpot & Qwen3-VL & 9 & $-$0.04 & [$-$0.29, +0.24] \\
LegendSpot & GPT-5.6 & 9 & $-$0.05 & [$-$0.19, +0.03] \\
LegendSpot & Claude Sonnet 5.5 & 9 & \textbf{$-$0.03} & [$-$0.06, $-$0.002] \\
LegendSpot & GPT-5.6, with reasoning & 9 & $-$0.03 & [$-$0.08, 0.00] \\
LegendSpot & Claude Sonnet 5.5, with reasoning & 9 & $-$0.001 & [$-$0.01, +0.01] \\
LegendSpot & GPT-6.1 & 9 & 0.00 & [0.00, 0.00] \\
LegendSpot & Claude Opus 5.5 & 9 & +0.03 & [0.00, +0.10] \\
Library-10 & GPT-5.6 & 6 & $-$0.01 & [$-$0.01, 0.00] \\
Library-10 & Claude Sonnet 5.5 & 6 & $-$0.18 & [$-$0.28, 0.00] \\
Floor-plan blocks & GPT-5.4 & 171 & \textbf{$-$0.22} & [$-$0.29, $-$0.16] \\
Floor-plan blocks & Claude Sonnet 4.6 & 171 & \textbf{$-$0.28} & [$-$0.33, $-$0.22] \\
Floor-plan blocks & Qwen3-VL & 164 & \textbf{$-$0.14} & [$-$0.20, $-$0.08] \\
Floor-plan blocks & GPT-5.6 & 171 & \textbf{$-$0.14} & [$-$0.20, $-$0.08] \\
Floor-plan blocks & Claude Sonnet 5.5 & 171 & \textbf{$-$0.11} & [$-$0.16, $-$0.07] \\
Floor-plan mosaics & GPT-5.4 & 119 & \textbf{$-$0.23} & [$-$0.31, $-$0.15] \\
Floor-plan mosaics & Claude Sonnet 4.6 & 82 & \textbf{$-$0.18} & [$-$0.26, $-$0.10] \\
Floor-plan mosaics & Qwen3-VL & 115 & \textbf{$-$0.08} & [$-$0.15, $-$0.01] \\
Floor-plan mosaics & GPT-5.6 & 119 & \textbf{$-$0.12} & [$-$0.21, $-$0.04] \\
Floor-plan mosaics & Claude Sonnet 5.5 & 119 & $-$0.01 & [$-$0.05, +0.05] \\
 
\bottomrule
\end{tabular}}
\caption{Hit recall, tiles of the 44 px recipe minus the whole image, over targets interior to both, on the items where some tile has a smaller $S$ than the whole image. Bold: interval below zero. Items: drawing and class items (mosaics: items at all three areas).}
\label{tab:zoomout}
\end{table}

\paragraph{Where the recipe zooms out and where it does not.} On floor-plan blocks, whose $1{,}000^2$ rasters give a large $S$ in the whole-image call, the recipe lowers $S$ by more than 5\% on \XfSrFracDowngptFiveSix\% of the items for the resolution-scaled models. On the same data, GPT-5.6 gains \XtcIngptFiveSix{} \XtcCIIngptFiveSix{} on the items where no tile zooms out and changes by \XtcOutgptFiveSix{} \XtcCIOutgptFiveSix{} on the others; Claude Sonnet 5.5 gains \XtcInsonnetFiveFive{} \XtcCIInsonnetFiveFive{} and changes by \XtcOutsonnetFiveFive{} \XtcCIOutsonnetFiveFive. The rule never zooms out, because it chooses $S = \max(S^\ast, \Sw)$.

\subsection{Common recipes under the condition}
\label{sec:recipes}

The condition of \cref{thm:exact} can be checked for any splitting recipe from geometry alone, before any call. \Cref{tab:recipes} applies it to two families on our images: slicing at native resolution with a fixed overlap ratio, as SAHI does~\cite{akyon2022sahi}, and splitting into a regular grid of cells without overlap, the first steps of quadrant splitting and quadrant search~\cite{wang2025dc2,wu2024vstar,shen2024zoomeye}. Each view is sent without further resizing by the caller (SAHI) or as standard input (cells), and its $S$ follows from the interface (\cref{sec:computeS}).

Neither family zooms out on any image: a native-resolution slice is never shown at a smaller scale than the whole image, and a cell of the image is never shown at a smaller scale than the image itself. The condition fails in the other two ways. Native-resolution slices keep each target at its native pixel size, so with 32 px tokens a target narrower than about 48 px stays below $S^\ast = 1.5$ however the image is split: this holds for 38\% of the LegendSpot drawings. Splits without overlap leave every target that a cut crosses in no view's interior, 9 to 19\% of the targets for a $2 \times 2$ split and 29 to 49\% for a $4 \times 4$ split, and \cref{thm:exact}(b) shows that nothing then protects them; a fixed overlap ratio leaves up to 15\% of the targets of the floor-plan mosaics cut with 512 px slices, where it is smaller than the targets plus their margins. On fixed-grid models, small native-resolution slices also cost many times the rule's tokens, since every slice pays the full grid. The rule avoids all three by choosing the scale from $S$ and the overlap from the target.

\begin{table*}[t]
\centering
\small
\setlength{\tabcolsep}{4pt}
\begin{adjustbox}{max width=\linewidth}\begin{tabular}{llccccc}
\toprule
Data & Recipe & Zoom-out: & Reaches $S^\ast$? & Protection criterion: & Tokens / rule & Tokens / rule \\
 & & images & images below 1.5 & targets unprotected & (GPT-6.1) & (Gemini) \\
\midrule
LegendSpot & SAHI, 512 px, 20\% & 0\% & 38\% & 6.6\% & 0.57 & 5.62 \\
 & SAHI, 1{,}024 px, 20\% & 0\% & 38\% & 0.4\% & 0.63 & 1.67 \\
 & $2 \times 2$, no overlap & 0\% & 52\% & 13.1\% & 0.25 & 0.67 \\
 & $4 \times 4$, no overlap & 0\% & 40\% & 32.6\% & 0.41 & 2.67 \\
\midrule
Floor-plan mosaics & SAHI, 512 px, 20\% & 0\% & 10\% & 15.3\% & 1.01 & 16.00 \\
 & SAHI, 1{,}024 px, 20\% & 0\% & 10\% & 4.9\% & 1.01 & 4.00 \\
 & $2 \times 2$, no overlap & 0\% & 10\% & 9.4\% & 0.55 & 1.00 \\
 & $4 \times 4$, no overlap & 0\% & 10\% & 28.7\% & 0.57 & 4.00 \\
\midrule
Floor-plan blocks & SAHI, 512 px, 20\% & 0\% & 30\% & 5.7\% & 0.28 & 9.00 \\
 & SAHI, 1{,}024 px, 20\% & 0\% & 30\% & 0.0\% & 0.13 & 1.00 \\
 & $2 \times 2$, no overlap & 0\% & 30\% & 18.9\% & 0.13 & 4.00 \\
 & $4 \times 4$, no overlap & 0\% & 30\% & 48.5\% & 0.13 & 16.00 \\
 
\bottomrule
\end{tabular}\end{adjustbox}
\caption{Common splitting recipes checked against the condition of \cref{thm:exact} from geometry alone (no model call). SAHI: square slices at native resolution with an overlap of 20\% of the slice side. Cells: a regular grid without overlap, each cell sent as standard input. Zoom-out: maximum over the four interfaces of \cref{tab:thumb-budget} (GPT-6.1, Claude Opus 5.5, Qwen3-VL, Gemini); below $S^\ast$: GPT-6.1's interface; protection criterion: share of the targets interior to the whole image that lie in no qualified view's interior (\cref{thm:exact}), the same for every interface; tokens: median over images of the recipe's image tokens over the rule's.}
\label{tab:recipes}
\end{table*}

\section{Robustness of \cref{fig:map}}
\label{sec:map}

\Cref{fig:map} pairs every whole-image call with every split of the same item by the same model in our data: all 40 LegendSpot drawings and all floor-plan items (including those used to write the prompt and set thresholds) with the tiles of the 44 px recipe and its fixed-grid counterpart, and the natural images with same-scale $2 \times 2$ tiles against the upsampled whole image. The models are the eight sweep models of \cref{tab:experiments}, with Gemini 3.8 at its three budgets. A pair is a split that never zooms out if every tile has at least the whole-image call's $S$, and otherwise one that zooms out; the grouping uses geometry only. Each pair is binned by the whole-image call's $S$ and number of targets, and a cell with fewer than 10 pairs is left blank. Intervals resample images, and source drawings for the mosaics, so that pairs of one image across models and areas move together. \Cref{tab:mapci} gives every cell with its interval; \cref{fig:map-models} and \cref{fig:map-data} split the map by model and by data set, and \cref{fig:map-cxcy} recomputes it reading only points written with the requested \texttt{cx} and \texttt{cy} keys (\cref{sec:coords}). For $S$ between 1.5 and 3, the gain of \cref{fig:map}(a) pooled over targets is +0.13 for Gemini 3.8, +0.14 for Gemma 4, +0.13 for Qwen3-VL, +0.09 for Claude Sonnet 4.6 and +0.08 for Claude Sonnet 5.5, against $-0.02$ for GPT-5.4, +0.03 for GPT-5.6 and +0.03 for Gemini 2.5. In every version the conclusions of \cref{sec:safe} hold: cells with $S < 1.5$ gain most, and the cells of splits that zoom out are negative. Of the 91 cells of \cref{fig:map-models}, none is significantly below zero; the lowest are Claude Sonnet 5.5's at $S \ge 3$ ($-0.14$ and $-0.12$).

\begin{table}[t]
\centering
\footnotesize
\setlength{\tabcolsep}{2.5pt}
\begin{adjustbox}{max width=\linewidth}\begin{tabular}{lcccrc}
\toprule
Split & Targets & $S$ & Pairs & Mean & 95\% interval \\
\midrule
never zoom out & $\le$3 & $<$0.5 & 50 & +0.26 & [+0.15, +0.38] \\
never zoom out & $\le$3 & [0.5, 1) & 299 & +0.20 & [+0.15, +0.25] \\
never zoom out & $\le$3 & [1, 1.5) & 351 & +0.12 & [+0.08, +0.17] \\
never zoom out & $\le$3 & [1.5, 3) & 523 & +0.07 & [+0.04, +0.11] \\
never zoom out & $\le$3 & $\ge$3 & 109 & $-$0.01 & [$-$0.07, +0.05] \\
never zoom out & 4 to 15 & $<$0.5 & 74 & +0.37 & [+0.29, +0.45] \\
never zoom out & 4 to 15 & [0.5, 1) & 364 & +0.31 & [+0.27, +0.35] \\
never zoom out & 4 to 15 & [1, 1.5) & 438 & +0.10 & [+0.07, +0.14] \\
never zoom out & 4 to 15 & [1.5, 3) & 467 & +0.09 & [+0.06, +0.11] \\
never zoom out & 4 to 15 & $\ge$3 & 123 & +0.04 & [$-$0.00, +0.09] \\
never zoom out & 16 to 31 & $<$0.5 & 65 & +0.46 & [+0.35, +0.56] \\
never zoom out & 16 to 31 & [0.5, 1) & 111 & +0.22 & [+0.17, +0.28] \\
never zoom out & 16 to 31 & [1, 1.5) & 92 & +0.16 & [+0.09, +0.25] \\
never zoom out & 16 to 31 & [1.5, 3) & 49 & +0.08 & [$-$0.01, +0.17] \\
never zoom out & 16 to 31 & $\ge$3 & 62 & $-$0.04 & [$-$0.09, +0.00] \\
never zoom out & 32 to 95 & $<$0.5 & 17 & +0.45 & [+0.26, +0.66] \\
never zoom out & 32 to 95 & [0.5, 1) & 40 & +0.43 & [+0.25, +0.59] \\
never zoom out & 32 to 95 & [1, 1.5) & 26 & +0.23 & [+0.09, +0.35] \\
never zoom out & 32 to 95 & [1.5, 3) & 46 & +0.17 & [+0.07, +0.27] \\
never zoom out & 32 to 95 & $\ge$3 & 90 & +0.10 & [+0.03, +0.17] \\
never zoom out & $\ge$96 & [0.5, 1) & 33 & +0.14 & [+0.06, +0.23] \\
never zoom out & $\ge$96 & [1, 1.5) & 42 & +0.19 & [+0.10, +0.29] \\
never zoom out & $\ge$96 & [1.5, 3) & 50 & +0.16 & [+0.08, +0.26] \\
never zoom out & $\ge$96 & $\ge$3 & 26 & $-$0.02 & [$-$0.14, +0.10] \\
zoom out & $\le$3 & [1.5, 3) & 288 & $-$0.14 & [$-$0.18, $-$0.10] \\
zoom out & $\le$3 & $\ge$3 & 387 & $-$0.22 & [$-$0.27, $-$0.16] \\
zoom out & 4 to 15 & [1, 1.5) & 27 & $-$0.01 & [$-$0.06, +0.04] \\
zoom out & 4 to 15 & [1.5, 3) & 417 & $-$0.08 & [$-$0.12, $-$0.05] \\
zoom out & 4 to 15 & $\ge$3 & 314 & $-$0.21 & [$-$0.26, $-$0.17] \\
zoom out & 16 to 31 & [1.5, 3) & 49 & $-$0.05 & [$-$0.11, +0.02] \\
 
\bottomrule
\end{tabular}\end{adjustbox}
\caption{Every cell of \cref{fig:map} with its 95\% bootstrap interval (images resampled; mosaics by source drawing).}
\label{tab:mapci}
\end{table}

\begin{figure*}[t]
\centering
\includegraphics[width=\textwidth]{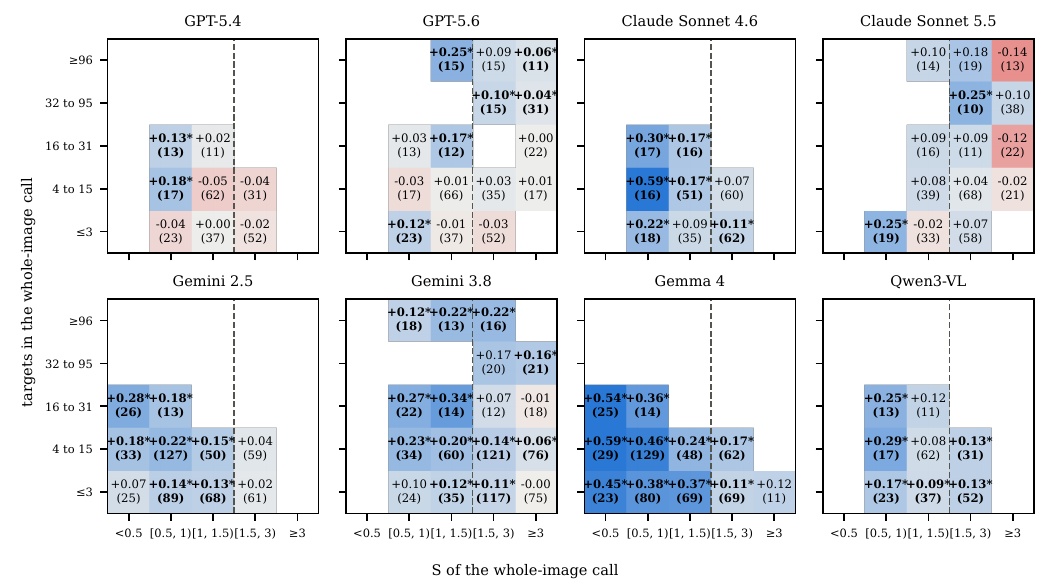}
\caption{\Cref{fig:map}(a), splits that never zoom out, for each of the eight models with tiles in our data. Colour scale as in \cref{fig:map}; cells with fewer than 10 pairs are blank; ${}^\ast$ and bold: 95\% interval excludes 0.}
\label{fig:map-models}
\end{figure*}

\begin{figure*}[t]
\centering
\includegraphics[width=\textwidth]{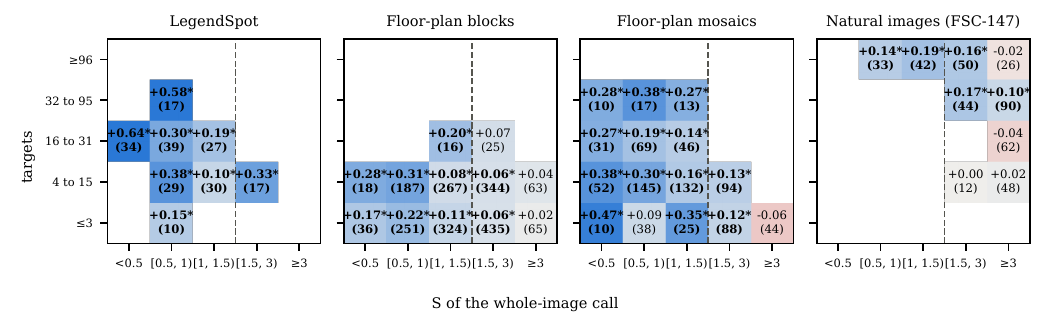}
\caption{\Cref{fig:map}(a) for each data set; cells and markers as in \cref{fig:map-models}.}
\label{fig:map-data}
\end{figure*}

\begin{figure*}[t]
\centering
\includegraphics[width=0.9\textwidth]{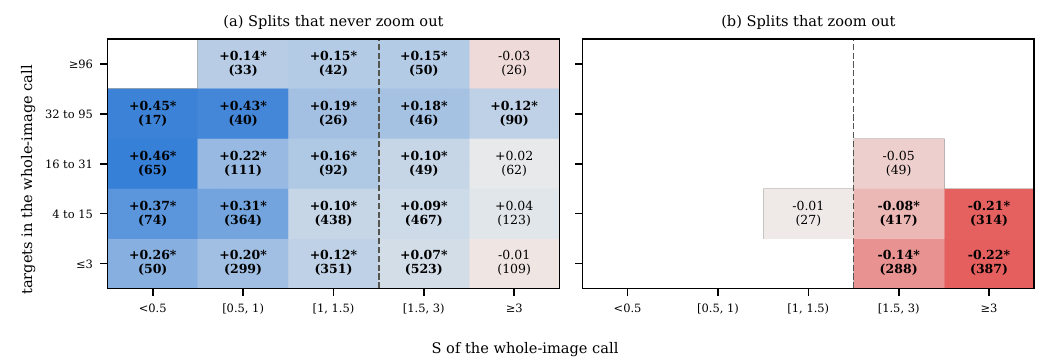}
\caption{\Cref{fig:map} reading only points written with \texttt{cx} and \texttt{cy} keys, which changes some Gemini 3.8 answers.}
\label{fig:map-cxcy}
\end{figure*}

\section{The rule and other methods: full results}
\label{sec:rule-full}

\subsection{The rule}

\Cref{tab:rule-full} gives \cref{tab:main} with intervals. Recall and precision use the one-to-one matching of \cref{sec:stats}; points in the overlaps are not merged, so a target reported twice adds an unmatched point. Precision is averaged over the images where both calls returned at least one point; the count of such images is listed, and Qwen3-VL returned no point on many images in its whole-image call. The cost ratio to $\Cmin$ is largest where the image is only one or two views wide, as \cref{cor:safe} predicts; \CtOneViewGpt{} of GPT-6.1's 60 floor-plan mosaics need a single view.

\begin{table*}[t]
\centering
\footnotesize
\setlength{\tabcolsep}{3pt}
\resizebox{\textwidth}{!}{\begin{tabular}{llcccccccc}
\toprule
Data & Model & Images & Recall, rule $-$ whole & Precision, rule $-$ whole & F1, rule $-$ whole & Images with points & Recall, rule $-$ 44 px & Tokens, rule / 44 px & Cost $/\Cmin$ \\
\midrule
LegendSpot & GPT-6.1 & 34 & +0.03 [$-$0.004, +0.09] & +0.04 [$-$0.004, +0.11] & +0.04 [$-$0.002, +0.10] & 34 & +0.003 [$-$0.003, +0.01] & 1.06 & 3.54 \\
LegendSpot & Claude Opus 5.5 & 34 & +0.10 [$-$0.004, +0.20] & \textbf{+0.10 [+0.02, +0.19]} & +0.09 [$-$0.01, +0.19] & 34 & $-$0.05 [$-$0.12, +0.01] & 0.61 & 2.57 \\
LegendSpot & Gemini 3.8 & 34 & \textbf{+0.28 [+0.15, +0.41]} & \textbf{+0.16 [+0.04, +0.28]} & \textbf{+0.22 [+0.10, +0.33]} & 33 & $-$0.04 [$-$0.09, +0.01] & 0.20 & 1.60 \\
LegendSpot & Gemini 4 & 34 & \textbf{+0.13 [+0.05, +0.23]} & +0.03 [$-$0.03, +0.12] & \textbf{+0.09 [+0.02, +0.19]} & 34 & +0.002 [$-$0.01, +0.02] & 0.20 & 1.60 \\
LegendSpot & Qwen3-VL & 34 & \textbf{+0.19 [+0.06, +0.33]} & \textbf{+0.18 [+0.03, +0.35]} & \textbf{+0.18 [+0.07, +0.31]} & 17 & $-$0.07 [$-$0.18, +0.05] & 0.69 & 1.99 \\
\midrule
Floor plans & GPT-6.1 & 60 & $-$0.02 [$-$0.07, +0.03] & +0.04 [$-$0.01, +0.09] & +0.01 [$-$0.04, +0.05] & 60 & n/a & n/a & 3.34 \\
Floor plans & Claude Opus 5.5 & 60 & +0.05 [$-$0.004, +0.11] & $-$0.02 [$-$0.08, +0.04] & +0.02 [$-$0.03, +0.07] & 56 & n/a & n/a & 2.01 \\
Floor plans & Gemini 3.8 & 60 & \textbf{+0.18 [+0.11, +0.25]} & \textbf{+0.21 [+0.10, +0.31]} & \textbf{+0.18 [+0.10, +0.26]} & 55 & \textbf{$-$0.13 [$-$0.19, $-$0.07]} & 0.11 & 1.57 \\
Floor plans & Gemini 4 & 60 & +0.07 [$-$0.004, +0.14] & \textbf{+0.12 [+0.04, +0.22]} & \textbf{+0.07 [+0.005, +0.14]} & 58 & $-$0.01 [$-$0.04, +0.03] & 0.11 & 1.57 \\
Floor plans & Qwen3-VL & 60 & \textbf{+0.07 [+0.02, +0.12]} & \textbf{+0.18 [+0.02, +0.35]} & \textbf{+0.09 [+0.04, +0.15]} & 13 & \textbf{$-$0.11 [$-$0.16, $-$0.06]} & 0.55 & 1.95 \\
 
\bottomrule
\end{tabular}}
\caption{The rule ($S^\ast = 1.5$, steps 1 and 2 of \cref{alg:rule}) against the whole-image call and the 44 px recipe, with 95\% paired bootstrap intervals over images; bold: interval excludes 0. n/a: recipe not run. Cost: median over images of the rule's tokens over $\Cmin$.}
\label{tab:rule-full}
\end{table*}

\subsection{The cost of the optional step 3}
\label{sec:step3}

Step 3 of \cref{alg:rule} splits a view further at the same scale into $q \times q$ sub-views, again overlapping by one target. Its cost follows from the geometry alone (\cref{prop:geometry}c). Let $r_q$ be the image tokens of the sub-views over those of the view. For a resolution-scaled model, splitting a full square view of $\sqrt N \times \sqrt N$ patches at $S = 1.5$ adds only the overlaps and the rounding to whole patches, so $r_q$ is close to 1; a fixed-grid model pays $g^2$ per view, so $r_q = q^2$, while each smaller view also gains a factor $q$ in $S$. Step 3 has two forms (\cref{sec:step3-proof}). In the \emph{appending} form, which a subdivision triggered by a view's content or answers must use, the view is sent first and the sub-views are sent as an additional batch whose points are added to the view's, at $1 + r_q$ times its tokens in all. In the \emph{replacing} form, for a subdivision fixed in advance, the sub-views are sent instead of the view, at $r_q$ times its tokens. \Cref{tab:step3} gives both. Each extra call also repeats the prompt and the reference image, a few hundred tokens per call for our prompts.

\begin{table}[t]
\centering
\small
\setlength{\tabcolsep}{3pt}
\resizebox{\columnwidth}{!}{\begin{tabular}{lccccccc}
\toprule
 & & \multicolumn{2}{c}{$q = 2$} & \multicolumn{2}{c}{$q = 3$} & \multicolumn{2}{c}{$q = 4$} \\
Model & $N$ per view & $r_q$ & $1 + r_q$ & $r_q$ & $1 + r_q$ & $r_q$ & $1 + r_q$ \\
\midrule
GPT-6.1 & 8{,}192 & 1.04 & 2.04 & 1.07 & 2.07 & 1.14 & 2.14 \\
Claude Opus 5.5 & 4{,}784 & 1.09 & 2.09 & 1.09 & 2.09 & 1.21 & 2.21 \\
Qwen3-VL & 2{,}500 & 1.08 & 2.08 & 1.17 & 2.17 & 1.25 & 2.25 \\
Gemini 3.8, Gemini 4 (fixed grid) & about 1{,}090 & 4.00 & 5.00 & 9.00 & 10.00 & 16.00 & 17.00 \\
 
\bottomrule
\end{tabular}}
\caption{Image tokens of step 3, splitting one full view into $q \times q$ sub-views at the same $S = 1.5$ with one target of overlap. $r_q$: the batch of sub-views over the parent view (replacing form); $1 + r_q$: the view and its sub-views together (appending form). Prompts and reference images excluded.}
\label{tab:step3}
\end{table}

\subsection{Other data and methods}
\label{sec:rule-more}

\paragraph{Library-10.} On the ten drawings whose references come from a symbol library we applied the configurations computed from \cref{prop:geometry} without change (\cref{tab:redp10}; intervals clustered by drawing). The drawings are small (about 1{,}500 by 1{,}700 px), so geometry predicts that a whole-image call already resolves the targets for GPT-5.6 (median $S = \XrdSgptFiveSix$) and Claude Sonnet 5.5 (\XrdSsonnetFiveFive) but less so for Gemini 3.8 (\XrdSgemThreeEight, against 2.5 in its tiles). Tiles change F1 by \XrdDDgptFiveSixAOnegptFiveSix{} \XrdDClDgptFiveSixAOnegptFiveSix{} for GPT-5.6, \XrdDDsonnetFiveFiveAOnesonnetFiveFive{} \XrdDClDsonnetFiveFiveAOnesonnetFiveFive{} for Claude Sonnet 5.5 and \XrdDDgemThreeEightAOnegemThreeEight{} \XrdDClDgemThreeEightAOnegemThreeEight{} for Gemini 3.8: the largest change is Gemini 3.8's, whose whole-image $S$ is lowest. Template matching with the threshold from LegendSpot collapses to \XrdFTM{} (\XrdFTMoracle{} even with the best threshold chosen on Library-10 itself), because library symbols differ in scale and stroke from the drawn ones, and every VLM configuration scores above it.

\begin{table}[t]
\centering
\small
\setlength{\tabcolsep}{3pt}
\begin{adjustbox}{max width=\linewidth}\begin{tabular}{lccc}
\toprule
 & GPT-5.6 & Sonnet 5.5 & Gemini 3.8 \\
\midrule
Whole image & \XrdFAOnegptFiveSix & \XrdFAOnesonnetFiveFive & \XrdFAOnegemThreeEight \\
Tiles & \XrdFDgptFiveSix & \XrdFDsonnetFiveFive & \textbf{\XrdFDgemThreeEight} \\
Propose + verify & \XrdFHgptFiveSix & \XrdFHsonnetFiveFive & \XrdFHgemThreeEight \\
Zoom agent & \textbf{\XrdFAGgptFiveSix} & \XrdFAGsonnetFiveFive & n/a \\
\midrule
TM, LegendSpot threshold & \multicolumn{3}{c}{\XrdFTM} \\
TM, oracle threshold & \multicolumn{3}{c}{\XrdFTMoracle} \\
CountGD / GeCo2 & \multicolumn{3}{c}{\XlbRdcountgd{} / \XlbRdgecoTwo} \\
\bottomrule
\end{tabular}\end{adjustbox}
\caption{Library-10: 23 drawing and class items with 225 instances, references from a shared symbol library. F1; nothing was tuned on these drawings except the oracle threshold, chosen on Library-10 as an upper bound for template matching (TM). Point estimates.}
\label{tab:redp10}
\end{table}

\paragraph{Floor-plan blocks.} With the LegendSpot configurations and prompts unchanged (\cref{tab:fpc300}), tiles raised F1 for the two models whose whole-image $S$ is about one (Gemini 2.5 \XfDmAgemini{} \XfDmACIgemini, Gemma 4 \XfDmAgemmaFour{} \XfDmACIgemmaFour) and for Gemini 3.8 (\XfDmAgemThreeEight{} \XfDmACIgemThreeEight), whose tiles raise $S$, and lowered it for GPT-5.4 (\XfDmAgpt{} \XfDmACIgpt), Claude Sonnet 4.6 (\XfDmAclaude{} \XfDmACIclaude), GPT-5.6 (\XfDmAgptFiveSix{} \XfDmACIgptFiveSix) and Claude Sonnet 5.5 (\XfDmAsonnetFiveFive{} \XfDmACIsonnetFiveFive), for which the 44 px recipe zooms out on many of these $1{,}000^2$ blocks (\cref{sec:zoomout}). The zoom agents, which choose their own enlargement, score highest for both models whose agents ran on these blocks.

\begin{table}[t]
\centering
\small
\setlength{\tabcolsep}{4pt}
\begin{adjustbox}{max width=\linewidth}\begin{tabular}{lccc}
\toprule
Model & Whole & Tiles & Agent \\
\midrule
GPT-5.4 & \XfBenchAgpt & \XfBenchDgpt & \\
Claude Sonnet 4.6 & \XfBenchAclaude & \XfBenchDclaude & \\
Gemini 2.5 Flash & \XfBenchAgemini & \XfBenchDgemini & \\
Gemma 4 31B & \XfBenchAgemmaFour & \XfBenchDgemmaFour & \\
Qwen3-VL & \XfBenchAqwenThreevl & \XfBenchDqwenThreevl & \\
GPT-5.6 & \XfBenchAgptFiveSix & \XfBenchDgptFiveSix & \XfBenchAGgptFiveSix \\
Claude Sonnet 5.5 & \XfBenchAsonnetFiveFive & \XfBenchDsonnetFiveFive & \XfBenchAGsonnetFiveFive \\
Gemini 3.8 Flash & \XfBenchAgemThreeEight & \XfBenchDgemThreeEight & \\
\midrule
Template matching & \multicolumn{3}{c}{\XfBenchAtm} \\
CountGD / GeCo2 & \multicolumn{3}{c}{\XlbTwofpccountgdimgOne{} / \XlbTwofpcgecoTwoimgOne} \\
\bottomrule
\end{tabular}\end{adjustbox}
\caption{Floor-plan blocks (339 test items, 21 classes), F1 with the block-edge band ignored. Thresholds of template matching and the counters are set on the 30 remaining items. Point estimates.}
\label{tab:fpc300}
\end{table}

\paragraph{Natural images.} \Cref{tab:fsc} reports FSC-147 under both prompts. Upsampling (more tokens, no new information) helps the resolution-scaled models on small objects and adds little on large ones, where F1 is already about 0.9; same-scale $2 \times 2$ tiles, which keep $S$ and divide the content per call by about four for these models, add a further \XfscODgptFiveSixsmallTU{} \XfscODCIgptFiveSixsmallTU{} for GPT-5.6 and \XfscODsonnetFiveFivesmallTU{} \XfscODCIsonnetFiveFivesmallTU{} for Claude Sonnet 5.5 on small, crowded images. Some answers on these images are truncated at the output cap; without them the tile gains are \CfscNoTruncGptFiveSix{} (\CfscNoTruncNGptFiveSix{} images) and \CfscNoTruncSonnetFiveFive{} (\CfscNoTruncNSonnetFiveFive). Gemini 3.8, whose fixed grid gives an upsampled image no more tokens, gains from tiles on small and medium objects; on its grid tiles both raise $S$ and lower the content per call. Claude Sonnet 5.5 omits the closing bracket of the list in \XfscParsesonnetFiveFive\% of its answers, which our reader recovers.

\begin{table*}[t]
\centering
\small
\begin{adjustbox}{max width=\linewidth}\begin{tabular}{llcccccc}
\toprule
Model & Stratum & $S$ (W) & F1 W & F1 U & F1 T & U $-$ W & T $-$ U \\
\midrule
\multicolumn{8}{l}{\emph{neutral prompt}} \\
GPT-5.6 & small & 0.40 & 0.51 & 0.59 & 0.73 & +0.09 [+0.002, +0.17] & +0.14 [+0.06, +0.22] \\
 & medium & 1.07 & 0.83 & 0.89 & 0.91 & +0.06 [+0.03, +0.10] & +0.02 [+0.01, +0.04] \\
 & large & 2.41 & 0.91 & 0.92 & 0.93 & +0.02 [+0.001, +0.04] & +0.01 [$-$0.01, +0.03] \\
Claude Sonnet 5.5 & small & 0.45 & 0.46 & 0.53 & 0.73 & +0.08 [+0.03, +0.12] & +0.20 [+0.12, +0.27] \\
 & medium & 1.22 & 0.87 & 0.86 & 0.90 & $-$0.02 [$-$0.05, +0.02] & +0.04 [+0.01, +0.07] \\
 & large & 2.75 & 0.89 & 0.91 & 0.92 & +0.02 [$-$0.003, +0.05] & +0.02 [$-$0.01, +0.05] \\
Gemini 3.8 & small & 0.98 & 0.33 & 0.41 & 0.57 & +0.08 [+0.002, +0.17] & +0.16 [+0.09, +0.24] \\
 & medium & 2.46 & 0.64 & 0.62 & 0.79 & $-$0.02 [$-$0.11, +0.07] & +0.17 [+0.08, +0.27] \\
 & large & 5.71 & 0.83 & 0.86 & 0.91 & +0.03 [$-$0.04, +0.09] & +0.06 [$-$0.001, +0.12] \\
\multicolumn{8}{l}{\emph{drawing prompt}} \\
GPT-5.6 & small & 0.40 & 0.25 & 0.19 & 0.30 & $-$0.06 [$-$0.12, $-$0.01] & +0.11 [+0.05, +0.18] \\
 & medium & 1.07 & 0.30 & 0.35 & 0.43 & +0.05 [$-$0.05, +0.16] & +0.08 [$-$0.01, +0.18] \\
 & large & 2.41 & 0.40 & 0.44 & 0.55 & +0.04 [$-$0.03, +0.11] & +0.11 [+0.04, +0.18] \\
Claude Sonnet 5.5 & small & 0.45 & 0.36 & 0.45 & 0.59 & +0.09 [+0.03, +0.14] & +0.14 [+0.09, +0.20] \\
 & medium & 1.22 & 0.56 & 0.51 & 0.63 & $-$0.04 [$-$0.09, +0.002] & +0.12 [+0.06, +0.17] \\
 & large & 2.75 & 0.60 & 0.53 & 0.63 & $-$0.07 [$-$0.13, $-$0.01] & +0.10 [+0.03, +0.16] \\
Gemini 3.8 & small & 0.98 & 0.22 & 0.21 & 0.31 & $-$0.01 [$-$0.05, +0.01] & +0.10 [+0.04, +0.17] \\
 & medium & 2.46 & 0.21 & 0.20 & 0.31 & $-$0.01 [$-$0.08, +0.04] & +0.12 [+0.05, +0.19] \\
 & large & 5.71 & 0.20 & 0.18 & 0.36 & $-$0.02 [$-$0.10, +0.04] & +0.18 [+0.11, +0.26] \\
 
\bottomrule
\end{tabular}\end{adjustbox}
\caption{FSC-147, 150 images in three strata of exemplar size. W: whole image; U: upsampled to a long side of 1{,}536 px; T: $2 \times 2$ tiles at the scale of U. $S$: median tokens per object side in W. Intervals: 95\% bootstrap over images. Neutral prompt: the prompt of \cref{sec:prompts}; drawing prompt: the first, drawing-specific wording.}
\label{tab:fsc}
\end{table*}

\paragraph{Template matching and trained counters.} Template matching reaches \FTMdevAll{} F1 on the 34 LegendSpot drawings. Propose and verify, in which the VLM only judges candidates found by template matching at many tokens per candidate, places every model between \FHgptFiveFourAll{} and \FHqwenThreevlAll, none significantly different from template matching (differences \CHMinusTMdevgptFiveFour{} to \CHMinusTMdevqwenThreevl); by \cref{prop:hybrid} the remaining loss lies in the verifier, since the candidates contain a true instance for a ceiling of \HybBound. CountGD and GeCo2, class-agnostic exemplar counters, reach \XctrFsccountgd{} and \XctrFscgecoTwo{} F1 on our FSC-147 images; on LegendSpot, with the legend crop as exemplar on the same tiles as the VLMs, they reach \XlbFcountgd{} and \XlbFgecoTwo{} (OWLv2 \XlbFowlvTwo), and at most \XlbOraclegecoTwo{} even with the best threshold chosen per drawing; on Library-10 they reach \XlbRdcountgd{} and \XlbRdgecoTwo. Instances in drawings are thin line figures among look-alike symbols, unlike the textured objects these counters were trained on.

\paragraph{Calls and tokens.} \Cref{tab:cost} lists the calls and input tokens per LegendSpot drawing for the pipelines of the earlier experiments.

\begin{table}[t]
\centering
\footnotesize
\resizebox{\columnwidth}{!}{\begin{tabular}{llcc}
\toprule
Model & Condition & calls & input tokens \\
\midrule
GPT-5.4 & whole drawing & 1.0 & 2.1k \\
GPT-5.4 & tiles & 13.4 & 16.8k \\
Claude Sonnet 4.6 & tiles & 13.4 & 18.5k \\
Gemma 4 & tiles & 55.6 & 52.8k \\
Gemini 2.5 Flash & tiles & 145.8 & 134.7k \\
GPT-5.4 & propose + verify & 57.9 & 54.1k \\
Claude Sonnet 5.5 & whole drawing & 1.0 & 2.4k \\
GPT-5.6 (original) & whole, native & 1.0 & 11.1k \\
Claude Sonnet 5.5 & tiles & 13.4 & 20.6k \\
GPT-5.6 & tiles & 13.4 & 16.8k \\
Gemini 3.8 Flash & tiles & 37.5 & 96.9k \\
Claude Sonnet 5.5 & propose + verify & 57.9 & 66.0k \\
 
\bottomrule
\end{tabular}}
\caption{Calls and input tokens per LegendSpot test drawing (one run). Template matching runs locally in seconds.}
\label{tab:cost}
\end{table}

\section{Zoom agents}
\label{sec:agents}

\begin{figure*}[t]
\centering
\includegraphics[width=\textwidth]{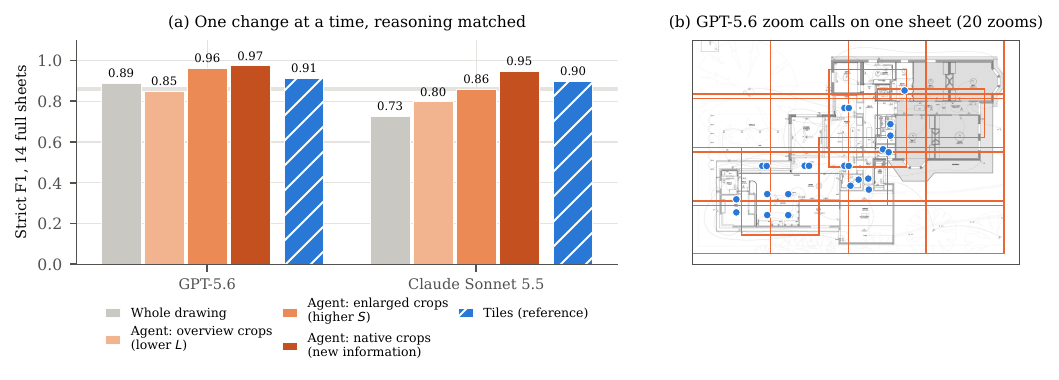}
\caption{(a) F1 on the 14 full sheets with the same reasoning setting throughout (GPT-5.6: medium effort; Claude Sonnet 5.5: adaptive thinking). From left to right each bar changes one step: one whole-image call; an agent whose tool returns the requested region cut from the overview at the overview's scale (lower $L$ only); the same crops enlarged to the size of a native crop (also higher $S$, no new information); native crops (also new information); and, for reference, tiles. Agent bars average two runs, the overview-crop bars one. (b) The 20 regions GPT-5.6 chose on one sheet (orange), over the overview it received; blue dots are the annotated instances.}
\label{fig:agent}
\end{figure*}

\paragraph{Setup.} The agent receives the legend crop, an overview of the image (long side 1{,}600 px) and a \texttt{zoom} tool that returns any requested rectangle cut from the native image (long side at most 1{,}024 px, at most twice native), for up to 20 calls, with reasoning where the model has it (\cref{tab:models}). Each zoom raises $S$ by the enlargement factor and limits $L$ to the rectangle. Every image the agent keeps in its context is counted in its tokens, as in \cref{prop:frontier}.

\paragraph{Who splits, on the same model.} \Cref{tab:whosplits} puts side by side, for each model with an agent and with the same reasoning setting throughout, the whole-image call, the caller's split (tiles of the 44 px recipe, which zoom out on only one of these sheets) and the agent that splits the image itself. A model that zooms well can split more accurately than the caller's fixed tiles (GPT-5.6, Claude Sonnet 5.5); a model whose coordinates are unreliable across zoom levels fails when it splits itself (Gemini 3.8, Qwen3-VL). Only a caller's split that meets the condition, such as the rule, carries the protection of \cref{thm:exact} and a cost known before the call.

\begin{table}[t]
\centering
\small
\setlength{\tabcolsep}{3pt}
\resizebox{\columnwidth}{!}{\begin{tabular}{lccccc}
\toprule
 & \multicolumn{3}{c}{F1, 14 full sheets} & \multicolumn{2}{c}{Input tokens per drawing} \\
\cmidrule(lr){2-4}\cmidrule(lr){5-6}
Model & Whole & Caller's tiles & Agent & Tiles & Agent \\
\midrule
GPT-5.6 & \XnFAOneRgptFiveSixSheets & \XnFDRgptFiveSixSheets & \XnFAGgptFiveSixSheets & \XtileTokIngptFiveSix & \XagTokIngptFiveSix \\
Claude Sonnet 5.5 & \XnFAOneRsonnetFiveFiveSheets & \XnFDRsonnetFiveFiveSheets & \XnFAGsonnetFiveFiveSheets & \XtileTokInsonnetFiveFive & \XagTokInsonnetFiveFive \\
Gemini 3.8 & \XnFAOneRgemThreeEightSheets & \XnFDRgemThreeEightSheets & \XnFAGgemThreeEightSheets & \XtileTokIngemThreeEight & \XagTokIngemThreeEight \\
\bottomrule
\end{tabular}}
\caption{Who splits, on the same model and with the same reasoning setting (GPT-5.6: medium effort; Claude Sonnet 5.5: adaptive thinking; Gemini 3.8: dynamic thinking). Caller's tiles: the 44 px recipe and its fixed-grid counterpart, which zoom out on one of the 14 sheets for GPT-5.6 and Claude Sonnet 5.5 and on none for Gemini 3.8. Tokens: mean input tokens per LegendSpot test drawing. Qwen3-VL, which has no reasoning mode, reaches \XqD{} F1 with tiles and \XqAgF{} as an agent over all 34 drawings.}
\label{tab:whosplits}
\end{table}

\paragraph{Which step does the work.} \Cref{fig:agent}a changes one step at a time; the three steps are ordered by \cref{prop:zoom} at the level of single calls. Lowering $L$ alone gave no significant gain for either model (GPT-5.6 \XnHThreegptFiveSixOVCAOneR{} \XnHThreegptFiveSixOVCAOneRCI; Claude Sonnet 5.5 \XnHThreesonnetFiveFiveOVCAOneR{} \XnHThreesonnetFiveFiveOVCAOneRCI). Enlarging the same crops, which raises $S$ without new information, gave GPT-5.6 \XnHThreegptFiveSixOVZOVC{} \XnHThreegptFiveSixOVZOVCCI, and native pixels added \XnHThreegptFiveSixAGOVZ{} \XnHThreegptFiveSixAGOVZCI: for GPT-5.6 the active ingredient is $S$. For Claude Sonnet 5.5 no single step is decisive (enlargement \XnHThreesonnetFiveFiveOVZOVC{} \XnHThreesonnetFiveFiveOVZOVCCI), and only native pixels give a gain whose interval excludes zero (\XnHThreesonnetFiveFiveAGOVZ{} \XnHThreesonnetFiveFiveAGOVZCI): for it the overview lacks usable pixels, not only tokens~\cite{xu2020usable}. Over all 34 drawings the agents against their tiles differ by \XnAGDRallgptFiveSix{} \XnAGDRallgptFiveSixCI{} (GPT-5.6) and \XnAGDRallsonnetFiveFive{} \XnAGDRallsonnetFiveFiveCI{} (Claude Sonnet 5.5); on Library-10 GPT-5.6's agent again scores highest (\cref{tab:redp10}).

\paragraph{How the agents search.} GPT-5.6 used \XagZoomsgptFiveSix{} zooms per drawing on average; on several sheets it first laid a regular grid of equal crops over the whole image and then inspected candidate regions (\cref{fig:agent}b), and on full sheets it viewed \XagCovSheetsgptFiveSix{} of the area at $S \ge 1$. Claude Sonnet 5.5 used \XagZoomssonnetFiveFive{} zooms on average, none on 9 of the 20 small drawings, and concentrated them on hard sheets. The overview itself, at a median $S$ of \XovSmedSheets{} on full sheets, already covers much of each sheet. Per drawing, GPT-5.6's agent used \XagTokIngptFiveSix{} input and \XagTokOutgptFiveSix{} output tokens, against \XtileTokIngptFiveSix{} and \XtileTokOutgptFiveSix{} for its tiles; Claude's agent used \XagTokInsonnetFiveFive{} and \XagTokOutsonnetFiveFive, about half of its tiles (\XtileTokInsonnetFiveFive{} and \XtileTokOutsonnetFiveFive). On floor-plan mosaics, where the first block's content is fixed while the mosaic grows, GPT-5.6's agent stays flat in first-block F1, as tiles do and whole-image calls do not (\cref{sec:firstblock}).

\paragraph{Two agents fail at the coordinates.} The Gemini 3.8 and Qwen3-VL agents count about as well as their tiles but place points wrongly. Gemini 3.8 used the most zooms (\XagZoomsgemThreeEight, \XagTokIngemThreeEight{} input tokens per drawing) and scored \XagFgemThreeEightAll{} F1 (against its tiles \XagMinusDgemThreeEight{} \XagMinusDCIgemThreeEight), yet gave the exact count on \XagExactCountgemThreeEight\% of drawings. Qwen3-VL scored \XqAgF, against \XqD{} for its tiles (\XqAgMinusD{} \XqAgMinusDCI), with a median relative count error of \XqCountErrAg, no worse than with tiles (\XqCountErrD). A check on the stored trajectories, with no new call, locates the failure. The count is within 10\% of the truth on \CagCntGem\% of the drawings for the Gemini 3.8 agent (\CtileCntGem\% for its tiles) and on \CagCntQwen\% for the Qwen3-VL agent (\CtileCntQwen\%). The points are not: \CagLocGem\% of the Gemini 3.8 agent's points lie within $\tau$ of an instance and \CagLocGemTwo\% within $2\tau$ (median distance to the nearest instance \CagDistGem$\,\tau$), against \CtileLocGem\% for its tiles; for Qwen3-VL, \CagLocQwen\% and \CagLocQwenTwo\% (median \CagDistQwen$\,\tau$), against \CtileLocQwen\%. Reading the agents' answers in the other coordinate convention makes them worse, so the convention is not the cause. Agents and tiles both see crops of native pixels; what differs is that the agent itself converts positions seen in a zoomed crop back to the overview, a conversion that no tiled or whole-image call requires (\cref{sec:coords}).

\section{Code, data and reproducibility}
\label{sec:repro}

The repository at \url{https://github.com/shijunzhe/vlm-small-objects} contains:
the LegendSpot images, legend crops and centre annotations;
the scripts that build the floor-plan blocks and mosaics from the public FloorPlanCAD test split and select the FSC-147 images;
the generators of all controlled images, with their seeds;
the code that sends every call, with the prompts of \cref{sec:prompts} and the per-model geometry of \cref{tab:calib};
the readers of \cref{sec:coords} and the scoring, statistics and figure scripts;
the model answers behind every table and figure;
and the planning tool for safe splitting, which takes an image size, a target box and an interface and returns $\Sw$, the views of \cref{alg:rule}, their tokens and $\Cmin$.
One command rescores every experiment from the stored answers and regenerates all numbers of this paper, which enter the text as generated macros, and all tables and figures, without any model call. The Library-10 drawings are used for evaluation only and are not included.

\paragraph{Societal impact.} The rule and the theory make VLM calls on large images more accurate and their cost predictable, which helps wherever many small objects must be found, from documents and screens to drawings and maps. Two kinds of harm deserve mention. First, counts and positions from VLM calls can enter consequential estimates, such as quantities taken off a drawing for a bid, and a missed or invented object then carries a cost; the guarantee is relative, stating that a protected split detects in expectation at least what the whole-image call detects, and it does not certify that every object is found. We therefore report precision with recall, give the paired-call check of \cref{sec:safe} for new input paths, and recommend human review of counts that feed decisions. Second, better small-object search in high-resolution imagery can serve surveillance as well as inspection; the techniques here add no new sensing capability, since they only decide how an image already held is sent to a model, and their use should follow the rules that govern the imagery itself.

\section{Reference data for the rule of thumb}
\label{sec:thumb}

\Cref{tab:thumb} gives approximate values for one image content. This section lists the data behind them, keeping three things apart: the vendor's limits, the per-view budget the rule uses, and what we measured.

\paragraph{Per-view budgets.} \Cref{tab:thumb-budget} gives the budgets. At $S = 1.5$ a square view of $\sqrt N \times \sqrt N$ patches holds about $\sqrt N/1.5$ targets per side. For a resolution-scaled model the target must also span about 1.5 tokens in the sent image, about 48 px for GPT and Qwen (32 px tokens) and 42 px for Claude (28 px tokens); a caller can enlarge the image to reach this, which raises $S$ but cannot restore detail that the native pixels lack. A near-square view then has a side of at most about $m\sqrt N/S$ native pixels with $S = \max(1.5, \Sw)$, plus the rounding to whole patches and the overlap. For fixed-grid models only the geometry matters: a view of longer side $\ell$ gives $S = mg/\ell$.

\begin{table*}[t]
\centering
\footnotesize
\begin{tabular}{p{2.7cm}p{4.6cm}p{3.4cm}p{1.8cm}p{1.4cm}}
\toprule
Model & Vendor documentation (October 2026) & Our billing & Budget used & Per side \\
\midrule
GPT-6.1, GPT-5.6 & 32 px patches; the default setting keeps the original size; requests above 30{,}000 patches are refused & $2896^2$: 8{,}281 patches (GPT-5.6), $3072^2$: 9{,}216 (GPT-6.1), not resized & 8{,}192 patches & about 60 \\
GPT-5.4 & 2{,}500 patches, long side $\le$2{,}048 px; original: 10{,}000 & 2{,}500 patches (3{,}019 input tokens) at $2048^2$ & 2{,}500 & about 33 \\
Claude Opus 5.5, Sonnet 5.5 & 28 px; long side $\le$2{,}576 px, $\le$4{,}784 tokens & $1932^2$: 4{,}761 & 4{,}784 & about 46 \\
Claude Sonnet 4.6 & long side $\le$1{,}568 px, $\le$1{,}568 tokens & 1{,}542 at $1536^2$ and above & 1{,}568 & about 26 \\
Gemini 3.8, Gemini 4 & Gemini 3.x: 1{,}120 per image (default), 2{,}240 (ultra-high); Gemini 4 not listed separately & about 1{,}090 at every size & about 1{,}090 & about 22 \\
Gemini 2.5, Gemma 4 & 258 and 280 per image & 258 and 280 & 258, 280 & about 11 \\
Qwen3-VL & 32 px per token; cap set by the provider & about 2{,}500 at 1{,}600 px & 2{,}500 & about 33 \\
\bottomrule
\end{tabular}
\caption{Per-view budgets. Per side: targets along the side of a full square view at $S = 1.5$, $\sqrt N/1.5$.}
\label{tab:thumb-budget}
\end{table*}

\paragraph{Load at $S = 1.5$.} To see how many targets one view can carry, we rendered controlled images at $S = 1.5$ for every model, each with $n$ targets of one legend symbol and $n$ distractors of three look-alike symbols on a lattice of pitch 1.6 target sides, about half the lattice cells occupied, with grey linework. The layout of image $i$ at load $n$ is fixed in target units and shared by all models; light strokes are kept as grey (each pixel takes the darker of the canvas and the reduced symbol). Views are capped by the per-view budgets of \cref{tab:thumb-budget}: 2{,}896 px for GPT ($91 \times 91 = 8{,}281$ patches, which the vendor does not resize), 1{,}932 px for Claude, 1{,}600 px for Qwen3-VL and the fixed 1{,}024 px canvas for Gemini. Each load has 12 images, the calls use the reasoning settings of \cref{tab:models} with an output cap of 16{,}384 tokens, and answers are read in each model's convention with the lenient recovery of \cref{sec:coords} (\cref{tab:thumb-load}; 384 calls, all answers in the released package). We give both the pooled recall, the quantity the guarantees protect, and the number of images on which recall and precision both reach 0.9. The largest load tested is the number of targets the layout fits in the budget: 324 for GPT, 196 for Claude and 42 for Gemini.

\begin{table*}[t]
\centering
\small
\setlength{\tabcolsep}{4pt}
\begin{adjustbox}{max width=\linewidth}\begin{tabular}{lccccccl}
\toprule
Model & 8 & 16 & 32 & 64 & 128 & 256 & Largest load tested \\
\midrule
GPT-6.1 & 1.00; 11 & 1.00; 11 & 1.00; 11 & 1.00; 11 & 1.00; 9 & 1.00; 11 & 324: 1.00; 11 \\
Claude Opus 5.5 & 0.97; 7 & 1.00; 6 & 1.00; 10 & 1.00; 10 & 1.00; 11 &  & 196: 0.83; 9 (2 empty) \\
Gemini 4 & 1.00; 11 & 1.00; 11 & 1.00; 11 &  &  &  & 42: 0.99; 11 \\
GPT-5.6 & 0.96; 6 & 0.99; 11 & 1.00; 9 & 0.97; 10 & 0.91; 8 & 0.34; 0 & 324: 0.29; 0 (6 empty) \\
Gemini 3.8 & 0.96; 7 & 0.93; 8 & 0.77; 7 &  &  &  & 42: 0.67; 1 \\
Qwen3-VL & 0.89; 4 & 0.88; 4 & 0.77; 3 & 0.39; 0 &  &  &  \\
 
\bottomrule
\end{tabular}\end{adjustbox}
\caption{Controlled images at $S = 1.5$, by number of targets per view: pooled recall; images (of 12) with recall and precision both at least 0.9; empty: answers with no point.}
\label{tab:thumb-load}
\end{table*}

GPT-6.1 and Gemini 4 keep pooled recall at \lsv{gpt61}{324}{R} and at least \lsv{gem4}{42}{R} at every load tested, and Claude Opus 5.5 at least \lsv{opus55}{8}{R} up to 128 targets. At 196 targets Claude Opus 5.5 finds every target on 10 of the 12 images and returns an empty list on the other \lsv{opus55}{196}{empty}, so its pooled recall is \lsv{opus55}{196}{R}: at this load it fails all or nothing. Per image, Claude Opus 5.5 at small loads and GPT-6.1 at some loads miss the joint criterion by reporting extra points. GPT-5.6 misses a few targets with 8, holds from 16 to 128 targets (\lsv{gpt56}{128}{R} at 128), and falls at 256 and 324 targets (\lsv{gpt56}{256}{R} and \lsv{gpt56}{324}{R}), where \lsv{gpt56}{256}{empty} and \lsv{gpt56}{324}{empty} of its 12 answers are empty. Gemini 3.8 drops below 0.9 at 32 targets by missing targets (\lsv{gem38}{32}{R}); at 42 it lists 27 to 29 points on 11 of the 12 images. Qwen3-VL, run here through a third provider (\cref{tab:models}), recalls \lsv{qwen3vl}{8}{R} and \lsv{qwen3vl}{16}{R} of 8 and 16 targets, but its answers often have malformed lists and extra points, so few images meet the joint criterion; it falls at 32 (\lsv{qwen3vl}{32}{R}) and 64 targets (\lsv{qwen3vl}{64}{R}). With the vendor's default reasoning, in a load stage run with the protocol of \cref{sec:newest}, GPT-5.6 recalls 0.92 of about 256 targets at $S \approx 1.27$.

\paragraph{Time.} Only the runs of \cref{sec:newest} and that load stage recorded wall-clock time (12 concurrent requests, vendor default reasoning, queueing included). The median time of one call grows with the number of targets it must report, because it is set mainly by the length of the answer: for 8 targets from about 3\,s (Claude Opus 5.5) to 18\,s (Qwen3-VL), and for 64 to 128 targets from about 13\,s to 57\,s. Views of a split are independent, so with enough concurrency a split takes about as long as its slowest view.

\section{Further comparisons on LegendSpot}
\label{sec:more}

\begin{table}[t]
\centering
\small
\setlength{\tabcolsep}{3pt}
\resizebox{\columnwidth}{!}{\begin{tabular}{lccccc}
\toprule
 & Whole & Vendor & \multicolumn{2}{c}{Tiles} & Prop.\,+ \\
\cmidrule(lr){4-5}
Model & image & hi-res & 44 px & counterpart & verify \\
\midrule
GPT-5.4 & \FATwogptFiveFourAll & \FBgptFiveFourAll & \multicolumn{2}{c}{\FDgptFiveFourAll} & \FHgptFiveFourAll \\
Claude Sonnet 4.6 & \FATwosonnetFourSixAll & n/a & \multicolumn{2}{c}{\FDsonnetFourSixAll} & \FHsonnetFourSixAll \\
Gemini 2.5 Flash & \FATwogeminiTwoFiveAll & \FBgeminiTwoFiveAll & \FCgeminiTwoFiveAll & \FDgeminiTwoFiveAll & \FHgeminiTwoFiveAll \\
Gemma 4 31B & \FATwogemmaFourAll & n/a & \FCgemmaFourAll & \FDgemmaFourAll & \FHgemmaFourAll \\
Qwen3-VL & \FATwoqwenThreevlAll & n/a & \multicolumn{2}{c}{\FDqwenThreevlAll} & \FHqwenThreevlAll \\
\midrule
Template matching & \multicolumn{5}{c}{\FTMdevAll{} (threshold from the prompt-writing sheets), \FTMlooAll{} (leave-one-out)} \\
\bottomrule
\end{tabular}}
\caption{Older-generation models on the 34 LegendSpot test drawings, F1. Whole image read in each model's published convention. Tiles: the 44 px recipe for every model, and its fixed-grid counterpart, whose field of view is set by \cref{prop:geometry}(c) (for resolution-scaled models the two coincide). Point estimates.}
\label{tab:round1}
\end{table}

\paragraph{Tiles, the whole image and vendor high-resolution modes.} With tiles, every older-generation model except GPT-5.4 beats its whole-image call (\cref{tab:round1}): by \CDMinusATwosonnetFourSix{} \CDMinusATwosonnetFourSixCI{} F1 for Claude Sonnet 4.6, \CDMinusATwogeminiTwoFive{} \CDMinusATwogeminiTwoFiveCI{} for Gemini 2.5, \CDMinusATwogemmaFour{} \CDMinusATwogemmaFourCI{} for Gemma 4 and \CDMinusATwoqwenThreevl{} \CDMinusATwoqwenThreevlCI{} for Qwen3-VL; GPT-5.4 changes by \CDMinusATwogptFiveFour{} \CDMinusATwogptFiveFourCI. Gemma 4, fourth with the whole image, is first with tiles. Vendors' high-resolution modes do not replace splitting: GPT-5.4 with \texttt{detail=original} sees each symbol at about 1.3 tokens and scores \CBMinusDgptFiveFour{} \CBMinusDgptFiveFourCI{} F1 against its tiles, and Gemini 2.5's high setting \CBMinusDgeminiTwoFive{} \CBMinusDgeminiTwoFiveCI{} against its fixed-grid counterpart. On the 14 full sheets, tiles beat the whole image for Claude Sonnet 5.5 (\CDMinusATwosheetssonnetFiveFive{} \CDMinusATwosheetssonnetFiveFiveCI) and Gemini 3.8 (\CDMinusATwosheetsgemThreeEight{} \CDMinusATwosheetsgemThreeEightCI); GPT-5.6 changes by \CDMinusATwosheetsgptFiveSix{} \CDMinusATwosheetsgptFiveSixCI.

\paragraph{Reasoning.} Reasoning helps where recognition, not resolution, limits. Over the 34 drawings, GPT-5.6's whole-image F1 rises from \XplainFgptFiveSixrAll{} to \XrsnFgptFiveSixrAll{} with high-effort reasoning, and on full sheets its tiles gain \XnDRDgptFiveSix{} \XnDRDgptFiveSixCI{} at medium effort. Claude Sonnet 5.5 runs with adaptive thinking in every call, so its reasoning arm adds only an explicit high effort and a larger output cap (tiles: \XnDRDsonnetFiveFive{} \XnDRDsonnetFiveFiveCI). Gemini 3.8's thinking changes full-sheet F1 with tiles by \XnDRDgemThreeEight{} \XnDRDgemThreeEightCI, and with the whole image lowers it from \XplainFgemThreeEighttAll{} to \XrsnFgemThreeEighttAll, partly because its thinking switches between pixel and 0 to 1000 coordinates (\cref{sec:coords}). On full sheets, the whole image with reasoning still scores below tiles without reasoning for Claude Sonnet 5.5 (\XrsnMinusDsonnetFiveFivetSheets{} \XrsnMinusDCIsonnetFiveFivetSheets, whole minus tiles) and Gemini 3.8 (\XrsnMinusDgemThreeEighttSheets{} \XrsnMinusDCIgemThreeEighttSheets); for GPT-5.6 with high-effort reasoning the difference is \XrsnMinusDgptFiveSixrSheets{} \XrsnMinusDCIgptFiveSixrSheets.

\paragraph{Earlier names.} Versions 1 and 2 of this paper called LegendSpot REDP-X40, Library-10 REDP-10, the floor-plan blocks FPC-300 and the floor-plan mosaics FPC-sheets; the code keeps these internal names, and its README maps them. Version 3 reads every answer with one reader, which keeps points written with \texttt{x} and \texttt{y} keys and points written as lists (\cref{sec:coords}); a few numbers therefore differ slightly from version 2.

\end{document}